\DeclareMathOperator{\image}{im}
\DeclareMathOperator{\nondesc}{nondesc}
\DeclareMathOperator{\sur}{sur}
\DeclareMathOperator{\sgn}{sign}
\def\ci{\perp\!\!\!\perp}
\def \sdot {\dot \sigma}
\title{\bf \LARGE Score-based Causal Representation Learning with Interventions
}
\author{{\bf Burak Var\i c\i} \hfill \texttt{varicb@rpi.edu}\\
\textit{Rensselaer Polytechnic Institute} \vspace{.1 in}\\
{\bf Emre  Acart\"urk} \hfill \texttt{acarte@rpi.edu}\\
\textit{Rensselaer Polytechnic Institute} \vspace{.1 in}\\
{\bf Karthikeyan  Shanmugam }\hfill{\texttt{karthikeyanvs@google.com}}\\
\textit{Google} \vspace{.1 in}\\
{\bf Abhishek Kumar }\hfill{\texttt{abhishk@google.com}}\\
\textit{Google} \vspace{.1 in}\\
{\bf Ali Tajer} \hfill{\texttt{tajer@ecse.rpi.edu}}\\
\textit{Rensselaer Polytechnic Institute}
}
\date{}
\begin{document}
\maketitle

\begin{abstract}
This paper studies the causal representation learning problem when the latent causal variables are observed indirectly through an unknown linear transformation. The objectives are: (i) recovering the unknown linear transformation (up to scaling) and (ii) determining the directed acyclic graph (DAG) underlying the latent variables. Sufficient conditions for DAG recovery are established, and it is shown that a large class of non-linear models in the latent space (e.g., causal mechanisms parameterized by two-layer neural networks) satisfy these conditions. These sufficient conditions ensure that the effect of an intervention %is not lost in linear combinations of the 
can be detected correctly from changes in the score. Capitalizing on this property, recovering a valid transformation is facilitated by the following key property: any valid transformation renders latent variables' score function to necessarily have the minimal variations across different interventional environments. This property is leveraged for perfect recovery of the latent DAG structure using only \emph{soft} interventions. For the special case of stochastic \emph{hard} interventions, with an additional hypothesis testing step, one can also uniquely recover the linear transformation up to scaling and a valid causal ordering. %% stochastic instead of randomized, per a reviewer's suggestion, I think it makes sense.
\end{abstract}

\section{Introduction}\label{sec:introduction}

\emph{Disentangled representation learning} aims to infer a latent representation such that each dimension of the latent representation corresponds to a meaningful or controllable component of the observed data or the process generating the data. Despite the discrepancies in objective and approach, the common assumption in the existing literature is that the disentangled latent variables are statistically independent. Such independence is enforced to facilitate inferring representations~\citep{higginslearning,kumar2018,chen2018isolating,kim2018disentangling}. However, \emph{identifiability} (of the inverse of data generating transformation) is known to be impossible without additional supervision or non-identically distributed data \citep{hyvarinen1999nonlinear,locatello2019challenging}. Approaches for mitigating this issue include using posterior regularization \citep{kumar2020implicit}, requiring the knowledge of auxiliary variables \citep{shu2019weakly,pmlr-v108-khemakhem20a,ahuja2022towards}, temporal information \citep{hyvarinen2017nonlinear}, and the knowledge of the mechanisms that govern the system \citep{ahuja2021properties}. These methods do not consider possible causal relationships over latent factors. In a wide range of domains, however, the latent variables are related causally.

To accommodate causal structures among the latent variables, \emph{causal representation learning} aims to learn a disentangled representation leveraging the modularity property of causal models \citep[p.22]{pearl2009causality}. The latent variables are generated by multiple causal mechanisms, i.e., conditional probability kernels. Each mechanism governs the structure among a different group of latent variables that act as an independent module composed by a causal ordering \citep{scholkopf2022causality}. This forces the latent variables to vary in a correlated way in any given environment. However, upon an intervention, \emph{only} the manipulated causal mechanisms relating to that intervention change. These changes, which are generally sparse, suggest that interventions are instrumental to learning disentangled latent representations that are commensurate with the modularity property. % invariance principle. 
This has been pointed out as an important open problem in the literature \citep{scholkopf2021toward,scholkopf2022statistical}. Practical applications of latent causal representations have been explored recently in domains such as robotics \citep{lee2021causal,weichwald2022learning} and gene expression modeling \citep{lotfollahi2021compositional}. For instance, in \citep{weichwald2022learning}, for a robot arm control competition, winning entries were observed first to learn a model that is consistent with the observed effects of various interventions and then perform optimal control that uses these models.

In causal representation learning, the latent variables are not observed directly. Rather, they undergo an unknown transformation, and such an unknown transformation generates the observed data. Hence, any causal representation learning approach involves two objectives: (i) \textbf{identifiable representation learning}, which is responsible for identifying the inverse of the unknown transformation to recover the latent variables from observations; and (ii)~\textbf{causal structure learning}, which determines the directed acyclic graph (DAG) underlying the cause-effect relationships among the recovered latent variables. As in disentangled learning, the \emph{identifiability} of causal representation learning requires additional information, and it becomes viable when performed in conjunction with \emph{interventions} \citep{scholkopf2021toward}. Specifically, the interventions can create proper variations in the observed data, which in turn, facilitate identifiability. As a form of strong supervision, several studies used contrastive pairs of -- before and after an intervention -- the same observations \citep{ahuja2022weakly,locatello2020weakly,kulelgen2021,Yang_2021_CVPR}. A representative example is the images of an object from different views \citep{brehmer2022weakly}. From a causal point of view, we can consider these observations as counterfactual pairs. Another approach is using temporal sequences to identify causal variables under interventions \citep{lachapelle2022disentanglement,yao2022,lippe2022icitris}.

In this paper, we focus on the identifiability of latent causal representations. Our approach requires a weaker form of supervision according to which the data under different interventional distributions in the latent space is observable, while the counterfactual instances are not. Our central contribution is establishing that latent causal representations are identifiable under linear transformations and single-node stochastic hard interventions that cover all variables in the latent space.

We start by observing that score functions (i.e., the gradients of log-likelihood functions) generally have sparse changes in their coordinates upon a single-node intervention. We require that the effect of an intervention is not lost for any linear combination of varying coordinates. We show that this condition holds for sufficiently non-linear additive causal models such as two-layer neural networks (NNs) with full-rank first layer. Under this assumption, the pivotal technical idea is that the score functions of transformed interventional distributions have minimal variations across distributions only for the (approximately) correct inverting transformation. Our main contributions are summarized as follows:
\begin{itemize}
    \item For linear transformations and under an exhaustive set of single-node stochastic hard interventions, we identify the transformation up to coordinate-wise scaling and a permutation consistent with a valid causal order. This recovers the latent DAG as well. 

    \item For single-node soft interventions on every node, we establish the identifiability of a transformation up to a non-trivial equivalence class. Surprisingly, this equivalence class is sufficient to recover the latent DAG up to a permutation consistent with the topological order. The recovered latent variables are Markov with respect to a DAG that is isomorphic to the true latent DAG. It is noteworthy that the existing studies on causal representation learning require hard interventions to achieve this result.
\end{itemize}
Our settings and results have key differences from those of the recent studies on causal representation learning \citep{liu2022identifying,seigal2022linear,ahuja2022interventional}. 
\begin{itemize}
    \item {\bf Setting:} We provide identifiability results for non-linear causal relationships in the latent space. This is in contrast to the setting of \citep{liu2022identifying}, which investigates the intrinsic indeterminacies in the latent space where the causal relationships are linear. Similarly, \citep{seigal2022linear} focuses only on linear Gaussian latent models with linear mapping to observations. 
    \item {\bf Latent DAG recovery:} We perfectly recover latent DAG via soft interventions by leveraging the non-linearity in latent space. Secondly, we can accommodate \emph{stochastic} hard and soft interventions with unbounded support. In contrast, \citep{seigal2022linear} shows that soft interventions are insufficient for recovering latent DAGs with linear causal relationships. In a different direction, \citep{ahuja2022interventional} considers a more general polynomial mapping from the latent to the observational space and provides results for stochastic interventions when the latent random variables are bounded.
\end{itemize}
% Note: the papers below are not for CRL. For arxiv version, it suffices to discuss them in related work.
%Finally, we note that our method is the first to show the effectiveness of  score functions in causal representation learning. The studies in \citep{rolland2022score,montagna2022scalable} use score-matching to recover causal DAGs. In another approach, \citep{perry2022causal} uses sparse changes in causal mechanisms to recover the true DAG over observed variables. However, these studies are limited to observed causal variables. %A more detailed literature review is deferred to Appendix~\ref{appendix:related}.

%%%%%%%%%%%%%%%%%%%%%%% Ahuja, Seigal, our paper comparison table start
%%%%%%%%%%%%%%%%%%%%%%%
\begin{table}[t]
    \centering
    \begin{tabular}{rlllll}
        \toprule
        & Latent DAG & Transform
            & Interventions & Latent DAG & Transformation \\
            & & & & recovery & identifiability \\
        %\midrule
        \hline 
        \hline 
        % Seigal-Chandler
        \citet{seigal2022linear} & Linear SEM & Linear
            & Hard & Yes & Scaling consistency \\
            \midrule 
        % Ahuja
        \citet{ahuja2022interventional} & Not restricted & Polynomial
            & None & No & Affine transform \\
        &  Not restricted  & Polynomial & \emph{do} & Yes & {Scaling consistency} \\
        & Bounded variables & Polynomial & Soft & Yes & {Scaling consistency} \\
        \hline
        This paper & Non-linear\tablefootnote{\label{footnote:non-linear}Non-linearity conditions for our results which hold for typical models are discussed in Section~\ref{sec:analyze-assumption}.} & Linear
            & Soft & Yes & Mixing consistency \\
        & Non-linear\footref{footnote:non-linear} & Linear & Hard & Yes & Scaling consistency \\
        \bottomrule
    \end{tabular}
  \caption{Comparison of the results to prior studies in different settings. Scaling consistency refers to recovering representations up to a consistent scaling under a valid causal order. Mixing consistency is a weaker form of scaling. Formal definitions are given in Section~\ref{sec:statement}.}
    \label{tab:related-comp}
\end{table} 
%%%%%%%%%%%%%%%%%%%%%%% Ahuja, Seigal, our paper comparison table end
%%%%%%%%%%%%%%%%%%%%%%%

\section{Related Work}\label{sec:related}

\paragraph{Identifiable representation learning.}
A key objective of representation learning is to identify the latent factors that generate observational data. However, several studies have established that provable identification of latent factors is impossible without auxiliary information or additional structure on the data generation process \citep{hyvarinen1999nonlinear,locatello2019challenging}. Several approaches exist to mitigate this issue when there is no causal relationship among the latent factors. Some representative approaches include using posterior regularization for addressing the non-uniqueness of latent variables \citep{kumar2020implicit}, leveraging the knowledge of the mechanisms that govern the evolution of the system \citep{ahuja2021properties}, and using weak supervision with auxiliary information \citep{shu2019weakly}. Finally, non-linear independent component analysis (ICA) uses side information, in the form of structured time-series to exploit temporal information \citep{hyvarinen2017nonlinear,halva2020hidden} or knowledge of auxiliary variables that renders latent variables conditionally independent \citep{pmlr-v108-khemakhem20a,khemakhem2020ice,hyvarinen2019nonlinear}. On a related problem, identifiability of deep generative models is studied without auxiliary information \citep{kivva2022identifiability}.

\paragraph{Causal representation learning.} As a form of strong supervision, several studies have investigated causal representation learning when pairs of observations are available -- one before and one after a mechanism change (e.g., an intervention) for the same underlying realization of exogenous variables involved \citep{ahuja2022weakly,locatello2020weakly,kulelgen2021,Yang_2021_CVPR,brehmer2022weakly}. A representative example is the images of an object from different views \citep{brehmer2022weakly}. From a causal point of view, we can consider these pairs as counterfactual pairs. Another approach is using temporal sequences to identify causal variables under interventions \citep{lachapelle2022disentanglement,yao2022,lippe2022icitris}. However, we do not consider time-series data in this paper. Our approach requires a weaker form of supervision where data under different interventional distributions in the latent space is observable, whereas the counterfactual instances are not observable. 

\paragraph{Interventional causal representation learning.}
The study in \citep{perry2022causal} uses sparse changes on causal mechanisms via soft interventions and recovers the true DAG over observed variables by minimizing the number of conditionals that change across pairs of environments. However, it does not consider latent representations. The study in \citep{liu2022identifying} aims to learn latent causal graphs and identify latent representations. However, its focus is on linear Gaussian latent models, and its extensions to even non-linear Gaussian models are viable at the expense of restricting the graph structure. The study in \citep{seigal2022linear} considers a linear latent model with a linear mapping to observations. Specifically, by analyzing the precision matrices of observations, it recovers latent factors up to scaling and permutations consistent with the true causal order under stochastic hard interventions (perfect interventions in their notations). Finally, the study in \citep{ahuja2022interventional} considers a significantly more general setting in which the observations are polynomial functions of latent variables with no restrictions on the latent causal model. By minimizing the reconstruction loss, under $do$ interventions or stochastic interventions with boundedness and independent support assumptions, \citep{ahuja2022interventional} recovers the latent representation up to scaling and permutation.

\paragraph{Score functions in causality.}  The study in \citep{rolland2022score} uses score-matching to recover non-linear additive Gaussian noise models. The proposed method finds the topological order of causal variables but requires additional pruning to recover the full graph. \citep{montagna2022scalable} focuses on the same setting, recovers the full graph from Jacobian scores, and dispenses with the computationally expensive pruning stage. Both of these studies are limited to observed causal variables, whereas in our case, we have a causal model in the latent space.

\paragraph{Scores and diffusion modeling.} Score-based methods have recently shown impressive results in image generative modeling \citep{ho2020denoising,vahdat2021score} and inverse problems in imaging \citep{chung2022score}. The study in \citep{ho2020denoising} has uncovered a strong connection between probabilistic diffusion models and score-based generative modeling. It has shown that evidence lower bound (ELBO) for training diffusion models is essentially equivalent to a weighted combination of score-matching objectives and demonstrated strong empirical performance, e.g., generating high-quality image samples. The study in \citet{song2021score} has further investigated this connection and proposed a unified framework for score-based diffusion models.

\section{Problem Setting}\label{sec:problem}

\paragraph{Notations.} For a vector $a$, the $i$-the entry is denoted by $a_i$ and $[a]_i$. For a matrix $A$, the $i$-th row is denoted by $[A]_{i}$, the entry at row $i$ and column $j$ is denoted by $[A]_{i,j}$, and $\image(A)$ denotes the image of $A$. For matrix $A\in\R^{m\times n}$, $\mathds{1}(A)\in\{0,1\}^{m\times n}$ is the indicator function whose values are 1 at non-zero entries of $A$. For matrices $A$ and $B$ that have the same shapes, $A \preccurlyeq B$ denotes component-wise inequality. $I_n$ denotes the $n \times n$ identity matrix. For a positive integer $n$, we define $[n] \triangleq \{1,\dots,n\}$. 

\paragraph{The data generating process.} 
Consider a process that generates random variables $X \triangleq [X_1,\dots,X_d]^{\top}$ based on underlying latent random variables $Z \triangleq  [Z_1,\dots,Z_n]^{\top}$. We denote the probability density functions (pdfs) of $X$ and $Z$ by $p_X$ and $p_Z$, respectively. For clarity in the analysis, these pdfs are assumed to be well-defined, and the distributions of $X$ and $Z$ are absolutely continuous with respect to the Lebesgue measure. We focus on a linear data-generation process, in which the latent random vector $Z$ is linearly mapped to observations $X$ through a \emph{transformation} matrix $T \in \R^{d \times n}$ as follows.
\begin{align}
    X &= T \cdot Z \ , \qquad  \mbox{where} \qquad T \in \R^{d \times n} \ . \label{eq:data-generation-process}
\end{align}
We assume that $d \geq n$ and $\rank(T)=n$. Otherwise, identifiable recovery, which we will formalize in Section~\ref{sec:statement}, is ill-posed. 

\paragraph{Latent causal structure.} The distribution of latent variables $Z$ factorizes with respect to a DAG represented by $\mcG_{Z}$ that consists of $n$ nodes. Node $i\in[n]$ of $\mcG_{Z}$ represents $Z_i$ and $p_Z$ factorizes according to: 
\begin{align}
    p_{Z}(z) = \prod_{i=1}^{n} p_{Z}(z_i \med z_{\Pa(i)}) \ , \label{eq:pz_factorized}
\end{align}
where $\Pa(i)$ denotes the set of parents of node $i$. For each node $i\in[n]$, we also define $\overline{\Pa}(i) \triangleq \Pa(i) \cup \{i\}$. Similarly, $\Ch(i)$ denotes the set of children of node $i$, and $\overline{\Ch}(i)\triangleq \Ch(i) \cup \{i\}$.
%We adopt the assumption of \emph{independent causal mechanisms} (ICMs) of \citet{scholkopf2012causal}. Based on this, a change in the causal mechanism of node~$i$ does not affect those of the other nodes. 
Based on the modularity property, a change in the causal mechanism of node~$i$ does not affect those of the other nodes. We also assume that all conditional pdfs $\{p_Z(z_i \mid z_{\Pa(i)}) : i \in[n])\}$ are continuously differentiable with respect to all $z$ variables and $p_Z(z) \neq 0$ for all $z \in \R^n$. 

\paragraph{Interventions and environments.} We assume that for each node $i\in[n]$, in addition to observational mechanism $p_{Z}(z_i \med z_{\Pa(i)})$, there also exists an interventional mechanism $q_{Z}(z_i \med z_{\Pa(i)})$, which is assumed to be different from $p_{Z}(z_i \med z_{\Pa(i)})$. We consider $M$ \emph{interventional} environments such that in each environment, a subset of nodes are intervened. We denote these environments by $\mcE\triangleq \{\mcE^{m}\;:\; m\in[M]\}$, and denote the set of nodes intervened in environment $\mcE^{m}$ by $I^{m}\subseteq [n]$. We also adopt the convention that $I^{0} = \emptyset$, and define $\mcE^{0}$ as the \emph{observational} environment. 
The set of all intervention sets is denoted by $\mcI \triangleq \{I^{m}\;:\; m\in[M]\cup\{0\}\}$. We consider two types of interventions.
\begin{itemize}
    \item {\bf Soft Interventions:} A soft intervention on node $i$ does not necessarily remove the functional dependence of an intervened node on its parents, and rather alters it to a different mechanism. Under soft intervention for each $m\in[M]$, the latent random vector $Z$ in $\mcE^{m}$ is generated according to:
    \begin{align}
        p^{m}_{\rm s}(z) = \prod_{i \notin I^{m}} p_Z(z_i \med z_{\Pa(i)}) \prod_{i \in I^{m}} q_Z(z_i \med z_{\Pa(i)}) \ . \label{eq:pz_m_factorized}
    \end{align}
    \item {\bf Hard Interventions:} A hard intervention on node $i$ removes the edges incident on $i$, and changes its causal mechanism %from $p_{Z}(z_i \med z_{\Pa(i)})$ 
    to $q_Z(z_{i})$. Under hard intervention for each $m\in[M]$, the latent random vector $Z$ in $\mcE^{m}$ is generated according to:
\begin{align}
    p^{m}_{\rm h}(z) & =  \prod_{i \not\in I^{m}} p_Z(z_{i} \mid z_{\Pa(i)}) \prod_{i \in I^{m}} q_Z(z_i)\ . \label{eq:pz_m_factorized_hard}
\end{align}
\end{itemize}

\paragraph{Score function.} Define the \emph{score function} associated with a probability distribution as the gradient of its pdf. We denote the score functions associated with the distributions of the observed variables and the latent variables by $s_{X}$ and $s_{Z}$, respectively. Hence,
\begin{align}
    s_{X}(x) & \triangleq \nabla_{x} \log p_{X}(x) \ , \qquad 
    \mbox{and} \qquad s_{Z}(z)  \triangleq \nabla_z \log p_Z(z) \ . \label{eq:score-definition}
\end{align}
Leveraging the causal structure $\mcG_{Z}$ and the associated factorization in~\eqref{eq:pz_factorized}, the score function $s_{Z}(z)$ decomposes as
\begin{align}
 s_{Z}(z) & = \sum_{i=1}^{n} \nabla_z \log  p_Z(z_i \med z_{\Pa(i)}) \ . \label{eq:sz_decompose}
\end{align}
Similarly to \eqref{eq:sz_decompose}, by invoking \eqref{eq:pz_m_factorized}, the score function under a soft intervention in environment $\mcE^{m}$ has the following decomposition:
 \begin{align}
    \label{eq:sz_m_decompose}    
    s^{m}_Z(z) &= \nabla_z \log p^{m}_{\rm s}(z) = \sum_{i \notin I^{m}} \nabla_z \log p_Z(z_i \med z_{\Pa(i)}) + \sum_{i \in I^{m}} \nabla_z \log q_Z(z_i \med z_{\Pa(i)})  \ , 
\end{align}
 and by invoking \eqref{eq:pz_m_factorized_hard}, the score function 
 under a hard intervention in environment $\mcE^{m}$ has the following form:
\begin{align}
    \label{eq:sz_m_decompose_hard}    
    s^{m}_Z(z) &= \nabla_z \log p^{m}_{\rm h}(z) = \sum_{i \notin I^{m}} \nabla_z \log p_Z(z_i \med z_{\Pa(i)}) + \sum_{i \in I^{m}} \nabla_z \log q_Z(z_i)  \ .
\end{align}
The score functions change across different environments induced by the changes in the distribution of $Z$. Furthermore, we define $s_X^m$ as the score function of $X$ in the environment $\mcE^m$. In Section~\ref{sec:score-properties}, we will delineate the relationship between $s^m_X$ and $s^m_Z$ for $m\in[M]$.
\begin{definition}[Almost Sure Equivalence]
Given random vector $Y \in \R^r$ with pdf $p_Y$, functions $f, g: \R^r \to\R$ are \emph{almost sure equivalent}, denoted by $f(Y)\overset{p_Y}{=}g(Y)$, if
\begin{align}\label{eq:deq-defn}
 \P(f(Y) = g(Y)) = 1 \ . % \quad \forall y \in \R \ 
\end{align}
The lack of equivalence is denoted by $f(Y) \overset{p_Y}{\neq} g(Y)$.  
\end{definition}
\begin{figure}[t]
    \centering
    \begin{subfigure}[t]{0.4\textwidth}
        \centering
        \includegraphics[width=0.7\linewidth]{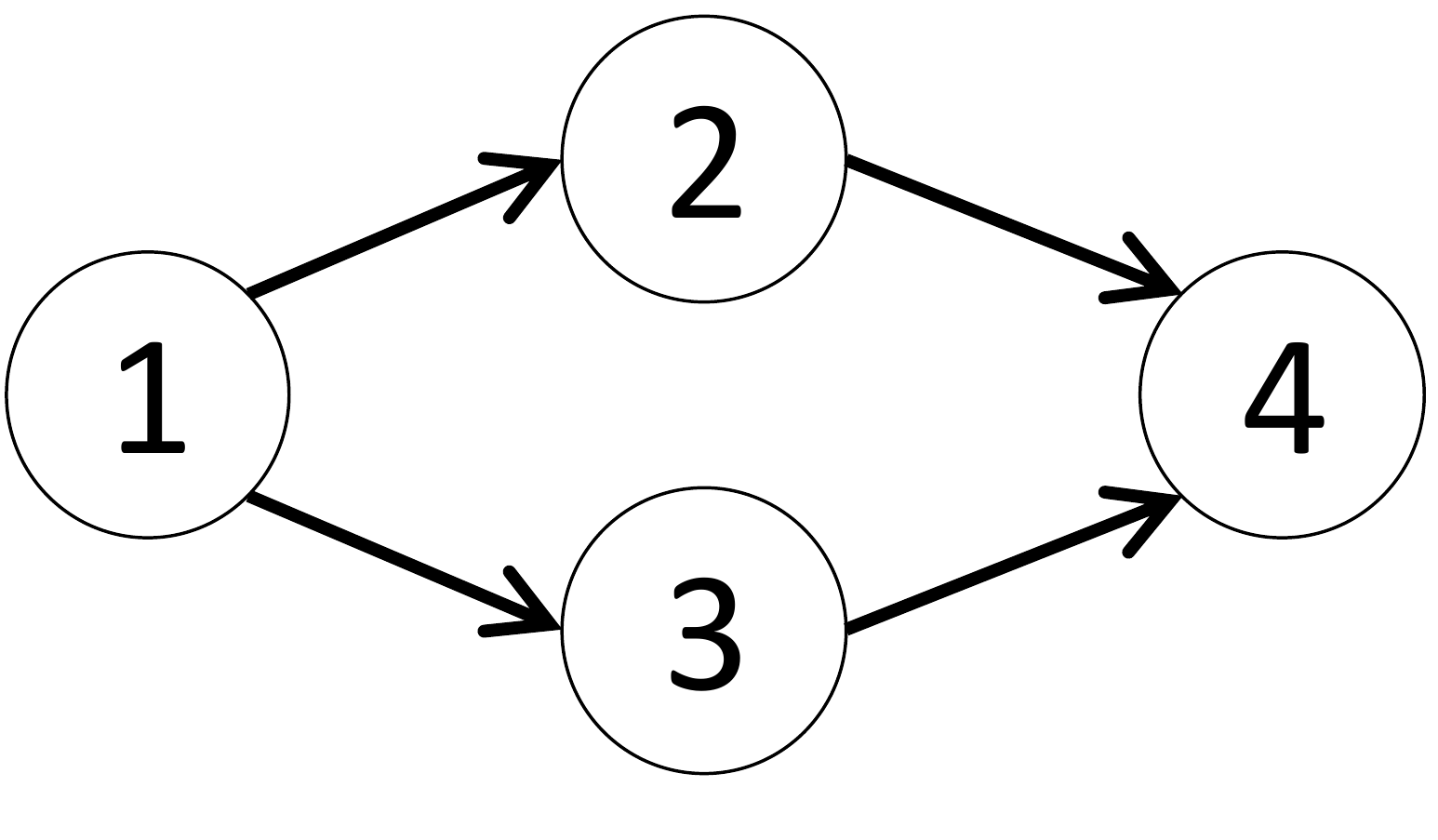}
        \caption{$[1,2,3,4]$ and $[1,3,2,4]$ are valid causal orders, and only surrounded node is $4$ with $\sur(4)=\{2,3\}$.}
        \label{fig:sample-graph-1}
    \end{subfigure}
    \hspace{.2 in}
    \begin{subfigure}[t]{0.4\textwidth}
        \centering
        \includegraphics[width=0.4\linewidth]{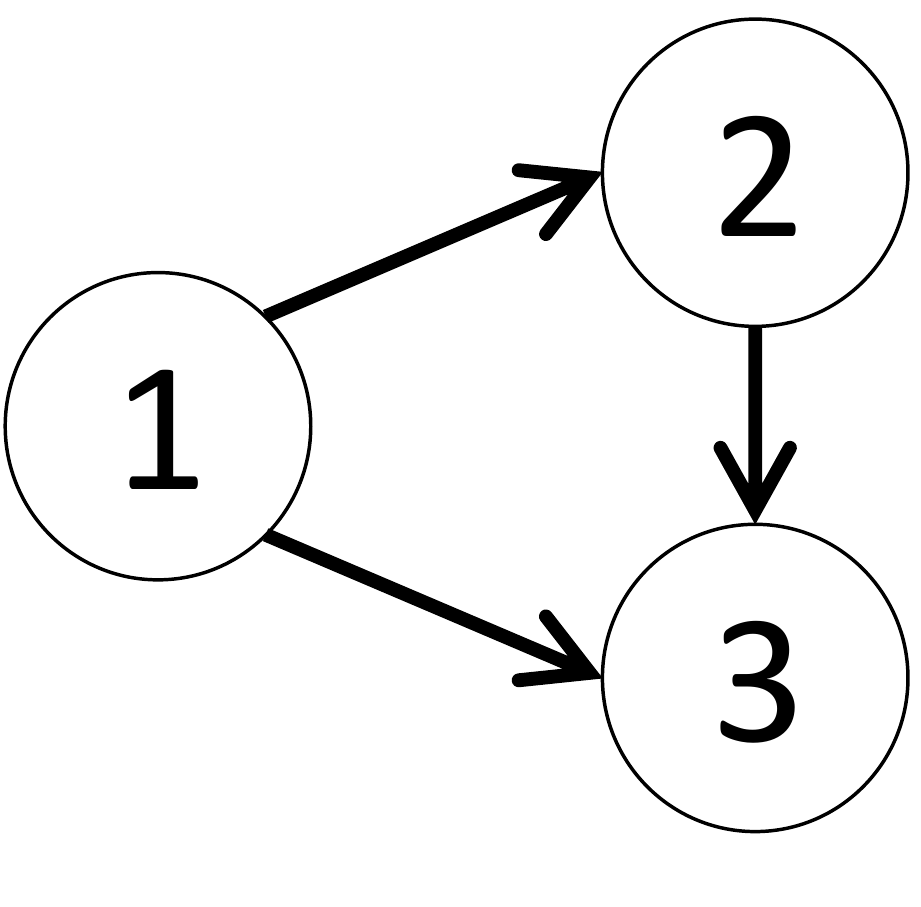}
        \caption{the only valid causal order is $[1,2,3]$, and the surrounded nodes are $\mcS=\{2,3\}$ with $\sur(2)=\{1\}, \sur(3)=\{1,2\}$.}
        \label{fig:sample-graph-2}
    \end{subfigure}   
    \caption{Sample latent DAGs.}
    \label{fig:sample}
\end{figure}

\section{Statement of the Objective}\label{sec:statement}
\paragraph{Recovering latent variables.} 
Our objective is to use observations $X$ and recover the true latent variables $Z$. We denote a generic estimator of $Z$ given $X$ by $\hat Z(X):\R^d\to\R^n$. In order to assess the fidelity of the estimate $\hat Z(X)$ with respect to the ground truth $Z$, we provide the following definitions. We define $\Pi$ as the space of all possible permutation matrices of dimension $n$.
\begin{definition}[Valid Causal Orders]\label{def:valid-causal-order}
We refer to a permutation $\pi\triangleq [\pi_1,\dots,\pi_n]$ of $[1,\dots,n]$ as a \emph{valid causal order} if $\pi_i \in \Pa(\pi_j)$ indicates that $i < j$. Without loss of generality, we assume that $[1,\dots,n]$ is a valid causal order. We denote the permutation matrix associated with permutation $\pi$ by $P_\pi\in\Pi$, i.e., $[\pi_1 \;\;\pi_2\;\; \dots \; \pi_n]^\top=P_\pi\cdot[1 \;\; 2\;\; \dots \;\; n]^\top$.
\end{definition}
\begin{definition}[Surrounded Node] \label{def:surrounded}
Node $i\in[n]$ in DAG $\mcG_{Z}$ is said to be \emph{surrounded} if there exists another node $j\in[n]$ such that $\overline{\Ch}(i) \subseteq \Ch(j)$. We denote the set of nodes that surround $i\in[n]$ by $\sur(i)$, and the set of surrounded nodes by $\mcS$, i.e.,
\begin{align}
    \sur(i) &\triangleq \{j\in[n] \, : \, j\neq i\;\;, \;\; \overline{\Ch}(i) \subseteq \Ch(j) \}\ . \\
        \mcS &\triangleq \{i\in [n] \, : \; \sur(i)\neq \emptyset\}\ . \label{eq:def-surrounded-set}
\end{align}
\end{definition}
Figure~\ref{fig:sample} illustrates definitions of valid causal order and surrounded nodes via two graphical examples.

\paragraph{Recovery fidelity measures.} In recovering the latent variables $Z$, our objective is to ensure the following measures of fidelity between the ground truth $Z$ and the estimate $\hat Z(X)$.

\begin{enumerate}
    \item \textbf{Valid DAG recovery:} We have a \emph{valid DAG recovery} if the distribution of $\hat Z(X)$ factorizes with respect to a DAG $\mcG_{\hat Z}$ that is equal to the true DAG $\mcG_{Z}$ up to a valid causal order $\pi$.
    
    \item \textbf{Scaling consistency:} We say that $\hat Z(X)$ satisfies \emph{scaling consistency} if under a valid causal order $\pi$, the coordinates of $\hat Z(X)$ are consistently equal to those of $Z$ up to fixed scaling factors. Specifically, for any sequence of latent random variables  $\{Z^t\in\R^n : t\in\N\}$ and their associated estimates $\{\hat Z^t\in\R^n : t\in\N\}$ under the causal order $\pi$ with the associated permutation matrix $P_\pi$, we have
    \begin{align}\label{eq:scaling_cons}
        \hat Z^t = P_\pi\cdot C_{\rm s}\cdot  Z^t\ ,
    \end{align}
    where $C_{\rm s}\in\R^{n\times n}$ is a constant \emph{diagonal} matrix accounting for scaling. This property essentially states that the estimates are equal to the ground truth up to a fixed permutation and fixed coordinate-wise scaling that is constant for all realizations and estimates of the latent variables.
    
    \item \textbf{Mixing consistency:} Finally, we relax the scaling consistency by including a mixing component. We say that $\hat Z(X)$ satisfies \emph{mixing consistency} if under a valid causal order $\pi$, the coordinates of $\hat Z(X)$ are consistently equal to linear functions of those of $Z$ and their surrounding variables. Specifically, for any sequence of latent random variables  $\{Z^t\in\R^n : t\in\N\}$ and their associated estimates $\{\hat Z^t\in\R^n : t\in\N\}$  under the causal order $\pi$ with the associated permutation matrix $P_\pi$, we have
    \begin{align}\label{eq:mixing_cons}
        \hat Z^{t}= P_\pi\cdot (C_{\rm s}+C_{\rm m})\cdot Z^{t}  \ ,
    \end{align}
    where $C_{\rm s}\in\R^{n\times n}$ is a constant \emph{diagonal} matrix and $C_{\rm m}\in\R^{n\times n}$ is a constant \emph{sparse} matrix with that accounts for mixing and it satisfies:
    \begin{align}
        j\notin {\rm sur}(i) \quad \Rightarrow \quad [C_{\rm m}]_{i,j}= 0\ , \qquad \forall i\in[n]\ .
    \end{align}
    We note that diagonal elements of $C_{\rm m}$ are zero and when $C_{\rm m}=\boldsymbol{0}_{n\times n}$, mixing consistency in~\eqref{eq:mixing_cons} is strengthened to scaling consistency in~\eqref{eq:scaling_cons}. 
\end{enumerate}
In the algorithm that we will develop, forming the estimate $\hat Z(X)$ is facilitated by first estimating the transformation $T$ and then using that to estimate $Z$ based on the relationship in~\eqref{eq:data-generation-process}. To formalize this process, we first define the set of possible transformations. Note that, when $U$ is a candidate transformation, its Moore-Penrose inverse, i.e., $U^+  \triangleq (U^{\top} U)^{-1}{U}^{\top}$,
% \begin{align}
%     U^+  \triangleq (U^{\top} U)^{-1}{U}^{\top}\ ,
% \end{align}
can be considered as its associated candidate inverse transformation. Any pair of $U$ and $U^+$ that does not reconstruct $X$ correctly is trivially incorrect. Therefore, by enforcing the perfect reconstruction of $X$, the set of possible transformations with rank $n$ is defined as
\begin{align}\label{eq:reconstruction-prop}
    \mcU \triangleq \{ &U \in \R^{d \times n} : \rank(U)=n  \quad \mbox{and} \quad U U^{+} X = X \,, \quad \forall X \in \image(T) \} \ .
\end{align}
For any given $X$ and candidate transformation $U\in\mcU$, %by leveraging~\eqref{eq:data-generation-process}, 
in addition to $\hat Z(X)$ that is our ultimate estimate for $Z$, we also define an \emph{auxiliary} estimate $\hat Z(U,X):\mcU\to\R^n$ as a function of $U$ and $X$:
\begin{align}\label{eq:encoder}
    \hat Z(U,X) = U^+ \cdot X\ , \qquad \forall \; U\in\mcU \  .
\end{align}
%We denote the pdf of $\hat Z(U,X)$ by $p_{\hat Z(U)}$. % this is defined later just before its use.
Finally, for any given $X$, we define our estimator of the transformation by $\hat T(X):\R^d\to \mcU$. Accordingly, by leveraging~\eqref{eq:encoder}, our estimate $\hat Z(X)$ based on $X$ is given by
\begin{align}\label{eq:encoder2}
    \hat Z(X) = [\hat T(X)]^+ \cdot X \ .
\end{align}
\paragraph{Objective.}  Based on the discussion above, our objective is to find an estimator $\hat T(X): \R^d\to\mcU$ such that the estimate $\hat Z(X)$ formed by~\eqref{eq:encoder2} renders a valid DAG recovery and maintains scaling consistency, and when not possible, mixing consistency. Throughout the rest of the paper, when it is obvious from the context, we use the shorthand notations $\hat Z$ and $\hat T$ for $\hat Z(X)$ and $\hat T(X)$, respectively. 

\section{Score Functions and Their Properties}\label{sec:score-properties}

Score functions and their variations across different interventional environments play pivotal roles in our approach to identifying latent representations. In this section, we start by presenting the key properties for the score functions and a set of assumptions that would help us prove our identifiability results. Subsequently, we discuss which classes of latent causal models satisfy our set of assumptions.

\subsection{Assumptions and Properties}\label{sec:assumptions-properties}
First, we note that the identifiability of latent representations is viable only if each node is included in at least one intervention environment~\citep{seigal2022linear}, i.e., $\bigcup_{i\in[M]}I^m=[n]$. Hence, we focus on atomic interventions, which is a setting also adopted by ~\citep{seigal2022linear} and \citep{ahuja2022interventional}.
\begin{assumption}[Intervention Environments]\label{assumption:exhaustive_atomic}
We consider $M=n+1$ environments, consisting of one observational and $n$ atomic interventional ones such that each node will be intervened in exactly one environment, i.e., $\mcI = \{\emptyset,\{1\},\dots,\{n\}\}$. The index of the unknown environment in which node $i\in[n]$ is intervened is denoted by $m_i \in [n]$, i.e., $I^{m_i}=\{i\}$.
\end{assumption}
We investigate the variations of the score functions $\{s^{m}_{Z}(z):m\in[n]\}$ that are caused by the atomic interventions. The key insight is that an atomic intervention is expected to cause changes in only certain coordinates of the score function. Next, we present an assumption that ensures the identifiability of such changes in the score function.
\begin{assumption}[Interventional Regularity]\label{assumption:pq-parent-dependence}
For every node $i\in[n]$ we have
\begin{align}\label{eq:pq-parent-dependence}
    \frac{\partial }{\partial z_k}\frac{q_Z(z_i\mid z_{\Pa(i)})}{p_Z(z_i\mid z_{\Pa(i)})} \neq 0 \ , \qquad \forall k\in \Pa(i)\ . 
\end{align}
\end{assumption}
This assumption, in spirit, is similar to the interventional faithfulness assumption adopted in the causal learning literature \citep{yang2018characterizing,softunknown20}. In Section~\ref{sec:analyze-assumption} (Lemma~\ref{lemma:pq-parent-dependence}), we show that this assumption holds for widely-used canonical models. By invoking this assumption, the following lemma delineates the set of coordinates of the score function that are affected under different atomic interventions.

\begin{lemma}[Score Changes under Interventions]\label{lm:parent_change}
Under Assumption~\ref{assumption:pq-parent-dependence}, the observational score function $s_Z(z)$ and the interventional score function $s^{m}_{Z}(z)$ have distinct distributions in their $i$-th coordinate if and only if node $i$ or one of its children is intervened in $\mcE^{m}$, i.e., for $m\in[n]$
     \begin{align}
        \left[s_{Z}\right]_{i} {\overset{p_Z}{\neq}} \left[s_{Z}^{m} \right]_{i}\  \qquad \Longleftrightarrow \qquad i \in \overline{\Pa}(I^{m})\ . \\
     \end{align}
\end{lemma}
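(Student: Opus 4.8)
The plan is to compute the coordinate-wise difference between the interventional and observational score functions directly, exploiting that an atomic intervention perturbs exactly one factor in the score decomposition \eqref{eq:sz_decompose}. Fix $m\in[n]$ with $I^m=\{j\}$. Subtracting \eqref{eq:sz_decompose} from \eqref{eq:sz_m_decompose}, every unintervened factor cancels, leaving
\begin{align}
  s^m_Z(z)-s_Z(z)
    = \nabla_z \log \frac{q_Z(z_j\mid z_{\Pa(j)})}{p_Z(z_j\mid z_{\Pa(j)})}
    \;\triangleq\; \nabla_z r_j(z)\ ,
\end{align}
so the entire effect of the intervention is carried by the single scalar log-ratio $r_j$. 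The crucial structural observation is that $r_j$ depends on the coordinates in $\overline{\Pa}(j)$ only. Hence $[s^m_Z-s_Z]_i=\partial r_j/\partial z_i$, and the two regimes $i\in\overline{\Pa}(j)$ and $i\notin\overline{\Pa}(j)$ govern the two directions of the claimed equivalence. Note that $\overline{\Pa}(I^m)=\overline{\Pa}(j)$ for the atomic set $I^m=\{j\}$, and the hard-intervention case is identical after replacing $q_Z(z_j\mid z_{\Pa(j)})$ by $q_Z(z_j)$, which still depends only on $z_j$.

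For the ``only if'' direction I would argue by contraposition: if $i\notin\overline{\Pa}(j)$, then $z_i$ is neither $z_j$ nor a parent coordinate of $j$, so $r_j$ does not depend on $z_i$ and $\partial r_j/\partial z_i\equiv 0$ identically. Thus $[s_Z]_i=[s^m_Z]_i$ everywhere, and in particular $[s_Z]_i\overset{p_Z}{=}[s^m_Z]_i$.

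The ``if'' direction requires showing that $\partial r_j/\partial z_i$ is not $p_Z$-almost-surely zero whenever $i\in\overline{\Pa}(j)$, and I would split it into two cases. When $i\in\Pa(j)$ (so the intervened node $j$ is a child of $i$), Assumption~\ref{assumption:pq-parent-dependence}, applied with node $j$ and $k=i\in\Pa(j)$, gives $\partial(q_Z/p_Z)/\partial z_i\neq 0$; since $p_Z,q_Z>0$ this is equivalent to $\partial r_j/\partial z_i\neq 0$, so the coordinate genuinely changes. The remaining and more delicate case is the intervened node itself, $i=j$, which is \emph{not} covered by Assumption~\ref{assumption:pq-parent-dependence} (that assumption only constrains parent coordinates $k\in\Pa(i)$). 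Here I would instead invoke the defining property that the intervention alters the mechanism, i.e.\ $q_Z(\cdot\mid z_{\Pa(j)})\neq p_Z(\cdot\mid z_{\Pa(j)})$: if $\partial r_j/\partial z_j\equiv 0$ in $z_j$ for a fixed $z_{\Pa(j)}$, then $q_Z(z_j\mid z_{\Pa(j)})=c(z_{\Pa(j)})\,p_Z(z_j\mid z_{\Pa(j)})$ with $c$ independent of $z_j$, and integrating both sides over $z_j$ forces $c=1$, hence $q_Z=p_Z$, a contradiction. Thus $\partial r_j/\partial z_j$ cannot vanish identically.

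Finally I would upgrade ``not identically zero'' to ``not $p_Z$-almost-surely zero'' using regularity: the conditionals are continuously differentiable, so $\partial r_j/\partial z_i$ is continuous, and being nonzero at some point it is nonzero on an open neighborhood of positive Lebesgue measure; since $p_Z(z)>0$ for all $z$, this set has positive $p_Z$-measure. This gives $[s_Z]_i\overset{p_Z}{\neq}[s^m_Z]_i$ and completes the equivalence. I expect the main obstacle to be precisely the $i=j$ case: Assumption~\ref{assumption:pq-parent-dependence} is silent on the diagonal derivative, so one must separately rule out the pathological possibility that $q_Z$ and $p_Z$ differ only through a parent-dependent multiplicative constant in $z_j$, which the normalization argument above excludes.
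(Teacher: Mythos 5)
Your proof is correct and follows essentially the same route as the paper's: isolate the single changed factor in the score decomposition, observe that the log-ratio $\log\!\big(q_Z(z_j\mid z_{\Pa(j)})/p_Z(z_j\mid z_{\Pa(j)})\big)$ depends only on the coordinates in $\overline{\Pa}(j)$ (giving the ``only if'' direction by contraposition), and use Assumption~\ref{assumption:pq-parent-dependence} for the parent coordinates in the ``if'' direction. The one place you diverge is in fact a place where you are more careful than the paper: the paper's proof of the ``if'' direction cites Assumption~\ref{assumption:pq-parent-dependence} for every $i\in\overline{\Pa}(k)$, even though that assumption, as stated in \eqref{eq:pq-parent-dependence}, only constrains the derivatives with respect to $k\in\Pa(i)$ and is silent about the intervened coordinate itself. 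You correctly flag this and close the gap with the normalization argument (a $z_j$-independent multiplicative factor must integrate to $1$, forcing $q_Z=p_Z$); the paper supplies exactly this argument, but only later, at the start of the proof of Lemma~\ref{lemma:pq-parent-dependence}. Your explicit upgrade from ``not identically zero'' to ``not $p_Z$-almost-surely zero'' via continuity and strict positivity of $p_Z$ is also a step the paper leaves implicit, and it is sound.
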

\begin{proof}
See Appendix~\ref{proof:parent_change}.
\end{proof}
Lemma~\ref{lm:parent_change} provides the necessary and sufficient conditions for the invariance of the coordinates of the score functions. To properly use Lemma~\ref{lm:parent_change}, we make a general assumption such that the effect of an intervention is not lost in any linear combination of the varying coordinates of the scores. To formalize this, for each node $i\in[n]$ we define 
\begin{align}
    \mcC_i \triangleq \{c\in\R^n\;:\; \exists j \in \overline{\Pa}(i) \;\; \mbox{such that }\; c_j\neq 0 \}\ . \label{eq:C}
\end{align}
\begin{assumption}\label{assumption:change} 
    Consider environment $\mcE^{m_i}$ in which node $i$ is intervened. Then, for all $c \in \mcC_i$ we have
    \begin{align}
            %\left(c^{\top}\cdot s_{Z}(z) \right) \overset{p_Z}{\neq} \left(c^{\top}\cdot s^{m_i}_{Z}(z) \right)\ , \qquad \forall z\in\R^n\ . \\
            \left(c^{\top}\cdot s_{Z} \right) \overset{p_Z}{\neq} \left(c^{\top}\cdot s^{m_i}_{Z} \right) \ .       
            \label{eq:assumption:change-new:1}
    \end{align}
\end{assumption}
Next, we need an oracle to test whether a given coordinate of the score functions $s_{Z}$ and $s_{Z}^{m}$ at observational data from $\mcE^{0}$ are equal almost surely. 
\begin{assumption}[Testing Equivalence]\label{assumption:score_test}
For a given pair of environments $\mcE^{0}$ and $\mcE^{m}$, we have access to an oracle that samples $\hat z$ from $\hat Z(U,X)$ in the observational environment and determines if the following holds almost surely
\begin{align}
    \left[s_{\hat Z}(\hat z)\right]_i = \left[s_{\hat Z}^{m}(\hat z)\right]_i \ , \quad i \in [n] \ .
\end{align}
\end{assumption}
Note that we use this oracle to determine whether $[s_{\hat Z}]_i \overset{p_Z}{=} [s_{\hat Z}]_i$. In our analysis, we leverage the relationships among the score functions associated with the observed and latent variables. Through the following assumption, we ensure that the score functions $s_{X}$ and $\{s^{m}_{X}:m\in[n]\}$ can be computed.
\begin{assumption}[Score Function]\label{assumption:score-availability}
We can compute the score function $s_{X}(x)$ and $\{s^{m}_{X}:m\in[n]\}$ from samples of $X$ in the observational and interventional environments.
\end{assumption}
Next, we show how to compute $s_{Z}$ and $\{s^{m}_{Z}:m\in[n]\}$ from $s_{X}$ and $\{s^{m}_{X}:m\in[n]\}$, respectively, by leveraging the data generation process in~\eqref{eq:data-generation-process}. 
\begin{lemma}[Score Transformation]\label{lm:score-linear-transform}
    Consider random vectors $Y\in\R^r$ and $W\in\R^s$ that are related through $Y=AW$ such that $r \ge s$ and $A\in\R^{r\times s}$ is a full-rank matrix. The score functions of $Y$ and $W$, denoted by $s_Y$ and $s_W$, respectively, are related through
    \begin{align}\label{eq:score-linear-transform}
        s_{W}(w)
        &= A^{\top} s_{Y}(y) \ , \qquad \mbox{where} \qquad y=Aw\ .
    \end{align}
\end{lemma}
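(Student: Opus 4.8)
The plan is to derive the score transformation identity directly from the change-of-variables formula for densities, then differentiate the resulting log-density relationship. Since $Y = AW$ with $A \in \R^{r \times s}$ full-rank and $r \ge s$, the map is injective but generally not surjective onto $\R^r$; the distribution of $Y$ is supported on the $s$-dimensional subspace $\image(A)$. Because of this dimension mismatch, I would not work with $p_Y$ as a density on $\R^r$ directly. Instead, the cleanest route is to relate the two log-densities through a coordinate parametrization of $\image(A)$ and then apply the chain rule. First I would pick any left inverse or, more concretely, use the Moore-Penrose pseudoinverse $A^+ = (A^\top A)^{-1} A^\top$ so that $w = A^+ y$ whenever $y = Aw$, giving an explicit smooth recovery of $w$ from $y$.

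The key computation is the chain rule. Writing $\log p_W(w) = \log p_W(A^+ y)$ as a function of $y$ along the subspace, differentiating with respect to $w$ should yield the stated relation. More carefully, I would express the relationship between densities so that $\log p_W(w)$ and $\log p_Y(y)$ differ only by a constant Jacobian factor that does not depend on $w$ (the volume distortion of the linear map $A$ restricted to its domain is constant, since $A$ is linear). Consequently $\nabla_w \log p_W(w)$ and the pullback of $\nabla_y \log p_Y(y)$ agree. The transformation between the two gradients is governed by the differential of $y = Aw$: by the chain rule, $\nabla_w \big[\log p_Y(Aw)\big] = A^\top \big(\nabla_y \log p_Y\big)(Aw)$, which is exactly $s_W(w) = A^\top s_Y(y)$ with $y = Aw$.

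The main obstacle is handling the non-square, rank-deficient-in-ambient-space situation rigorously: when $r > s$, the density $p_Y$ does not exist with respect to the $r$-dimensional Lebesgue measure, so the naive statement $\nabla_y \log p_Y(y)$ needs interpretation. I would address this by defining $s_Y$ as the gradient of the log-density of $Y$ with respect to the Hausdorff measure on the affine subspace $\image(A)$ (equivalently, parametrizing $\image(A)$ by $w \mapsto Aw$ and taking the gradient in the ambient coordinates), or by working with a regularizing argument where scores are defined through the observed distribution directly. The constant Jacobian factor $\sqrt{\det(A^\top A)}$ from the area formula is crucial here but, being independent of $w$, it vanishes under differentiation, so it does not affect the final identity. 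Once this measure-theoretic subtlety is resolved, the remainder is a direct application of the chain rule, and I would verify the claim by substituting back into the factorized score decompositions of \eqref{eq:sz_decompose} to confirm consistency with the data-generation model \eqref{eq:data-generation-process}.
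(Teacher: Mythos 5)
Your proposal is correct and follows essentially the same route as the paper: the paper likewise invokes the area-formula change of variables $p_W(w) = p_Y(y)\,\lvert\det(J_g(w)^\top J_g(w))\rvert^{1/2}$ for an injective map into a higher-dimensional space, applies the chain rule $\nabla_w f(y) = J_g(w)^\top \nabla_y f(y)$, and observes that for the linear map $g(w)=Aw$ the Jacobian is the constant matrix $A$, so the volume-distortion term has zero gradient and $s_W(w) = A^\top s_Y(y)$ follows. Your additional care about interpreting $p_Y$ via the Hausdorff measure on $\image(A)$ is a welcome clarification of a point the paper passes over with a citation, but it does not change the argument.
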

\begin{proof}
    See Appendix~\ref{proof:score-linear-transform}.
\end{proof}
By setting $W=Z$, $Y=X$, and $A=T$, from \eqref{eq:data-generation-process} and Lemma~\ref{lm:score-linear-transform} we have
\begin{align}
    s_{Z}(z) &= T^{\top} s_{X}(x) \ , &&  \mbox{where} \qquad x=Tz \label{eq:sz-from-sx1}\ , \\
    s_{Z}^{m}(z) & = T^{\top} s^{m}_{X}(x)\ , \quad \forall m \in [n] \ , &&  \mbox{where} \qquad x=Tz\ . \label{eq:sz-from-sx2}
\end{align}
Furthermore, we can also use Lemma~\ref{lm:score-linear-transform} to compute the score function of the auxiliary latent estimate $\hat Z(U,X)$ defined in \eqref{eq:encoder2}. By setting $W=\hat{Z}(U,X)$, $Y=X$, and $A=U$, from Lemma~\ref{lm:score-linear-transform} we have
\begin{align}\label{eq:sz-hat-from-sx1}
    s_{\hat{Z}}(\hat z) & = U^{\top} s_{X}(x) \ , && \mbox{where} \qquad x = U \hat z\ , \\
    s_{\hat{Z}}^{m}(\hat z) & = U^{\top} s^{m}_{X}(x) \ , \quad \forall m \in [n] \ , && \mbox{where} \qquad x = U \hat z \ . \label{eq:sz-hat-from-sx2}
\end{align}
Next, we show how to recover the true latent scores from the estimated latent scores. For this purpose for any candidate transformation $U\in\mcU$, we define $H(U)\in\R^{n\times n}$ as
\begin{align}
        H(U)  &\triangleq (T^{+} U)^{\top} \  . \label{eq:H-definition}
    \end{align}
We note that $H(U)$ is invertible for all $U \in \mcU$, formalized in the next lemma. 
\begin{lemma}\label{lm:H-definition-and-invertibility}
    For any candidate $U \in \mcU$, the scores of $\hat Z(U,X)$ and $Z$ are related through
    \begin{align}
        s_{\hat{Z}}(\hat z) &= H(U) \cdot  s_{Z}(z) \ , && \mbox{where} \qquad x = T z = U \hat{z} \label{eq:sz-hat-from-sz1}\ , \\
        s_{\hat{Z}}^{m}(\hat z) & = H(U) \cdot s^{m}_{Z}(z)\ , \quad \forall m \in [n] \ , && \mbox{where} \qquad x = T z = U \hat{z} \ . \label{eq:sz-hat-from-sz2}
    \end{align}
    Furthermore, $H(U)$ is invertible for all $U \in \mcU$.
\end{lemma}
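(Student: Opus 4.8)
The plan is to prove the two score identities together with the invertibility of $H(U)$ by first isolating the single structural fact that every candidate transformation shares the image of the true one, namely $\image(U)=\image(T)$ for all $U\in\mcU$, and then reducing everything to an invertible change of coordinates in the latent space. Once that image equality is in hand, both the algebraic statement (invertibility of $H(U)$) and the analytic statement (the two transformation rules) follow quickly.

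First I would extract $\image(U)=\image(T)$ from the reconstruction constraint defining $\mcU$ in~\eqref{eq:reconstruction-prop}. Since $UU^{+}$ is the orthogonal projection onto $\image(U)$, the requirement $UU^{+}X=X$ for all $X\in\image(T)$ says precisely that every element of $\image(T)$ is fixed by this projection, i.e.\ $\image(T)\subseteq\image(U)$. Because $\rank(U)=\rank(T)=n$, both images have dimension $n$, so the inclusion forces $\image(U)=\image(T)$.

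Next I would use this to handle $H(U)$ and, simultaneously, the matrix $B\triangleq U^{+}T$. The idea is that $TT^{+}$ is the orthogonal projection onto $\image(T)=\image(U)$, so it acts as the identity on the columns of $U$, giving $TT^{+}U=U$. Consequently $(U^{+}T)(T^{+}U)=U^{+}(TT^{+})U=U^{+}U=I_n$. As these are square $n\times n$ matrices, $U^{+}T$ and $T^{+}U$ are mutually inverse; transposing shows $H(U)=(T^{+}U)^{\top}$ is invertible with inverse $(U^{+}T)^{\top}$, which settles the final claim of the lemma. Finally I would derive the score identities by noting that the estimate obeys $\hat Z(U,X)=U^{+}X=U^{+}TZ=BZ$, and that the same linear relation holds in every environment since it depends only on the data-generation map and the fixed $U$. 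As $B$ is invertible, I would apply Lemma~\ref{lm:score-linear-transform} in the square full-rank case to $\hat Z=BZ$, obtaining $s_Z(z)=B^{\top}s_{\hat Z}(\hat z)$, hence $s_{\hat Z}(\hat z)=(B^{-1})^{\top}s_Z(z)=(T^{+}U)^{\top}s_Z(z)=H(U)s_Z(z)$; replacing $p_Z$ by $p^m_{\rm s}$ (or $p^m_{\rm h}$) yields the interventional identity verbatim.

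The main obstacle is dimensional: when $d>n$ the observed score $s_X$ may carry an undetermined component off $\image(T)$, so chaining $s_{\hat Z}=U^{\top}s_X$ with $s_Z=T^{\top}s_X$ \emph{directly} would require arguing that this component is annihilated by $U^{\top}$. Routing the argument through the invertible $B=U^{+}T$ avoids this entirely, since it works purely with the honest $n$-dimensional densities of $Z$ and $\hat Z$; establishing $\image(U)=\image(T)$ is exactly what makes that reduction legitimate and is therefore the crux of the proof.
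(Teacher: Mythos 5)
Your proof is correct, and it takes a genuinely different route from the paper's on both halves of the lemma. For invertibility, the paper argues by contradiction: it assumes $T^{+}U$ has a non-trivial kernel and derives $\image(U)\cap\ker(T^{\top})\neq\{0\}$, contradicting $\image(T)=\image(U)$. You instead exhibit the inverse explicitly via $(U^{+}T)(T^{+}U)=U^{+}(TT^{+})U=U^{+}U=I_n$, which is more constructive and also hands you the matrix $B=U^{+}T$ with $B^{-1}=T^{+}U$ for free. For the score identities, the paper routes through the observed score: it expands $H(U)s_Z(z)=(TT^{+}U)^{\top}s_X(x)$, invokes its Proposition~\ref{prop:pseudo-inverse-left-identity} to get $TT^{+}U=U$, and matches this against $s_{\hat Z}(\hat z)=U^{\top}s_X(x)$ from~\eqref{eq:sz-hat-from-sx1}. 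You instead apply Lemma~\ref{lm:score-linear-transform} directly to the square invertible relation $\hat Z=BZ$ and invert, never touching $s_X$. Both arguments rest on the same crux, $\image(T)=\image(U)$, which you and the paper establish identically from~\eqref{eq:reconstruction-prop} and the rank condition. One small caveat: your closing remark about the ``dimensional obstacle'' slightly mischaracterizes the paper's approach --- chaining through $s_X$ does not require controlling any component of $s_X$ off $\image(T)$, because the paper only uses the forward relations $s_Z=T^{\top}s_X$ and $s_{\hat Z}=U^{\top}s_X$ together with the matrix identity $H(U)T^{\top}=U^{\top}$; it never inverts $T^{\top}$. So your detour through $B$ is a legitimate simplification rather than a necessary repair, but it does make the proof self-contained at the latent level.
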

\begin{proof}
    See Appendix~\ref{proof:H-definition-and-invertibility}.
\end{proof}
For any pair of matrices $A\in\R^{n\times n}$ and $A'\in\R^{n\times d}$, we define the \emph{change} matrices $\Delta_{X}(A')$, $\Delta_{Z}(A)$, $\Delta_{\hat Z}(A) \in \R^{n \times n}$ as follows.
\begin{align}
    [\Delta_X(A')]_{i,m} & \triangleq \mathds{1}\left([A'\cdot s_X]_i \overset{p_X}{\neq}  [A'\cdot s_X^m]_i \right) \ ,  \label{eq:def-delta-X} \\
    [\Delta_Z(A)]_{i,m} & \triangleq \mathds{1}\left([A\cdot s_Z]_i \overset{p_Z}{\neq}  [A\cdot s_Z^m]_i\right) \ , \label{eq:def-delta-Z} \\
    [\Delta_{\hat Z}(A)]_{i,m} & \triangleq \mathds{1}\left([A\cdot s_{\hat Z}]_i \overset{p_{\hat Z}}{\neq}  [A\cdot s_{\hat Z}^m]_i\right) \ . \label{eq:def-delta-Z-hat}
\end{align}
Note that these change matrices are related according to
\begin{align}
    \Delta_{Z}(A) &\overset{(\ref{eq:sz-from-sx1}, \ref{eq:sz-from-sx2})}{=} \Delta_{X}(A \cdot T^{\top}) \ . \label{eq:delta-z-from-delta-x} \\
    \Delta_{\hat Z}(A) &\overset{(\ref{eq:sz-hat-from-sx1}, \ref{eq:sz-hat-from-sx2})}{=} \Delta_{X}(A \cdot U^{\top}) \ , \label{eq:delta-z-hat-from-delta-x} \\
    \Delta_{\hat Z}(A) &\overset{(\ref{eq:sz-hat-from-sz1}, \ref{eq:sz-hat-from-sz2})}{=} \Delta_{Z}(A \cdot H(U)) \ . \label{eq:delta-z-hat-from-delta-z}
\end{align}

\subsection{Interpreting the Assumptions}\label{sec:analyze-assumption}
In this section, we assess the viability of Assumptions~\ref{assumption:exhaustive_atomic}--\ref{assumption:score-availability}. Our focus is primarily on Assumptions~\ref{assumption:pq-parent-dependence} and \ref{assumption:change} since Assumptions~\ref{assumption:exhaustive_atomic},~\ref{assumption:score_test}, and \ref{assumption:score-availability} are standard in causal representation learning literature and score-based methods and do not constrain the latent causal models or the intervention mechanisms. Specifically, every single node being intervened is known to be necessary for identifiability~\citep{seigal2022linear}, resulting in Assumption~\ref{assumption:exhaustive_atomic}, and Assumptions~\ref{assumption:score_test} and~\ref{assumption:score-availability} ensure that scores and their variations can be computed from data. 

The following lemma shows that Assumption~\ref{assumption:pq-parent-dependence} holds for a very general class of models, such as additive and multiplicative noise models. Furthermore, hard interventions ensure that Assumption~\ref{assumption:pq-parent-dependence} holds for any latent causal model. 
\begin{lemma}\label{lemma:pq-parent-dependence}
    Assumption~\ref{assumption:pq-parent-dependence} holds in any of the following settings.
    \begin{enumerate}
        \item All interventions are hard.
        \item Additive latent causal models, i.e., $z_i$ is related to $z_{\Pa(i)}$ via
        \begin{align}\label{eq:additive}
            Z_i = f_{p,i}(Z_{\Pa(i)}) + N_{p,i}\ , \quad \mbox{and} \quad  \quad Z_i = f_{q,i}(Z_{\Pa(i)}) + N_{q,i}\ ,
        \end{align}
        where $\{f_{p,i} : i \in [n]\}$ and $\{f_{q,i} : i \in [n]\}$ are observational and interventional causal mechanisms, respectively. The additive terms $\{N_{p,i} : i \in [n]\}$ and $\{N_{q,i} : i \in [n]\}$ account for noise and their pdfs are assumed to be analytic. 
        \item Multiplicative latent causal models, i.e., $Z_i$ is related to $Z_{\Pa(i)}$ via
        \begin{align}\label{eq:multiplicative}
            Z_i = f_{p,i}(Z_{\Pa(i)}) \cdot  N_{p,i}\ , \quad \mbox{and} \quad \quad Z_i = f_{q,i}(Z_{\Pa(i)}) \cdot N_{q,i}\ ,
        \end{align}
        where models and parameters follow the general models described for the additive model in \eqref{eq:additive}.
    \end{enumerate}
\end{lemma}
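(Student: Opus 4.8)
The plan is to recast Assumption~\ref{assumption:pq-parent-dependence} in logarithmic form and then argue by contradiction. Since $p_Z,q_Z>0$, we may write $\partial_{z_k}\big(q_Z/p_Z\big) = \big(q_Z/p_Z\big)\,\partial_{z_k}\rho_i$ with $\rho_i \triangleq \log q_Z(z_i\mid z_{\Pa(i)}) - \log p_Z(z_i\mid z_{\Pa(i)})$, so it suffices to show $\partial_{z_k}\rho_i\not\equiv 0$ for every $k\in\Pa(i)$. The hard-intervention case is immediate: there $q_Z(z_i\mid z_{\Pa(i)})=q_Z(z_i)$ is free of the parents, so $\partial_{z_k}\rho_i = -\,\partial_{z_k}\log p_Z(z_i\mid z_{\Pa(i)})$, which is not identically zero precisely because $z_k$ is a genuine parent of $z_i$, i.e. the conditional depends non-trivially on $z_k$. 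No distributional assumption is needed in this case.

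For the additive case I would express the conditionals through the noise densities, $p_Z(z_i\mid z_{\Pa(i)}) = p_{N_{p,i}}(z_i - f_{p,i}(z_{\Pa(i)}))$ and likewise for $q$, so that $\rho_i = g(z_i - f_{q,i}) - h(z_i - f_{p,i})$ with $g\triangleq\log p_{N_{q,i}}$ and $h\triangleq\log p_{N_{p,i}}$ analytic. Assume toward a contradiction that $\partial_{z_k}\rho_i\equiv 0$; this gives the identity $h'(z_i-f_{p,i})\,\partial_{z_k}f_{p,i} = g'(z_i-f_{q,i})\,\partial_{z_k}f_{q,i}$. The identity forces $\partial_{z_k}f_{p,i}$ and $\partial_{z_k}f_{q,i}$ to share the same zero set, since wherever one vanishes and the other does not, one of $g',h'$ would vanish identically in $z_i$, making a noise density constant and hence non-integrable; moreover $\partial_{z_k}f_{p,i}\not\equiv 0$ because $k$ is a genuine parent. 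On the common open region where both mechanism-derivatives are nonzero, differentiating the identity once more in $z_i$ and dividing the two relations eliminates the mechanism derivatives and yields the separated identity $g''(z_i-f_{q,i})/g'(z_i-f_{q,i}) = h''(z_i-f_{p,i})/h'(z_i-f_{p,i})$.

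The crux is then a translation argument driven by the image of $\delta(z_{\Pa(i)})\triangleq f_{p,i}-f_{q,i}$, which, being continuous on the connected set $\R^{|\Pa(i)|}$, is either a single point or a non-degenerate interval. If it is an interval, varying $z_{\Pa(i)}$ makes the ratio $g''/g'$ invariant under a continuum of shifts of its argument, forcing it to be constant; then $g'$ is a pure exponential and $p_{N_{q,i}}$ fails to be integrable, a contradiction. If $\delta$ is a constant shift, then $\partial_{z_k}f_{p,i}=\partial_{z_k}f_{q,i}$ and the original identity collapses to $g'(z_i-f_{q,i})=h'(z_i-f_{p,i})$; integrating in $z_i$ and using that both noise densities normalize to one gives $p_{N_{q,i}}(\,\cdot\,) = p_{N_{p,i}}(\,\cdot-\delta)$, whence $q_Z(z_i\mid z_{\Pa(i)})=p_Z(z_i\mid z_{\Pa(i)})$ identically, contradicting that the interventional mechanism differs from the observational one. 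Analyticity of the noise densities is what licenses these functional-equation manipulations (the scores $g',h'$ are analytic) and what lets me pass from ``not identically zero'' to ``nonzero off a Lebesgue-null set,'' since for fixed parents $\partial_{z_k}\rho_i$ is analytic in $z_i$.

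Finally, the multiplicative case reduces to the additive one by the change of variables $w_i=\log z_i$ on the positive orthant: then $w_i = \log f_{p,i}(z_{\Pa(i)}) + \log N_{p,i}$ (and likewise with $q$) is additive, $\log N_{p,i}$ inherits an analytic density from $N_{p,i}$, and because the Jacobian $e^{w_i}$ cancels in the ratio the quantity $q_Z/p_Z$ is unchanged while fixing $w_i$ is the same as fixing $z_i$; hence $\partial_{z_k}(q_Z/p_Z)$ is governed by the additive result. I expect the main obstacle to be the additive analysis, and within it the constant-shift branch: this is exactly where the hypothesis $q_Z\neq p_Z$ and the normalization of the densities must be invoked to rule out the ``reparametrized-same-law'' solutions of the functional equation, whereas the interval branch is where analyticity and integrability exclude exotic (non-Gaussian) noise. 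Extra care is also warranted because the mechanisms are only assumed $C^1$, so analyticity is available only in the child coordinate $z_i$ rather than jointly.
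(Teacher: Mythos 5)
Your treatment of the hard-intervention and additive cases is correct and follows essentially the same route as the paper's proof: reduce Assumption~\ref{assumption:pq-parent-dependence} to the non-vanishing of the parent derivative of the log-ratio, derive the identity $h'(z_i-f_{p,i})\,\partial_{z_k}f_{p,i}=g'(z_i-f_{q,i})\,\partial_{z_k}f_{q,i}$, separate variables by differentiating in $z_i$, and split on whether $\delta=f_{p,i}-f_{q,i}$ is constant, using analyticity and non-integrability of an exponential score in the non-constant branch, and normalization together with the premise $q_Z\neq p_Z$ in the constant branch. Your observation that analyticity in $z_i$ upgrades ``not identically zero'' to ``nonzero off a null set'' is a useful point that the paper leaves implicit.

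The multiplicative case is where you depart from the paper, and your reduction has a genuine gap. The paper's standing assumption is $p_Z(z)\neq 0$ for all $z\in\R^n$; since $f_{p,i}$ is nowhere zero, the conditional density of $Z_i$ in the model \eqref{eq:multiplicative} is positive everywhere only if $N_{p,i}$ is supported on all of $\R$, so $\log N_{p,i}$ --- and with it your change of variables $w_i=\log z_i$ --- is undefined on a set of positive probability. Restricting to the positive orthant does not repair this: the transformed noise laws there are sub-densities integrating to $\P(N_{p,i}>0)$ and $\P(N_{q,i}>0)$ rather than to one, so the normalization step in your constant-shift branch (the step forcing the multiplicative constant to equal $1$) fails; you can then only conclude that $q_Z/p_Z$ is constant on a half-line, which does not contradict $q_Z\neq p_Z$, since the two mechanisms could differ only for negative $z_i$. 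The paper avoids this by analyzing the multiplicative model directly: it writes $p_Z(z_i\mid z_{\Pa(i)})=|f_{p,i}(z_{\Pa(i)})|^{-1}h_p(z_i/f_{p,i}(z_{\Pa(i)}))$, handles constant $\delta=f_{p,i}/f_{q,i}$ with a scaling (rather than translation) argument on the genuine full-line conditional densities, and handles non-constant $\delta$ with a functional equation whose only solutions give a non-integrable $h_q$. To salvage your reduction you would need to treat both sign branches of $z_i$ and stitch them together, carrying the correct sub-normalization constants through the constant-shift case --- at which point the direct argument is the simpler one.
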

\begin{proof}
    See Appendix~\ref{proof:pq-parent-dependence}.
\end{proof}
Next, we establish the necessary and sufficient conditions under which Assumption~\ref{assumption:change} holds. Furthermore, we show that a large class of non-linear models in the latent space satisfy these conditions, including the two-layer neural networks. We have focused on such NNs since they effectively approximate continuous functions \citep{cybenko1989approximation}. Readily, the necessary and sufficient conditions can be investigated for other choices of non-linear functions.

\paragraph{Necessary and sufficient conditions.} Consider the canonical causal model with additive noise in~\eqref{eq:additive}, which is widely used in the literature on causal inference~\citep{peters2017elements} and causal representation learning~\citep{scholkopf2021toward}. The following theorem characterizes the necessary and sufficient conditions under which Assumption~\ref{assumption:change} is satisfied for the additive noise model in~\eqref{eq:additive}. In this subsection, we use  $\varphi$ as the shorthand for $z_{\Pa(i)}$. 
\begin{theorem}\label{theorem:condition-assumption}
For each node $i\in[n]$ consider the following two set of equations for $c\in\R^n$:
\begin{align}\label{eq:NS_equations}    
   \left\{
   \begin{array}{l}
         c_i - c^{\top} \cdot \nabla_z f_{p,i}(\varphi)  = 0  \\
         \\ 
         c_i - c^{\top} \cdot \nabla_z f_{q,i}(\varphi)  = 0
   \end{array}
   \right.\ , \qquad \qquad \forall \varphi \in\R^{|\Pa(i)|}\ .
\end{align}
Assumption~\ref{assumption:change} holds if and only if the only all solutions $c$ to \eqref{eq:NS_equations} satisfy $c\notin \mcC_i$, or based on~\eqref{eq:C}, equivalently
\begin{align}
    \forall j \in \overline{\Pa}(i): \quad  c_j= 0\ .
\end{align}
\end{theorem}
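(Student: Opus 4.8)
The plan is to translate the almost-sure (in)equivalence required in Assumption~\ref{assumption:change} into an \emph{everywhere} statement about one explicit scalar function of $z$, and then to show that this function vanishes exactly on the solution set of \eqref{eq:NS_equations}. Since $I^{m_i}=\{i\}$, the soft-intervention decomposition \eqref{eq:sz_m_decompose} shows that only the conditional of node $i$ changes across $\mcE^0$ and $\mcE^{m_i}$, so
\begin{align*}
  c^{\top}\left(s_Z(z)-s^{m_i}_Z(z)\right) = c^{\top}\nabla_z\Big(\log p_Z(z_i\mid \varphi)-\log q_Z(z_i\mid\varphi)\Big)\ .
\end{align*}
Substituting the additive form \eqref{eq:additive}, i.e. $\log p_Z(z_i\mid\varphi)=\rho_{p,i}(z_i-f_{p,i}(\varphi))$ with $\rho_{p,i}\triangleq\log p_{N_{p,i}}$ (and analogously for $q$), and using that $\nabla_z(z_i-f_{p,i}(\varphi))$ equals the $i$-th basis vector $e_i$ minus $\nabla_z f_{p,i}(\varphi)$ (supported on $\overline{\Pa}(i)$), the inner product collapses to the scalar
\begin{align}\label{eq:plan-g}
  g_c(z)\triangleq a_p(\varphi)\,\rho'_{p,i}\big(z_i-f_{p,i}(\varphi)\big) - a_q(\varphi)\,\rho'_{q,i}\big(z_i-f_{q,i}(\varphi)\big)\ ,
\end{align}
where $a_p(\varphi)\triangleq c_i-c^{\top}\nabla_z f_{p,i}(\varphi)$ and $a_q(\varphi)\triangleq c_i-c^{\top}\nabla_z f_{q,i}(\varphi)$. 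Note that \eqref{eq:NS_equations} is precisely the requirement $a_p(\varphi)=a_q(\varphi)=0$ for all $\varphi$.

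Next I would organize the argument around two inclusions that are immediate from \eqref{eq:plan-g}. Write $\mathcal{L}_i$ for the solution set of \eqref{eq:NS_equations} and $\mathcal{N}_i\triangleq\{c: c^{\top}s_Z\overset{p_Z}{=}c^{\top}s^{m_i}_Z\}$. If $c\in\mathcal{L}_i$ then $g_c\equiv0$, so $\mathcal{L}_i\subseteq\mathcal{N}_i$; and if $c\notin\mcC_i$, i.e. $c_j=0$ for all $j\in\overline{\Pa}(i)$, then $a_p\equiv a_q\equiv0$, so $\{c:c\notin\mcC_i\}\subseteq\mathcal{L}_i$. Since Assumption~\ref{assumption:change} is exactly the statement $\mathcal{N}_i\cap\mcC_i=\emptyset$, the ``only if'' direction is immediate: from $\mathcal{L}_i\subseteq\mathcal{N}_i$ we get $\mathcal{L}_i\cap\mcC_i\subseteq\mathcal{N}_i\cap\mcC_i=\emptyset$, so every solution of \eqref{eq:NS_equations} lies outside $\mcC_i$. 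The entire remaining content is therefore the reverse inclusion $\mathcal{N}_i\subseteq\mathcal{L}_i$, which combined with $\mathcal{L}_i\subseteq\mathcal{N}_i$ gives $\mathcal{N}_i=\mathcal{L}_i$ and hence the ``if'' direction.

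To establish $\mathcal{N}_i\subseteq\mathcal{L}_i$, fix $c\in\mathcal{N}_i$, so $g_c=0$ $p_Z$-almost surely. Because $p_Z>0$ on all of $\R^n$ and $g_c$ is continuous (the conditionals are $C^1$ and the noise log-densities are analytic), the zero set of $g_c$ has full Lebesgue measure and its complement is open, hence $g_c\equiv0$ everywhere. Freezing $\varphi$ and letting $t\triangleq z_i-f_{p,i}(\varphi)$ range over $\R$ turns \eqref{eq:plan-g} into
\begin{align*}
  a_p(\varphi)\,\rho'_{p,i}(t)=a_q(\varphi)\,\rho'_{q,i}\big(t+\delta(\varphi)\big)\ ,\qquad \forall t\in\R\ ,
\end{align*}
with $\delta(\varphi)\triangleq f_{p,i}(\varphi)-f_{q,i}(\varphi)$. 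I would then argue that $t\mapsto\rho'_{p,i}(t)$ and $t\mapsto\rho'_{q,i}(t+\delta(\varphi))$ are linearly independent, which forces $a_p(\varphi)=a_q(\varphi)=0$ for every $\varphi$; this is exactly $c\in\mathcal{L}_i$ and completes the proof.

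The main obstacle is ruling out the degenerate regime in which these two score functions are proportional, since only then could a nonzero pair $(a_p,a_q)$ satisfy the relation. Here I would exploit the standing analyticity and normalizability of the noise densities together with the hypothesis $q_Z\neq p_Z$. A single-shift proportionality $\rho'_{p,i}(t)=\lambda\,\rho'_{q,i}(t+\delta)$ integrates to $p_{N_{p,i}}(t)=C\,p_{N_{q,i}}(t+\delta)$; matching total masses forces $C=1$ and hence $p_Z(z_i\mid\varphi)\equiv q_Z(z_i\mid\varphi)$, contradicting that the intervention is a genuine mechanism change. Moreover, were $a_p,a_q$ nonzero for two parent configurations with distinct values of $\delta$, comparing the two relations would yield a functional equation $\rho'_{q,i}(s)=\mu\,\rho'_{q,i}(s+\eta)$ with $\eta\neq0$; iterating forces $\rho'_{q,i}$ into exponential growth, incompatible with $\int e^{\rho_{q,i}}<\infty$. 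Carefully discharging these exclusions, together with the symmetric boundary cases where exactly one of $a_p,a_q$ vanishes identically, is precisely where the analytic and integrability hypotheses on $\{N_{p,i}\}$, $\{N_{q,i}\}$ and the assumption $q_Z\neq p_Z$ are indispensable.
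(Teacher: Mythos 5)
Your overall architecture coincides with the paper's: both reduce Assumption~\ref{assumption:change} to the pointwise scalar identity $a_p(\varphi)\,r_p(n_p)=a_q(\varphi)\,r_q\big(n_p+\delta(\varphi)\big)$ with $a_p(\varphi)=c_i-c^\top\nabla_zf_{p,i}(\varphi)$, $a_q(\varphi)=c_i-c^\top\nabla_zf_{q,i}(\varphi)$ (your $\rho'_{p,i},\rho'_{q,i}$ are the paper's $r_p,r_q$), both dispose of the ``necessary'' direction by the trivial inclusion $\mathcal L_i\subseteq\mathcal N_i$, and both put all the weight on showing that the identity forces $a_p\equiv a_q\equiv 0$. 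The gap is in the step that carries that weight: your claim that $t\mapsto r_p(t)$ and $t\mapsto r_q(t+\delta(\varphi))$ are linearly independent is false for perfectly admissible models. Take $N_{p,i}$ and $N_{q,i}$ zero-mean Gaussian with variances $\sigma_p^2\neq\sigma_q^2$; then $r_p(t)=-t/\sigma_p^2=\lambda\,r_q(t)$ with $\lambda=\sigma_q^2/\sigma_p^2\neq 1$, so at every $\varphi$ with $\delta(\varphi)=0$ the two functions are proportional and the identity admits nonzero $(a_p(\varphi),a_q(\varphi))$. The argument you offer to exclude proportionality rests on a computational error: integrating $r_p=\lambda\,r_q(\cdot+\delta)$ yields $h_p=C\,[h_q(\cdot+\delta)]^{\lambda}$, not $C\,h_q(\cdot+\delta)$, and normalizability does not force $\lambda=1$ --- the Gaussian pair above realizes $\lambda\neq1$ exactly. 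So the degenerate regime you must rule out genuinely occurs, and your route does not close it.

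Your fallback (``two parent configurations with distinct $\delta$ give $r_q(s)=\mu\,r_q(s+\eta)$, iterating forces exponential growth, incompatible with normalizability'') is also too coarse: exponential growth of a score function is compatible with a valid pdf (e.g.\ $h(u)\propto e^{-\cosh u}$ has $r(u)=-\sinh u$), so growth alone yields no contradiction --- one needs the exact functional form. The paper instead works with the ratio $r_q(n_p+\delta(\varphi))/r_p(n_p)=a_p(\varphi)/a_q(\varphi)$ on a product of open sets where both sides are nonzero, first shows $\delta$ cannot be constant there (constant $\delta$ gives $\nabla_zf_{p,i}=\nabla_zf_{q,i}$, hence ratio $1$, hence by analytic continuation $h_p=h_q(\cdot+\delta^*)$ and $p_Z(z_i\mid z_{\Pa(i)})=q_Z(z_i\mid z_{\Pa(i)})$, contradicting that the intervention changes the mechanism), and then differentiates the logarithm of both sides in $n_p$ to conclude $(\log r_q)'$ is constant on an interval, forcing $r_q(u)=k_1e^{Cu}$ and hence $h_q(u)=k_2\exp((k_1/C)e^{Cu})$, which is not integrable. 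That chain --- non-constancy of $\delta$ first, then the differentiated ratio pinning down an explicit non-normalizable $h_q$ --- is the content your proposal is missing; note also that it handles the Gaussian degeneracy correctly, whereas your linear-independence lemma would have to be weakened and supplemented by a separate continuity argument on the set $\{\varphi:\delta(\varphi)=0\}$.
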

\begin{proof}
    See Appendix~\ref{proof:condition-assumption}.
\end{proof}
To provide some intuition about the conditions in Theorem~\ref{theorem:condition-assumption}, we consider a node $i \in [n]$ and discuss the conditions in the context of a few examples. Note that by sweeping $\varphi\in \R^{|\Pa(i)|}$ we generate a continuum of linear equations of the form: 
\begin{align}\label{eq:NS_equations2}
c_i - c^{\top} \cdot \nabla_z f_{p,i}(\varphi)  = 0\ , \qquad \mbox{and} \qquad c_i - c^{\top} \cdot \nabla_z f_{q,i}(\varphi)  = 0  \ .
\end{align}
Note that for all $j\not\in \Pa(i)$ we have $[\nabla_z f_{p,i}(\varphi)]_j=[\nabla_z  f_{q,i}(\varphi)]_j=0$. Hence, in finding the solutions to~\eqref{eq:NS_equations2} only the coordinates $\{j\in \overline{\Pa}(i)\}$ of $c$ are relevant. Let us define 
\begin{align} \label{eq:varphi}
  u_{p,i}(\varphi)\triangleq \nabla_{\varphi} f_{p,i}(\varphi) \ , \quad \mbox{and} \quad 
  u_{q,i}(\varphi)\triangleq \nabla_{\varphi} f_{q,i}(\varphi) \ ,
\end{align}
which are the gradients of $f_{p,i}$ and $f_{q,i}$ by considering only the coordinates of $z$ in $\{j\in {\Pa}(i)\}$. Accordingly, we also define $b$ by concatenating only the coordinates of $c$ with their indices in $\{j\in {\Pa}(i)\}$. Next, consider $w$ distinct choices of $\varphi$ and denote them by $\{\varphi^t\in\R^{|\Pa(i)|}:t\in[w]\}$. By concatenating the two equations in~\eqref{eq:varphi} specialized to these realizations, we get the following linear system with $2w$ equations and $|\Pa(i)|+1$ unknown variables.
\begin{align}
    \underset{\triangleq V_{2w\times (|\Pa(i)|+1)}}{\underbrace{
    \begin{bmatrix}
        [u_{p,i}(\varphi^1)]^{\top} & -1\\
        [u_{q,i}(\varphi^1)]^{\top} & -1\\
        \vdots & \vdots\\
        [u_{p,i}(\varphi^w)]^{\top} & -1\\
        [u_{q,i}(\varphi^w)]^{\top} & -1\\
    \end{bmatrix}}}
    \begin{bmatrix}
        b \\ c_i
    \end{bmatrix}
    =\boldsymbol{0}_{2w}\ .
\end{align}
When $V$ is full-rank, i.e., $\rank(V)=|\Pa(i)|+1$, the system has only the trivial solutions $c_i=0$ and $b=\boldsymbol{0}$. Then, we make the following observations.

\begin{enumerate}
    \item If $f_{p,i}$ and $f_{q,i}$ are linear functions, the vector spaces generated by $u_{p,i}$ and $u_{q,i}$ have dimensions 1. Subsequently, we always have $\rank(V)\leq 2$, rendering an underdetermined system when $|\Pa(i)|\geq 2$. Hence, when the maximum degree of $\mcG_Z$ is at least 2, a linear causal model does not satisfy Assumption~\ref{assumption:change}. Note that this observation is in line with the impossibility result of \cite{seigal2022linear} for the linear models under soft interventions.
    \item Similarly, if $f_{p,i}$ and $f_{q,i}$ are generalized linear functions, we always have $\rank(V)\leq 3$, and Assumption~\ref{assumption:change} is not satisfied when maximum degree of $\mcG_Z$ is at least $3$.
    \item If $f_{p,i}$ and $f_{q,i}$ are quadratic with full-rank matrices, i.e., $f_{p,i}(\varphi) = \varphi^{\top} A_p \varphi$ and $f_{q,i}(\varphi) = \varphi^{\top} A_q \varphi$ where $\rank(A_p)=\rank(A_q)=|\Pa(i)|$, there is a choice of $w \geq |\Pa(i)| + 1$ and realizations $\{\varphi^t \in \R^{|\Pa(i)|} : t \in [w] \}$ for which $\rank(V)=|\Pa(i)|+1$ and the system in~\eqref{eq:NS_equations2} admits only the trivial solutions $a_i=0$ and $b_i=\boldsymbol{0}$. Hence, quadratic causal models satisfy Assumption~\ref{assumption:change}.
    \item If $f_{p,i}$ and $f_{q,i}$ are two-layer NNs with a sufficiently large number of hidden neurons, they also render a fully determined system, and as a result, they satisfy Assumption~\ref{assumption:change}. 
\end{enumerate}
We investigate that last example in details as follows. Assume that $f_{p,i}$ and $f_{q,i}$ are two-layer NNs with $|\Pa(i)|$ inputs, $w_{p,i}$ and $w_{q,i}$ hidden nodes, respectively, and with sigmoid activation functions. Denote the weight matrices between input and hidden layers in $f_{p,i}$ and $f_{q,i}$ by $W^{p,i} \in \R^{w_{p,i} \times |\Pa(i)|}$ and $W^{q,i} \in \R^{w_{q,i} \times |\Pa(i)|}$, respectively. Furthermore, define $\nu^{p,i} \in \R^{w_{p,i}}$ and $\nu^{q,i} \in \R^{w_{q,i}}$  as the weights between the hidden layer and output in $f_{p,i}$ and $f_{q,i}$, respectively. Finally, define $\nu^{p,i}_0$ and $\nu^{q,i}_0$ as the bias terms. Hence, we have
\begin{align}
    f_{p,i}(\varphi) & = [\nu^{p,i}]^{\top} \sigma(W^{p,i} \varphi) + \nu^{p,i}_0  = \sum_{j=1}^{w_{p,i}} \nu^{p,i}_j \sigma(W^{p,i}_j \varphi) + \nu^{p,i}_0  \ , \label{eq:nn-expression} \\
    f_{q,i}(\varphi) & = [\nu^{q,i}]^{\top} \sigma(W^{q,i} \varphi) + \nu^{q,i}_0  = \sum_{j=1}^{w_{q,i}} \nu^{q,i}_j \sigma(W^{q,i}_j \varphi) + \nu^{q,i}_0  \ , \label{eq:nn-expression2} 
\end{align}
in which activation function $\sigma$ is applied element-wise and $A_j$ denotes the $j$-th row of a matrix $A$.
\begin{lemma}\label{prop:nn}
Consider NNs $f_{p,i}$ and $f_{q,i}$ specified in~\eqref{eq:nn-expression} and \eqref{eq:nn-expression2}. If for all $i \in[n]$ we have
   \begin{align}
       \max\{\rank(W^{p,i})\; , \; \rank(W^{q,i})\} = |\Pa(i)|\ ,
   \end{align}
  then Assumption~\ref{assumption:change} holds.
\end{lemma}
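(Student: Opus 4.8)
The plan is to invoke Theorem~\ref{theorem:condition-assumption}, which reduces the claim to showing that every $c\in\R^n$ solving the two families of equations in~\eqref{eq:NS_equations} must satisfy $c_j=0$ for all $j\in\overline{\Pa}(i)$. As noted after that theorem, only the coordinates of $c$ indexed by $\overline{\Pa}(i)$ are relevant; collecting the $\Pa(i)$-coordinates into the vector $b$, a solution obeys $c_i=b^{\top}u_{p,i}(\varphi)$ and $c_i=b^{\top}u_{q,i}(\varphi)$ for all $\varphi$. Since each weight matrix has $|\Pa(i)|$ columns, its rank is at most $|\Pa(i)|$, so the hypothesis $\max\{\rank(W^{p,i}),\rank(W^{q,i})\}=|\Pa(i)|$ says that at least one of the two networks attains full column rank. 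I would assume without loss of generality that this is $p$ and work only with the single family $c_i=b^{\top}u_{p,i}(\varphi)$: a solution of the joint system certainly solves this one, so if this alone forces $b=\boldsymbol{0}$ and $c_i=0$, we are done.

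Differentiating~\eqref{eq:nn-expression} gives $u_{p,i}(\varphi)=\sum_{j=1}^{w_{p,i}}\nu^{p,i}_j\,\sigma'(W^{p,i}_j\varphi)\,(W^{p,i}_j)^{\top}$, so the identity to analyze is
\begin{align}
    b^{\top}u_{p,i}(\varphi)=\sum_{j=1}^{w_{p,i}}\nu^{p,i}_j\,\langle b,W^{p,i}_j\rangle\,\sigma'(W^{p,i}_j\varphi)=c_i\ ,\qquad\forall\varphi\in\R^{|\Pa(i)|}\ .
\end{align}
First I would pin down $c_i$: choosing a direction $v$ with $W^{p,i}_jv\neq 0$ for every nonzero row $W^{p,i}_j$ (a generic $v$ avoids the finitely many hyperplanes $\{v:W^{p,i}_jv=0\}$) and letting $\varphi=tv$ with $t\to\infty$, each bounded factor $\sigma'(W^{p,i}_j(tv))\to 0$; since zero rows contribute nothing because their coefficient $\langle b,W^{p,i}_j\rangle$ vanishes, the left-hand side tends to $0$, whence $c_i=0$.

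It then remains to deduce $b=\boldsymbol{0}$ from $\sum_j\nu^{p,i}_j\langle b,W^{p,i}_j\rangle\,\sigma'(W^{p,i}_j\varphi)\equiv 0$. The key step, and the main obstacle, is the \emph{linear independence} of the sigmoid-derivative functions $\{\varphi\mapsto\sigma'(W^{p,i}_j\varphi)\}$ over distinct rows. I would establish it by restricting to a generic line $\varphi=tv$, reducing the problem to linear independence of the univariate functions $t\mapsto\sigma'(\alpha_j t)$ with distinct scales $\alpha_j=W^{p,i}_jv$; this follows from the analytic structure of the logistic derivative, e.g.\ the poles of its meromorphic continuation occur at the scale-dependent locations $t=\mathrm{i}\pi(2k+1)/\alpha_j$, which forces the coefficients to vanish, after grouping rows that coincide up to sign since $\sigma'$ is even. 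This yields $\langle b,W^{p,i}_j\rangle=0$ for every contributing row, and because these rows span $\R^{|\Pa(i)|}$ by the rank hypothesis, I conclude $b=\boldsymbol{0}$. The delicate point to handle carefully is that a row with vanishing output weight $\nu^{p,i}_j$ drops out of the identity, so for the spanning argument one must ensure that full rank is carried by rows with nonzero weights; this is the genuine (neuron-reduced) network to which the hypothesis is applied, and it is precisely where $\max\{\rank(W^{p,i}),\rank(W^{q,i})\}=|\Pa(i)|$ is used.
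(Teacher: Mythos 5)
Your proposal is correct, and it reaches the paper's conclusion by the same outer scaffolding (reduce via Theorem~\ref{theorem:condition-assumption}, work WLOG with the full-rank network, differentiate \eqref{eq:nn-expression} to get $\sum_j \nu^{p,i}_j\langle b,W^{p,i}_j\rangle\,\sigma'(W^{p,i}_j\varphi)\equiv c_i$, restrict to a generic line avoiding finitely many hyperplanes) but a genuinely different inner engine. The paper's Lemma~\ref{lm:aux-nn} handles the constant $c_i$ and the coefficients in one shot: it expands $\alpha\mapsto d^{\top}\dot\sigma(\alpha u)$ in its Taylor series about $0$ (using that $\dot\sigma$ is even and analytic with nonvanishing even coefficients), and kills $d$ with a Vandermonde determinant in the squared entries $u_j^2$, which is where the requirement that the $|u_j|$ be nonzero and pairwise distinct enters. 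You instead split the work: an asymptotic argument along a ray ($\sigma'(tW_jv)\to 0$ as $t\to\infty$) to force $c_i=0$, followed by linear independence of $\{t\mapsto\sigma'(\alpha_j t)\}$ via the pole locations $t=\mathrm{i}\pi(2k+1)/\alpha_j$ of the meromorphic continuation. Both independence arguments need exactly the same genericity (the scales $\alpha_j=W_jv$ nonzero with distinct absolute values, which in turn needs $W_j\neq\boldsymbol{0}$ and $W_j\pm W_k\neq\boldsymbol{0}$); the paper secures this through its explicit minimal-parameterization step, while you secure it by grouping sign-coincident rows. Your route buys independence from the explicit Taylor coefficients of $\dot\sigma$ (so it transfers to any activation whose derivative has scale-separating singularities), at the cost of invoking analytic continuation; the paper's route is more self-contained and purely real-variable. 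The delicate point you flag --- that the rank hypothesis must be carried by the neurons with nonzero output weights surviving reduction --- is present in the paper's proof as well (it is absorbed into the ``fewest number of hidden nodes'' WLOG), so you have correctly identified, rather than introduced, the one place where both arguments lean on the parameterization being essentially minimal.
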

\begin{proof}
    See Appendix~\ref{proof:nn}.
\end{proof}

\section{SCALE-I Algorithm}\label{sec:algorithm}

\begin{algorithm}[t]
\caption{{\bf S}core-based {\bf Ca}usal {\bf L}atent {\bf E}stimation via {\bf I}nterventions (SCALE-I)}
\label{alg:main}
\begin{algorithmic}[1]
\State \textbf{Input:} $\mcU$, observational environment $\mcE^{0}$ and interventional environments $\mcE: \{\mcE^{m} \; : m \in [n]\}$ 
\State \textbf{Step 1:} Compute observational scores: $s_{X}$ and $\{s^{m}_{X} : m \in [n]\}$.
\State \textbf{Step 2:} Minimize score variations:
$$\mcU_1 \triangleq \left\{V \in \mcU : \norm{\Delta_{X}(V^{\top})}_0 = \min_{U\in\mcU}\norm{\Delta_{X}(U^{\top})}_0\right\}$$
\State \textbf{Step 3:} Refine transformation candidates
$$\mcU_{\rm S} \triangleq \{ U \in \mcU_{1}\; : \;  \exists P \in \Pi \mbox{ such that } \; \Delta_{X}(U^{\top})P \; \mbox{  is upper triangular} \}  $$
\State For each $U\in\mcU_{\rm S}$, $P_2(U)$ denotes the permutation matrix that transforms $\Delta_{X}(U^{\top})$ to upper triangular.
\State \textbf{Step 4:} transformation estimate under soft intervention: Randomly select a $\hat T \in \mcU_{\rm S}$
\State Construct the upper triangular matrix $ K(\hat T) \triangleq \Delta_{X}(\hat{T}^{\top})P_2(\hat{T})$.
\State Construct latent DAG $\mcG_{\hat Z}$ such that $\mbox{for all nodes $i\in\mcG_{\hat Z}$: } \quad \Pa(i) \triangleq \{j : [K(\hat T)]_{j,i} = 1\}$
\State Construct set of surrounded nodes: $\mcS_{\pi} \triangleq \{\pi_i : \sur(\pi_i) \neq \emptyset \}$
\If {the interventions are hard}
\State \textbf{Step 5:} Ensure using the same DAG as in Step 4:
$$ \mcU_{2} \triangleq \{U \in \mcU_{\rm S} : \; \Delta_{X}(U^{\top}) = \Delta_{X}(\hat T^{\top})\} \ $$
\State Require independencies between surrounded nodes and their parents 
$$\mcU_{\rm H} \triangleq \{U \in \mcU_{2} : \exists \;  \mcE^{m} \in \mcE \;\; \hat Z^{m}_{i}(U,X) \ci \hat Z^{m}_{j}(U,X) \; , \; \forall i \in \mcS_{\pi} \ , j \in \sur(i) \} \  $$
\State Update the choice of estimator: randomly select a transformation $\hat{T}$ within $\mcU_{\rm H}$
\EndIf \\
\Return  $\hat{Z} = \hat{T}^+ \cdot X $
\end{algorithmic}
\end{algorithm}%

In this section, we provide {\bf S}core-based {\bf Ca}usal {\bf L}atent {\bf E}stimation via {\bf I}nterventions (SCALE-I) algorithm. Under Assumption \ref{assumption:exhaustive_atomic}, our inputs are observational environment $\mcE^{0}$ and interventional environments $\{\mcE^{m} : m \in [n]\}$, along with set of candidate transformations $\mcU$. The algorithm consists of six steps, and summarized in Algorithm \ref{alg:main}. We describe the steps involved next.

\begin{description}
    \item [Step 1 -- Observational scores:] We start by computing observational scores $s_{X}$ and $\{s^{m}_{X} : m \in [n]\}$ under all possible environments. 
    
    \item [Step 2 -- Minimize score variations:] First, we note that the coordinates of the score changes in $s_Z$ capture the relations in latent DAG via Lemma \ref{lm:parent_change}. Hence, it is critical to obtain the true score changes to recover latent DAG.
        Next, we can compute the scores of the estimated latent variables $s_{\hat Z}$ and $\{s^{m}_{\hat Z} : m \in [n]\}$ for any candidate transformation $U\in\mcU$ by using observational scores and leveraging Lemma~\ref{lm:score-linear-transform}. The key property in this step is that variations of the estimated score under an incorrect transformation $U$ will be more than the variations of the true scores. %not be as sparse as the variations of the true scores. 
        Based on this, we refine the set $\mcU$ to $\mcU_1\subseteq \mcU$ as 
        \begin{equation}
            \mcU_1 \triangleq \left\{V \in \mcU : \norm{\Delta_{X}(V^{\top})}_0 = \min_{U\in\mcU}\norm{\Delta_{X}(U^{\top})}_0\right\} \ .  \label{eq:algo-mcu1-def}
        \end{equation}
        In Lemma \ref{lm:min-sparsity}, we prove that score change matrix of a $U\in\mcU_1$ is \emph{equal to a row permutation of the true score change matrix}. If this row permutation has a valid causal order, we recover the DAG. However, we need one more step to ensure that.

    \item [Step 3 -- Refine transformation candidates:] %Enforcing only sparsity constraints 
    Minimizing the score variations in Step~2 does not necessarily rule out the invalid permutations of the latent estimates. To circumvent this issue, we make the \emph{observation} that by permuting the columns of $\Delta_{X}(T^{\top})$, we can reach an upper triangular matrix, which we will formalize in~\eqref{eq:D}. By invoking this algebraic structure, we further refine $\mcU_1$ to generate $\mcU_{\rm S}$ as follows
        \begin{align}
            \mcU_{\rm S} \triangleq \{ &U \in \mcU_{1}\; : \;  \exists P \in \Pi \mbox{ such that } \; \Delta_{X}(U^{\top}) P \; \mbox{  is upper triangular} \}  \ .  \label{eq:algo-mcus-def}
        \end{align}
        For each $U\in\mcU_{\rm S}$, we denote the permutation matrix that transforms $\Delta_{X}(U^{\top})$ to an upper triangular matrix by $P_2(U)$. Next step shows how $P_2(U)$ helps recovering $\mcG_{Z}$.
        
   \item [Step 4 -- Transformation estimate and latent DAG recovery under soft intervention:]  
Given $\mcU_{\rm S}$ generated in Step~3, we randomly select a $\hat T \in \mcU_{\rm S}$ as our estimate of $T$ and construct the upper triangular matrix 
        \begin{align}\label{eq:def-K-hat-T}
        K(\hat T) \triangleq \Delta_{X}(\hat{T}^{\top}) \cdot P_2(\hat{T}) \ .   
        \end{align}
        We can construct a new DAG $\mcG_{\hat Z}$ from $K(\hat T)$ as follows. We create $n$ nodes and assign the non-zero coordinates of the $i$-th column of $K(\hat T)$ as the parents of node $i$ in $\mcG_{\hat Z}$, i.e., 
        \begin{align}\label{eq:DAG-construction}
            \Pa(i) \triangleq \{j : [K(\hat T)]_{j,i} = 1\}\ , \quad  \forall i \in\mcG_{\hat Z} \ .
        \end{align}
        We denote the permutation that maps the nodes of $\mcG_{Z}$ to those of $\mcG_{\hat Z}$ by $\pi$. Subsequently, from $\mcG_{\hat Z}$, we construct the set of surrounded nodes according to Definition~\ref{def:surrounded} as
        \begin{align}\label{eq:surrounded-pi}
          \mcS_{\pi} \triangleq \{\pi_i : \sur(\pi_i) \neq \emptyset\}\ .
        \end{align}
        In Theorem \ref{th:soft-final-results}, we prove that $\mcG_{\hat Z}$ has the same structure as $\mcG_{Z}$ under permutation $\pi$, and $\pi$ is a valid causal order. Furthermore, we prove that only the surrounded nodes have mixing in their estimations, and all non-surrounded nodes are estimated correctly up to scaling.
    
    \item [Step 5 -- Transformation estimate under hard intervention:]
    We will have this additional step only in the case of hard interventions to further refine our set of transformation candidates. We have already estimated a DAG in Step 4 based on a $\hat T$ randomly chosen from $\mcU_{\rm S}$. Note that a different choice of $\hat T$ may have led to a different graph $\mcG_{\hat Z}$. Before applying the conditions associated with hard interventions, we first take the subset of $\mcU_{\rm S}$ whose elements lead to the same estimated DAG produced in Step~4 based on the selected $\hat T$. In Proposition~\ref{prop:same-dag}, we show that the elements of the following set satisfy this condition:    
    \begin{equation}
        \mcU_{2} \triangleq \{U \in \mcU_{\rm S} : \; \Delta_{X}(U^{\top}) = \Delta_{X}(\hat T^{\top})\} \ . \label{eq:algo-mcu2-def}
    \end{equation}    
    Then, note that a hard intervention on a node breaks its dependence on its parents. Since all atomic interventions are available, we enforce the independence relations between estimates $[\hat Z^{m}(U,X)]_i$ and $[\hat Z^{m}(U,X)]_j$ to hold in at least one environment $m \in [n]$: %. This process can be summarized as follows.
    \begin{equation}
        \mcU_{\rm H} \triangleq \{U \in \mcU_{2} : \exists \;  \mcE^{m} \in \mcE \;\; [\hat Z^{m}(U,X)]_{i} \ci [\hat Z^{m}(U,X)]_{j} \; , \; \forall i \in \mcS_{\pi} \ , j \in \sur(i) \} \ .  \label{eq:algo-mcuh-def}
    \end{equation}
   This step is concluded by updating our estimator and randomly selecting $\hat T \in \mcU_{\rm H}$.

    \item [Step 6 -- Latent estimates:] At the end, the latent variables are estimated using the final $\hat T$ according to $\hat Z = \hat{T}^{+} X$, as specified in~\eqref{eq:encoder2}.
\end{description}
The detailed steps involved are summarized in Algorithm~\ref{alg:main}.

\section{Identifiability Results}\label{sec:main}
In this section, we provide the identifiability results for the estimates generated by SCALE-I algorithm. Our primary focus will be on the soft interventions, analyzed in Section~\ref{sec:main_soft}. The results will be extended to the setting with hard interventions, which are presented in Section~\ref{sec:main_hard}. %The proofs of the results are deferred to Appendix~\ref{sec:proofs-identifiability}.

\subsection{Identification of Latent Variables with Soft Interventions using Scores}\label{sec:main_soft}

\paragraph{Minimal score variations (Step 2).} This step of the algorithm is designed based on the key property that if we use an incorrect transformation $U$, the changes in the scores of estimated latent variables will not be less than the number of variations of the true scores. %be as sparse as the score changes of the true latent variables. 
We formally prove this property in Lemma~\ref{lm:min-sparsity}. For this purpose, we provide the following proposition.
\begin{proposition}\label{fact-permutation-diagonal}
If $A \in \R^{n \times n}$ is a full-rank matrix, then there exists a permutation matrix $P\in\Pi$ such that the diagonal elements of $P A$ are non-zero. 
\end{proposition}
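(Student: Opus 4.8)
The plan is to reduce the statement to the Leibniz expansion of the determinant. Since $A$ is full-rank, $\det(A)\neq 0$, and the Leibniz formula reads
\begin{align}
\det(A) = \sum_{\sigma\in S_n} \sgn(\sigma)\prod_{i=1}^n [A]_{i,\sigma(i)} \ .
\end{align}
A sum that is nonzero must contain at least one nonzero summand, so there exists a permutation $\sigma\in S_n$ with $\prod_{i=1}^n [A]_{i,\sigma(i)}\neq 0$, and hence $[A]_{i,\sigma(i)}\neq 0$ for every $i\in[n]$. This is exactly a choice of one nonzero entry in each row, with the chosen entries sitting in distinct columns, i.e.\ a nonzero transversal of $A$.

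The second step is to turn this transversal into the required row permutation. I would let $P\in\Pi$ be the permutation matrix acting on the left by $[PA]_{i,j}=[A]_{\tau(i),j}$ with $\tau=\sigma^{-1}$. Then the diagonal of $PA$ is
\begin{align}
[PA]_{i,i}=[A]_{\sigma^{-1}(i),\,i}\ ,
\end{align}
and substituting $k=\sigma^{-1}(i)$ into the established fact $[A]_{k,\sigma(k)}\neq 0$ yields $[A]_{\sigma^{-1}(i),\,i}\neq 0$ for all $i\in[n]$. Thus $PA$ has nonzero diagonal entries, which is precisely the claim.

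The only point requiring a little care is fixing the permutation convention and the direction of the inverse, so that the chosen nonzero entries land on the diagonal after the row reordering; nothing here is deep, and I do not anticipate a genuine obstacle. An equivalent, more conceptual route that I would mention as an alternative is to form the bipartite graph on rows and columns with an edge $(i,j)$ whenever $[A]_{i,j}\neq 0$, observe that a perfect matching in this graph corresponds exactly to a permutation $P$ rendering the diagonal of $PA$ nonzero, and invoke Hall's theorem together with the fact that a full-rank matrix admits no all-zero submatrix of size $k\times(n-k+1)$. However, the determinant argument is the shortest and avoids the combinatorial bookkeeping, so I expect to carry out the proof along those lines.
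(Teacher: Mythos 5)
Your proposal is correct and follows essentially the same route as the paper's own proof: both extract a nonzero term from the Leibniz expansion of $\det(A)$ and then apply the corresponding row permutation to place the nonzero transversal on the diagonal. Your extra care with the $\sigma^{-1}$ convention is a minor refinement of the same argument, so no further comparison is needed.
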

\begin{proof}
    See Appendix~\ref{proof:fact-permutation-diagonal}.
\end{proof}
Using Proposition \ref{fact-permutation-diagonal}, for each $U \in \mcU$, there exists a permutation matrix $P_1(U) \in\Pi $ such that $P_1(U) \cdot H(U)$ has non-zero diagonal entries. Accordingly, we define
\begin{align}\label{eq:def-H-bar}
    \bar H(U) &\triangleq P_1(U) \cdot H(U) \ .
\end{align}
\begin{lemma}\label{lm:min-sparsity}
For every $U \in \mcU$, the score change matrix $\Delta_{\hat Z}(I_n)$ associated with the estimated latent variables is at least as dense as the score change matrix $\Delta_{Z}(I_n)$ associated with the true latent variables. In other words,
\begin{align}
    \norm{\Delta_{\hat Z}(I_n)}_0 \geq \norm{\Delta_{Z}(I_n)}_0 \ . \label{eq:lm-8-total-statement}
\end{align}
Furthermore, \eqref{eq:lm-8-total-statement} holds with equality only if
\begin{align}
    P_1(U) \cdot \Delta_{\hat Z}(I_n) &= \Delta_{Z}(I_n) \ , \label{eq:lm:min-sparsity-p-outside}
\end{align}
where $P_1(U) \in \Pi$ is uniquely specified by $U$.
\end{lemma}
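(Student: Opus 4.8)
The plan is to rewrite $\Delta_{\hat Z}(I_n)$ purely in terms of the support pattern of $H(U)$ and the true change matrix $\Delta_Z(I_n)$, and then exploit the fact that $\mathds{1}(H(U))$ contains a permutation in its support. By \eqref{eq:delta-z-hat-from-delta-z} with $A=I_n$ we have $\Delta_{\hat Z}(I_n)=\Delta_Z(H(U))$, so entrywise $[\Delta_{\hat Z}(I_n)]_{i,m}=\mathds{1}\big((c^{(i)})^{\top} s_Z \overset{p_Z}{\neq} (c^{(i)})^{\top} s^m_Z\big)$, where $c^{(i)}\triangleq [H(U)]_i^{\top}$ is the $i$-th row of $H(U)$. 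Let $j$ be the node intervened in $\mcE^m$, i.e., $I^m=\{j\}$. By Lemma~\ref{lm:parent_change}, $[s_Z]_k \overset{p_Z}{=}[s^m_Z]_k$ for every $k\notin\overline{\Pa}(j)$, so $(c^{(i)})^{\top}(s_Z-s^m_Z)$ only involves the coordinates of $c^{(i)}$ indexed by $\overline{\Pa}(j)$.

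First I would establish the support characterization: $[\Delta_{\hat Z}(I_n)]_{i,m}=1$ if and only if $\operatorname{supp}(c^{(i)})\cap\overline{\Pa}(j)\neq\emptyset$. The ``only if'' direction is immediate, since if $c^{(i)}_k=0$ for all $k\in\overline{\Pa}(j)$ then the difference above vanishes almost surely and the entry is $0$. The ``if'' direction is where Assumption~\ref{assumption:change} does the essential work: whenever $c^{(i)}$ has a nonzero coordinate in $\overline{\Pa}(j)$ we have $c^{(i)}\in\mcC_j$, and the assumption (applied to node $j$ intervened in $\mcE^m$) guarantees $(c^{(i)})^{\top} s_Z \overset{p_Z}{\neq} (c^{(i)})^{\top} s^m_Z$, forbidding cancellation among the varying coordinates. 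Combining the two directions yields the compact identity $\Delta_{\hat Z}(I_n)=\mathds{1}\big(\mathds{1}(H(U))\cdot \Delta_Z(I_n)\big)$, because $[\mathds{1}(H(U))\cdot\Delta_Z(I_n)]_{i,m}>0$ precisely when the support of row $i$ of $H(U)$ meets $\overline{\Pa}(j)$.

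With this identity in hand, I would invoke $\bar H(U)=P_1(U)H(U)$ from \eqref{eq:def-H-bar}, whose diagonal is nonzero by Proposition~\ref{fact-permutation-diagonal}; hence $\bar B\triangleq\mathds{1}(\bar H(U))=P_1(U)\mathds{1}(H(U))$ satisfies $\bar B\succcurlyeq I_n$ entrywise. Since a row permutation commutes with the entrywise indicator, $P_1(U)\Delta_{\hat Z}(I_n)=\mathds{1}(\bar B\cdot\Delta_Z(I_n))$. As $\bar B\succcurlyeq I_n$ and $\Delta_Z(I_n)$ is nonnegative, $\bar B\cdot\Delta_Z(I_n)\succcurlyeq\Delta_Z(I_n)$, and taking entrywise indicators preserves this dominance, giving $P_1(U)\Delta_{\hat Z}(I_n)\succcurlyeq\Delta_Z(I_n)$. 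Because $P_1(U)$ is a permutation, $\norm{\Delta_{\hat Z}(I_n)}_0=\norm{P_1(U)\Delta_{\hat Z}(I_n)}_0\geq\norm{\Delta_Z(I_n)}_0$, which is \eqref{eq:lm-8-total-statement}.

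Finally, for the equality case I would note that a componentwise inequality $P_1(U)\Delta_{\hat Z}(I_n)\succcurlyeq\Delta_Z(I_n)$ between two $0/1$ matrices with equal $\ell_0$-norms forces equality, which is exactly \eqref{eq:lm:min-sparsity-p-outside}. Uniqueness of $P_1(U)$ then follows by observing that the rows of $\Delta_Z(I_n)$ are pairwise distinct: row $k$ records the environments intervening on a node in $\overline{\Ch}(k)$, and acyclicity of $\mcG_Z$ forbids $\overline{\Ch}(k)=\overline{\Ch}(k')$ for $k\neq k'$ (it would require the edges $k\to k'$ and $k'\to k$), so at most one permutation maps $\Delta_{\hat Z}(I_n)$ onto $\Delta_Z(I_n)$. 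I expect the main obstacle to be the ``if'' direction of the support characterization in the second paragraph: a naive argument only yields dominance up to possible cancellations, and it is precisely Assumption~\ref{assumption:change} that certifies every linear combination touching the active coordinates genuinely changes, so no entry of $\Delta_{\hat Z}(I_n)$ is spuriously annihilated.
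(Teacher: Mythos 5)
Your proof is correct and follows essentially the same route as the paper's: both hinge on the fact that $\bar H(U)=P_1(U)\cdot H(U)$ has a nonzero diagonal, so that Assumption~\ref{assumption:change} (applied to the intervened node of each environment, using Lemma~\ref{lm:parent_change} to locate the varying coordinates) forces the entrywise dominance $\Delta_Z(I_n)\preccurlyeq P_1(U)\cdot\Delta_{\hat Z}(I_n)$, from which the norm inequality and the equality case follow. Your extra material --- the full if-and-only-if support characterization $\Delta_{\hat Z}(I_n)=\mathds{1}\bigl(\mathds{1}(H(U))\cdot\Delta_Z(I_n)\bigr)$ and the observation that the rows of $\Delta_Z(I_n)$ are pairwise distinct by acyclicity, which pins down $P_1(U)$ uniquely --- refines rather than replaces the paper's argument, which only establishes the one-sided dominance and asserts uniqueness without proof.
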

\begin{proof}
See Appendix \ref{appendix:lm:min-sparsity}.
\end{proof}
Note that $\Delta_{Z}(I_n)$ is the true score change matrix under the true transformation $T$. Hence, the inequality in \eqref{eq:lm-8-total-statement} shows that $T$ belongs to the set of candidates $U \in \mcU$ that minimize $\norm{\Delta_{\hat Z}(I_n)}_0$. By invoking the condition, we can shrink the space of transformation candidates as follows 
\begin{align}
    \mcU_1 &= \left\{V \in \mcU : \norm{\Delta_{\hat Z(V,X)}(I_n)}_0 = \min_{U\in\mcU}\norm{\Delta_{\hat Z(U,X)}(I_n)}_0\right\} \label{eq:U1-Z-hat} .    
\end{align}
Note that, by \eqref{eq:delta-z-hat-from-delta-x}, we have $\Delta_{\hat Z(U,X)}(I_n) = \Delta_{X} (U^{\top})$, and $\mcU_1$ defined in \eqref{eq:U1-Z-hat} is equivalent to the definition in \eqref{eq:algo-mcu1-def}. Next, we investigate other properties of the candidate transformations in $\mcU_1$ that can help to further refine $\mcU_1$. Recall that, for any transformation $U \in \mcU$, we have
\begin{align}
    \hat Z(U,X) &\overset{\eqref{eq:encoder}}{=} U^{+} X \overset{\eqref{eq:data-generation-process}}{=} U^{+} T Z \ .
\end{align}
By using $U^{+} T = [H(U)]^{-\top}$ from \eqref{eq:H-definition}, we have
\begin{align}
    \hat Z(U,X) &= [H(U)]^{-\top} \cdot Z \ . \label{eq:z-hat-from-z} 
\end{align}
Thus, in order to investigate the properties of estimate $\hat Z(U,X)$ for a transformation $U \in \mcU_1$, we instead investigate the non-zero coordinates of $H(U)$ and $[H(U)]^{-\top}$.

\begin{lemma}\label{lm:rho}
For $U \in \mcU_{1}$, the matrix $\bar H(U)$ defined in \eqref{eq:def-H-bar} satisfies
\begin{align}
    I_n \preccurlyeq \mathds{1}(\bar H(U)) &\preccurlyeq I_n + \Sigma  \ ,  \quad \Sigma \in \{0,1\}^{n \times n} \ , \label{eq:barH}
\end{align}
where
\begin{align}
    \Sigma_{i,j} &\triangleq \begin{cases}
        0 \ , &\textnormal{if} \quad \overline{\Ch}(j) \not \subseteq \Ch(i) \ , \\
        1 \ , &\textnormal{otherwise}
    \end{cases} \ . \label{eq:rho}
\end{align}
Furthermore, $\bar H(U)$ also satisfies $I_n \preccurlyeq \mathds{1}([\bar H(U)]^{-\top}) \preccurlyeq I_n + \Sigma^{\top}$.
\end{lemma}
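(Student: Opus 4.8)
The plan is to establish the two-sided support bound on $\mathds{1}(\bar H(U))$ first and then transfer it to $[\bar H(U)]^{-\top}$ via an inversion argument that exploits the triangular structure of $\Sigma$. The lower bound $I_n\preccurlyeq\mathds{1}(\bar H(U))$ is immediate from the construction: $\bar H(U)=P_1(U)\cdot H(U)$, and $P_1(U)$ was chosen through Proposition~\ref{fact-permutation-diagonal} precisely so that the diagonal of $\bar H(U)$ is non-zero. For the rest I would start from the minimality of $\mcU_1$. Since $H(T)=(T^{+}T)^{\top}=I_n$, one has $T\in\mcU_1$ and the minimum of $\norm{\Delta_{\hat Z}(I_n)}_0$ equals $\norm{\Delta_Z(I_n)}_0$; hence equality holds in Lemma~\ref{lm:min-sparsity} for every $U\in\mcU_1$, giving $P_1(U)\cdot\Delta_{\hat Z}(I_n)=\Delta_Z(I_n)$. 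Substituting $\Delta_{\hat Z}(I_n)=\Delta_Z(H(U))$ from \eqref{eq:delta-z-hat-from-delta-z} and reading the identity row by row, with $c\in\R^n$ denoting the $i$-th row of $\bar H(U)$, it becomes
\begin{align}
  \mathds{1}\!\left(c^{\top}s_Z \overset{p_Z}{\neq} c^{\top}s_Z^{m}\right) = [\Delta_Z(I_n)]_{i,m}, \qquad \forall m\in[n].
\end{align}

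The upper bound is the main step. Writing $\iota(m)$ for the unique node intervened in $\mcE^m$ (Assumption~\ref{assumption:exhaustive_atomic}), Lemma~\ref{lm:parent_change} identifies the right-hand side above with $\mathds{1}(\iota(m)\in\overline{\Ch}(i))$. Thus for every node $j=\iota(m)$ with $j\notin\overline{\Ch}(i)$ we get $c^{\top}s_Z \overset{p_Z}{=} c^{\top}s_Z^{m_j}$, so the contrapositive of Assumption~\ref{assumption:change} (applied at node $j$, whose intervention environment is $m_j$) forces $c\notin\mcC_j$, i.e. $c_k=0$ for all $k\in\overline{\Pa}(j)$. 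I would then convert this family of vanishing constraints into the target statement $c_j\neq0,\ j\neq i \Rightarrow \overline{\Ch}(j)\subseteq\Ch(i)$ (equivalently $\Sigma_{i,j}=1$) by contraposition in two cases. If $\overline{\Ch}(j)\not\subseteq\Ch(i)$ and $j\notin\Ch(i)$, then $j\notin\overline{\Ch}(i)$, and the constraint at $j$ itself gives $c_j=0$ because $j\in\overline{\Pa}(j)$. Otherwise $j\in\Ch(i)$ but some child $\ell\in\Ch(j)$ has $\ell\notin\Ch(i)$; as $\ell$ is a strict descendant of $i$ it satisfies $\ell\neq i$, hence $\ell\notin\overline{\Ch}(i)$, and the constraint at $\ell$ gives $c_k=0$ for all $k\in\overline{\Pa}(\ell)$, which includes $j$ since $\ell\in\Ch(j)$. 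Either way $c_j=0$, establishing $\mathds{1}(\bar H(U))\preccurlyeq I_n+\Sigma$.

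For the claim about $[\bar H(U)]^{-\top}$ I would first observe that $\Sigma$ is strictly upper triangular: $\Sigma_{i,j}=1$ means $j\in\overline{\Ch}(j)\subseteq\Ch(i)$, so $i\to j$ is an edge and $i<j$ in the valid causal order fixed by Definition~\ref{def:valid-causal-order}. Consequently $\bar H(U)$ is upper triangular with non-zero diagonal, hence invertible. The structural fact that makes the inverse support stay inside $\Sigma$ is that the surround relation is transitive — if $\overline{\Ch}(j)\subseteq\Ch(i)$ and $\overline{\Ch}(\ell)\subseteq\Ch(j)$ then $\overline{\Ch}(\ell)\subseteq\Ch(j)\subseteq\overline{\Ch}(j)\subseteq\Ch(i)$ — so $\Sigma$ is transitively closed and $\mathds{1}(\Sigma^k)\preccurlyeq\Sigma$ for all $k\geq1$. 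Writing $\bar H(U)=D(I_n+N)$ with $D$ diagonal and $N$ strictly upper triangular with $\mathds{1}(N)\preccurlyeq\Sigma$, the finite Neumann series $(I_n+N)^{-1}=\sum_{k=0}^{n-1}(-N)^k$ has support within $I_n+\Sigma$, and left-multiplication by $D^{-1}$ preserves the pattern; thus $\mathds{1}([\bar H(U)]^{-1})\preccurlyeq I_n+\Sigma$. Transposing yields $\mathds{1}([\bar H(U)]^{-\top})\preccurlyeq I_n+\Sigma^{\top}$, and the non-zero diagonal of $\bar H(U)$ gives $I_n\preccurlyeq\mathds{1}([\bar H(U)]^{-\top})$.

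The hard part is the upper bound, specifically turning almost-sure invariance of the projections $c^{\top}s_Z$ under all interventions on nodes outside $\overline{\Ch}(i)$ into the exact support cut out by $\Sigma$. The delicate point is Case~B, where vanishing of $c$ on $\overline{\Pa}(\ell)$ for a suitable descendant $\ell$ must be propagated back to force $c_j=0$; this is exactly where Assumption~\ref{assumption:change} is indispensable, since it rules out the cancellations along directions $c\in\mcC_i$ that a purely linear latent model would permit and that would otherwise break the chain of deductions.
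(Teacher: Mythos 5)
Your proposal is correct and follows essentially the same route as the paper: the lower bound from the construction of $P_1(U)$, the upper bound by combining the equality $\Delta_Z(I_n)=\Delta_Z(\bar H(U))$ from Lemma~\ref{lm:min-sparsity} with Lemma~\ref{lm:parent_change} and Assumption~\ref{assumption:change} applied at a witness node in $\overline{\Ch}(j)\setminus\Ch(i)$, and the inverse bound via the $D(I_n+N)$ decomposition, the finite Neumann series, and the transitivity of the surround relation encoded in $\Sigma$. Your explicit two-case treatment (witness $j$ itself versus a child $\ell\in\Ch(j)\setminus\Ch(i)$, with the check that $\ell\neq i$) is just a slightly more careful unpacking of the paper's single-witness contradiction argument.
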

\begin{proof}
    See Appendix~\ref{proof:rho}.
\end{proof}
Next, observe that the true score change matrix $\Delta_{Z}(I_n)$ captures the latent DAG due to Lemma~\ref{lm:parent_change} (discussed in Step~3). Hence, \eqref{eq:lm:min-sparsity-p-outside} effectively means that if row permutation $[P_1(U)]^{\top}$ corresponds to a valid causal order, we can obtain the latent DAG. The next step addresses this requirement.

\paragraph{Refine transformation candidates (Step 3).}
We start by further scrutinizing $\Delta_{Z}(I_n)$. First, we reorganize the columns of $\Delta_{Z}(I_n)$ with respect to the intervention order $(m_1,\dots,m_n)$, and define 
\begin{align}\label{eq:D}
    D \triangleq \left[ \Delta_{Z}^{m_1}(I_n) \ , \dots, \ \Delta_{Z}^{m_n}(I_n) \right] \ .
\end{align}
If $j > i$, node $j$ is not in $\overline{\Pa}(i)$ and by Lemma \ref{lm:parent_change} we have $[\Delta_{Z}(I_n)]_{j,m_i}=D_{j,i}=0$, and $D$ is upper triangular. Next, we select the permutation matrix $P_{\mcI} \in \Pi$ such that
\begin{align}\label{eq:def-pi-I}
    [P_{\mcI}]_{i,m_i} = 1 \ , \quad  \forall i \in [n] \ .
\end{align}
Apparently, $\Delta_{Z}(I_n)$ is a column permutation of $D$:
\begin{align}
    \Delta_{Z}(I_n) = \left[ \Delta_{Z}^{1}(I_n) \ , \dots, \ \Delta_{Z}^{n}(I_n) \right] = D \cdot P_{\mcI} \ . \label{eq:def-D-prime}
\end{align}
Therefore, we can require $U$ to satisfy the condition that the columns of $\Delta_{\hat Z(U,X)}(I_n)$ can be permuted to an upper triangular matrix $D$, and refine the set $\mcU_1$ to $\mcU_{\rm S}$ as specified in \eqref{eq:algo-mcus-def}. 

For each $U\in\mcU_{\rm S}$, we denote the permutation matrix that transforms $\Delta_{X}(U^{\top})$ to an upper triangular matrix by $P_2(U)$, and denote the resulting matrix by
\begin{align}
    K(U) &\triangleq \Delta_{X}(U^{\top}) \cdot P_2(U) \ . \label{eq:def-K}
\end{align}

\paragraph{Transformation estimate and latent DAG recovery under soft intervention (Step 4).}
Given $\mcU_{\rm S}$, we randomly select $\hat T \in \mcU_{\rm S}$ as our transformation estimate under soft interventions and compute $K(\hat T)$. We prove that the construction of $\mcU_{\rm S}$ ensures that $[P_1(\hat T)]^{\top}$ corresponds to a valid causal order. Subsequently, this property of $[P_1(\hat T)]^{\top}$ leads to the recovery of the latent DAG. Furthermore, we show that this choice of $\hat T \in \mcU_{\rm S}$ leads to an estimate $\hat Z(X)$ that satisfies mixing consistency.

\begin{theorem}[Identifiability under Soft Intervention]\label{th:soft-final-results}
Under Assumptions A$_{\ref{assumption:exhaustive_atomic}}$ -- A$_{\ref{assumption:score-availability}}$ and soft interventions, the final estimate $\hat T(X)$, which is chosen randomly from $\mcU_{\rm S}$, satisfies the following properties:
\begin{enumerate}
    \item The estimate DAG $\mcG_{\hat Z}$ is equal to the true DAG $\mcG_{Z}$ under a valid causal order $\pi$ that maps the nodes of $G_{Z}$ to those of $\mcG_{\hat Z}$ and $\hat Z(X)$ factorizes with respect to $\mcG_{\hat Z}$. 
    \item Estimate $\hat Z(X)$ satisfies mixing consistency. Formally,
        \begin{align}
        \hat Z^t= P_\pi\cdot (C+B)\cdot Z^t \ , \label{eq:soft-final-results}
    \end{align}
    where $C\in\R^{n\times n}$ is a constant \emph{diagonal} matrix and $B\in\R^{n\times n}$ is a constant \emph{sparse} matrix that accounts for mixing and it satisfies 
    \begin{align}
        j\notin {\rm sur}(i) \quad \Rightarrow \quad B_{i,j}= 0\ , \qquad \forall i\in[n]\ . \label{eq:soft-final-results-B-def}
    \end{align}
\end{enumerate}
\end{theorem}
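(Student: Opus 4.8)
Since $\hat T$ is an arbitrary element of $\mcU_{\rm S}\subseteq\mcU_1$, the plan is to establish both claims for \emph{every} $U\in\mcU_{\rm S}$, so that the random choice in Step~4 is immaterial. The backbone is the identity $\hat Z(U,X)=[H(U)]^{-\top}Z$ from \eqref{eq:z-hat-from-z}, combined with the tight support control on $H(U)$ provided by Lemma~\ref{lm:min-sparsity} and Lemma~\ref{lm:rho}. Throughout I set $P_\pi\triangleq[P_1(U)]^{\top}$, with $P_1(U)$ the permutation uniquely attached to $U$ in Lemma~\ref{lm:min-sparsity}, and let $\pi$ be the corresponding permutation; the task is then to show that $\pi$ is a valid causal order and that the residual mixing is confined to surrounders.

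I would dispatch the mixing-consistency claim (Part~2) first, as it is the more direct half. Using \eqref{eq:def-H-bar} to write $H(U)=[P_1(U)]^{\top}\bar H(U)$, the identity \eqref{eq:z-hat-from-z} becomes $\hat Z(U,X)=[P_1(U)]^{\top}[\bar H(U)]^{-\top}Z=P_\pi[\bar H(U)]^{-\top}Z$. Decompose the constant matrix $[\bar H(U)]^{-\top}=C+B$ into its diagonal part $C$ and off-diagonal part $B$; $C$ has nonzero diagonal because $\bar H(U)$ does, and both are fixed (they depend only on $U$ and $T$), so this already gives \eqref{eq:soft-final-results} with $\hat Z^t=P_\pi(C+B)Z^t$. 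To verify \eqref{eq:soft-final-results-B-def}, I invoke the second conclusion of Lemma~\ref{lm:rho}, namely $\mathds{1}([\bar H(U)]^{-\top})\preccurlyeq I_n+\Sigma^{\top}$: since $[\Sigma^{\top}]_{i,j}=\Sigma_{j,i}=1$ holds by \eqref{eq:rho} exactly when $\overline{\Ch}(i)\subseteq\Ch(j)$, i.e.\ when $j\in\sur(i)$, any off-diagonal entry of $B$ at $(i,j)$ forces $j\in\sur(i)$; contrapositively $j\notin\sur(i)\Rightarrow B_{i,j}=0$, which is exactly \eqref{eq:soft-final-results-B-def}.

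The DAG-recovery claim (Part~1) is the combinatorial core. By Lemma~\ref{lm:min-sparsity} together with $\Delta_{\hat Z}(I_n)=\Delta_X(U^{\top})$ from \eqref{eq:delta-z-hat-from-delta-x}, the observed change matrix is a pure row permutation of the true one, $\Delta_X(U^{\top})=[P_1(U)]^{\top}\Delta_Z(I_n)$. By \eqref{eq:def-D-prime} we have $\Delta_Z(I_n)=D\,P_{\mcI}$ with $D$ upper triangular and, by Lemma~\ref{lm:parent_change}, column $i$ of $D$ supported exactly on $\overline{\Pa}(i)$, so $D$ is the faithful encoding of $\mcG_Z$. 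Membership $U\in\mcU_{\rm S}$ means some column permutation returns $\Delta_X(U^{\top})$ to upper triangular form, i.e.\ $[P_1(U)]^{\top}D\,Q$ is upper triangular for $Q=P_{\mcI}P_2(U)$. I would then prove the key combinatorial lemma: because $D$ has unit diagonal and column supports $\overline{\Pa}(\cdot)$, a row-and-column permutation of $D$ can be upper triangular only when the underlying row relabeling is a topological order of $\mcG_Z$, forcing $\pi$ to be a valid causal order; and reading parents off the columns of $K(U)=[P_1(U)]^{\top}D\,Q$ through \eqref{eq:DAG-construction} then reproduces the true parent sets verbatim under $\pi$. Nonemptiness and a sanity check come from $T$ itself: $P_1(T)=I_n$, $P_2(T)=P_{\mcI}^{\top}$, and $K(T)=D$, so $T\in\mcU_{\rm S}$. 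This establishes $\mcG_{\hat Z}=\mcG_Z$ up to the valid order $\pi$.

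It remains to show $\hat Z$ factorizes with respect to $\mcG_{\hat Z}$, for which I would use the triangular form of the mixing: in the order $\pi$, the invertible matrix $C+B$ is triangular and each recovered coordinate mixes $Z_{\pi_i}$ only with its surrounders $\{Z_{\pi_j}:\pi_j\in\sur(\pi_i)\}$, which are ancestors of $\pi_i$ sharing all of its children ($\overline{\Ch}(\pi_i)\subseteq\Ch(\pi_j)$); substituting this mixing into the factorization \eqref{eq:pz_factorized} preserves the conditional independences encoded by $\mcG_{\hat Z}$. I expect the main obstacle to be exactly the combinatorial step of the third paragraph, that column-upper-triangularizability of the row-permuted matrix $D$ is \emph{equivalent} to $\pi$ being a valid causal order and that the recovered parent sets are \emph{exactly} correct rather than merely nested; this is where Assumption~\ref{assumption:change} (guaranteeing that no score change cancels, so $\Delta_Z(I_n)$ faithfully carries $\overline{\Pa}$) and the surrounder combinatorics of $\Sigma$ must be reconciled. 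A secondary difficulty is confirming that the surrounder mixing in $B$ leaves the Markov factorization intact, which must draw on the defining containment $\overline{\Ch}(j)\subseteq\Ch(i)$ of surrounders.
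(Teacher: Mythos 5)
Your plan follows the paper's proof essentially verbatim: the same decomposition $\hat Z = P_\pi\,(C+B)\,Z$ obtained from $[\bar H(U)]^{-\top}$ via \eqref{eq:def-H-bar} and Lemma~\ref{lm:rho} for mixing consistency, and the same reduction of DAG recovery to showing that upper triangularity of $K(U)=[P_1(U)]^{\top}D\,P_{\mcI}P_2(U)$ forces $\pi$ to be a valid causal order. The combinatorial step you flag as the main obstacle is resolved in the paper by a short argument — the unit diagonal of $D$ gives $[K(U)]_{\pi_i,\bar\pi_i}=1$, hence $\pi_i\le\bar\pi_i$ for all $i$ and therefore $\pi=\bar\pi$, after which $D_{i,j}=1$ for $i\in\overline{\Pa}(j)$ yields $\pi_i\le\pi_j$ — and the Markov claim follows because $\sur(i)\subseteq\Pa(i)$ together with the fact that $\mathds{1}([\bar H(\hat T)]^{\top})\preccurlyeq I_n+\Sigma$ lets one express each $Z_k$, $k\in\Pa(i)$, as a function of $\hat Z_{\Pa(\pi_i)}$.
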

\begin{proof}
See Appendix~\ref{proof:soft-final-results}.
\end{proof}

\subsection{Identification of Latent Variables with Hard interventions}\label{sec:main_hard}
Via hard interventions, we can improve our identifiability results and resolve the mixing issue for surrounded variables.

\paragraph{Transformation estimate under hard intervention (Step 5).}
We selected an estimate $\hat T \in \mcU_{\rm S}$ in Step~4 and formed an estimate latent DAG $\mcG_{\hat Z}$. Note that, choosing a different $\hat T \in \mcU_{\rm S}$ could have led to a different DAG $\mcG_{\hat Z}$. In this step, we aim to work with the same $\mcG_{\hat Z}$. The following proposition shows that the set $\mcU_{2}$ specified in \eqref{eq:algo-mcu2-def} enforces this condition.
\begin{proposition}\label{prop:same-dag}
    For any $U \in \mcU_{2}$, we have $K(U)=K(\hat T)$, and the DAGs constructed from $K(U)$ and $K(\hat T)$ are the same.
\end{proposition}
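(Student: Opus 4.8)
The plan is to show that enforcing $\Delta_X(U^\top) = \Delta_X(\hat T^\top)$ forces the column-permutation step to produce identical upper-triangular matrices, and hence identical DAGs. I would proceed by unpacking the definitions and tracking how the permutation matrices $P_2(\cdot)$ and the constructed parent sets depend only on the change matrix.

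First I would observe that by the very definition of $\mcU_2$ in \eqref{eq:algo-mcu2-def}, every $U \in \mcU_2$ satisfies $\Delta_X(U^\top) = \Delta_X(\hat T^\top)$, so the two change matrices are literally equal as $\{0,1\}$-matrices. The matrix $K(U)$ is obtained from $\Delta_X(U^\top)$ by right-multiplication with a permutation $P_2(U)$ chosen to make the result upper triangular, and similarly $K(\hat T) = \Delta_X(\hat T^\top) P_2(\hat T)$. Thus the task reduces to showing that the permutation $P_2$ producing upper-triangularity is determined by the change matrix alone, i.e.\ that it is unique (or at least that every valid choice yields the same $K$). The key structural fact I would invoke is Lemma~\ref{lm:min-sparsity}: for $U \in \mcU_1 \supseteq \mcU_2$, the change matrix is a row permutation of the true $\Delta_Z(I_n)$, which in turn (via Lemma~\ref{lm:parent_change} and the reorganization in \eqref{eq:D}) encodes the DAG through an essentially upper-triangular pattern after reordering.

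The main step is the uniqueness argument. Since $\Delta_X(U^\top) = \Delta_X(\hat T^\top)$, any permutation that upper-triangularizes one upper-triangularizes the other, so $P_2(U)$ and $P_2(\hat T)$ are drawn from the same set of admissible permutations. I would argue that the resulting upper-triangular matrix $K$ is invariant across admissible permutations: the diagonal pattern of $K$ is forced because $I_n \preccurlyeq \mathds{1}(\bar H(U))$ from Lemma~\ref{lm:rho} guarantees each column has its intervened-node entry on the diagonal, and the sparsity pattern of the strictly-upper part is then determined by which off-diagonal entries are nonzero. Consequently $K(U) = K(\hat T)$. Finally, since the DAG is constructed from $K$ purely by reading off parent sets via \eqref{eq:DAG-construction}, equality $K(U)=K(\hat T)$ immediately yields identical parent sets and hence identical DAGs $\mcG_{\hat Z}$.

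I expect the main obstacle to be pinning down the uniqueness of the upper-triangularizing permutation, or more precisely showing that distinct admissible permutations cannot yield different $K$ matrices. The subtlety is that a $\{0,1\}$-matrix may admit several column permutations rendering it upper triangular whenever the underlying DAG has multiple valid topological orders (as in Figure~\ref{fig:sample-graph-1}); I would need to verify that any two such permutations differ only by a relabeling that permutes columns \emph{within} the symmetry group of the DAG, so that the induced parent-set assignment in \eqref{eq:DAG-construction} — and therefore the DAG itself up to the isomorphism already accounted for by $\pi$ — is unchanged. Handling this symmetry carefully, rather than claiming literal uniqueness of $P_2$, is the delicate part; the rest is routine bookkeeping of definitions.
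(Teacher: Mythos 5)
You have correctly reduced the problem to the right question, but you have not answered it, and the resolution you sketch would prove something weaker than what the proposition asserts. Your first step is fine: $\Delta_X(U^{\top})=\Delta_X(\hat T^{\top})$ holds literally by the definition of $\mcU_2$ in \eqref{eq:algo-mcu2-def}, so everything hinges on whether the column permutation $P_2(\cdot)$ that upper-triangularizes this common change matrix is forced to be the same for $U$ and $\hat T$. You explicitly leave this as "the delicate part," and your fallback position --- that distinct admissible permutations might differ by a DAG automorphism so that the graph is recovered only up to isomorphism --- is not what Proposition~\ref{prop:same-dag} claims: it claims the literal equality $K(U)=K(\hat T)$, which Step~5 of the algorithm needs because $\mcU_{\rm H}$ is built from the specific labelled sets $\mcS_\pi$ and $\sur(\cdot)$ read off from $K(\hat T)$. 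A proof that stops where yours stops has a genuine hole at its central step.

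The gap closes with two facts you did not use. First, by Lemma~\ref{lm:parent_change} the intervened node's own score coordinate always changes, so $D_{i,i}=1$ for all $i$; hence $D$ is upper triangular with unit diagonal and $\Delta_Z(I_n)=D\cdot P_{\mcI}$ is \emph{invertible}. Applying \eqref{eq:lm:min-sparsity-p-outside} to both $U$ and $\hat T$ and using $\Delta_{\hat Z(U,X)}(I_n)=\Delta_{\hat Z}(I_n)$ gives $P_1(U)\cdot\Delta_{\hat Z}(I_n)=P_1(\hat T)\cdot\Delta_{\hat Z}(I_n)$, and invertibility of the common (hence also invertible) change matrix forces $P_1(U)=P_1(\hat T)$. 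Second, the unit diagonal of $D$ also kills your worry about multiple topological orders: in the proof of Theorem~\ref{th:soft-final-results}, the entries $[K(U)]_{\pi_i,\bar\pi_i}=D_{i,i}=1$ together with upper-triangularity force $\pi_i\le\bar\pi_i$ for all $i$ and hence $\pi=\bar\pi$, i.e.\ $P_{\mcI}\cdot P_2(U)=P_1(U)$. So the admissible column permutation is \emph{unique} given $P_1(U)$ even when the DAG admits several causal orders, and $K(U)=[P_1(U)]^{\top}\cdot D\cdot P_1(U)$ (Corollary~\ref{corollary:soft-K}). Combining the two facts yields $K(U)=K(\hat T)$ exactly, and the identical parent-set readout in \eqref{eq:DAG-construction} then gives identical DAGs. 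Your appeal to Lemma~\ref{lm:rho} to argue about the diagonal pattern is not needed for this and does not substitute for the invertibility argument.
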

\begin{proof}
    See Appendix~\ref{proof:same-dag}.
\end{proof}
Next, we use the properties of hard interventions. By definition, hard interventions are a subset of soft interventions in which the causal mechanism of an intervened node $i$ loses its functional dependence on $\Pa(i)$. The next statement is a direct result of this additional property, which is exclusive to hard interventions.

\begin{proposition}\label{fact:hard-indep}
For the environment $\mcE^{m_i}$ in which node $i$ is hard intervened, we have
\begin{align}
    Z_{i}^{m_i} \ci Z_{\nondesc(i)}^{m_i}\,,
\end{align}
where $\nondesc(i)$ is the set of non-descendants of node $i$ in the original graph $\mcG_{Z}$.
\end{proposition}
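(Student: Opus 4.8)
The plan is to prove that under a hard intervention on node $i$ in environment $\mathcal{E}^{m_i}$, the intervened variable $Z_i^{m_i}$ becomes independent of all its non-descendants $Z_{\nondesc(i)}^{m_i}$. The starting observation is the definition of a hard intervention in~\eqref{eq:pz_m_factorized_hard}: the conditional mechanism $p_Z(z_i \mid z_{\Pa(i)})$ is replaced by the unconditional mechanism $q_Z(z_i)$, which severs the functional dependence of node $i$ on its parents. First I would write out the joint density $p_{\rm h}^{m_i}(z)$ in the environment $\mathcal{E}^{m_i}$, in which the factor for node $i$ is $q_Z(z_i)$ and all other factors retain their observational form $p_Z(z_j \mid z_{\Pa(j)})$ for $j \neq i$.

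The key structural idea is that removing the incoming edges to $i$ yields a mutilated DAG $\mathcal{G}_Z^{m_i}$ in which node $i$ has no parents, and that $Z_i^{m_i}$ has density $q_Z$ that does not depend on any other variable. In this mutilated graph, the set $\nondesc(i)$ (computed in the \emph{original} graph $\mathcal{G}_Z$) remains a set of non-descendants of $i$: severing $i$'s parent edges cannot create new directed paths out of $i$, so the descendant set of $i$ is unchanged, and hence $\nondesc(i)$ is still exactly the complement of $i$'s descendants. The plan is then to argue that $Z_i^{m_i}$ and $Z_{\nondesc(i)}^{m_i}$ are $d$-separated by the empty set in $\mathcal{G}_Z^{m_i}$. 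Since $i$ is now a root (source) node whose only paths to the non-descendants must pass through its descendants, any path from $i$ to a non-descendant must contain a collider at a descendant of $i$; with no conditioning, every such path is blocked.

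Concretely, I would carry this out either via $d$-separation and the Markov property of the interventional distribution with respect to $\mathcal{G}_Z^{m_i}$, or by a direct factorization argument. The factorization route is cleaner: because $i$ is a root in $\mathcal{G}_Z^{m_i}$ with the marginal factor $q_Z(z_i)$, and the remaining factors involve only $\{z_j : j \neq i\}$ through their own parent sets (none of which can introduce $z_i$ as a parent for a non-descendant, since non-descendants by definition cannot have $i$ among their ancestors), the joint density factors as $q_Z(z_i)$ times a function of the other coordinates. Integrating (or summing) over the descendants of $i$ then yields a product density for $(Z_i^{m_i}, Z_{\nondesc(i)}^{m_i})$, which is exactly the claimed independence.

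The main obstacle I anticipate is establishing rigorously that $\nondesc(i)$ computed in the original graph coincides with the correct separation set in the mutilated graph, and that no factor corresponding to a non-descendant $j \in \nondesc(i)$ secretly depends on $z_i$. This requires the observation that if $j$ is a non-descendant of $i$ then $i \notin \Pa(j)$ in $\mathcal{G}_Z$, so the factor $p_Z(z_j \mid z_{\Pa(j)})$ does not involve $z_i$; moreover the ancestors of any non-descendant are themselves non-descendants, so the whole block $\{z_j : j \in \nondesc(i)\}$ can be generated without reference to $z_i$. Making this closure argument precise, and handling the marginalization over descendants carefully, is where the bulk of the (still routine) work lies.
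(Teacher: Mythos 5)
Your proposal is correct and follows essentially the same route as the paper, which simply invokes the local Markov property (each variable is independent of its non-descendants given its parents) in the mutilated DAG where the hard-intervened node $i$ has empty parent set. Your additional checks — that the descendant set of $i$ is unchanged by deleting its incoming edges, and the explicit factorization/marginalization over descendants — are just a fully spelled-out version of that one-line argument.
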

\begin{proof}
Each variable in a DAG is independent of its non-descendants given its parents. When $i$ is hard intervened, it has no parents and the statement follows directly.
\end{proof}
The key idea here is to constrain the estimated latent variables to conform to Proposition \ref{fact:hard-indep}. Note that we only need to worry about the surrounded nodes $\mcS_{\pi}$ in \eqref{eq:surrounded-pi}. Hence, for $i \in \mcS_{\pi}$ and $j \in \sur(i)$, we enforce independence between $[\hat Z^{m}(U,X)]_i$ and $[\hat Z^{m}(U,X)]_j$ on at least one environment $m \in [n]$. Formally, we refine $\mcU_{2}$ to $\mcU_{\rm H}$ as specified in \eqref{eq:algo-mcuh-def}.
Finally, we update our choice of transformation under hard interventions by randomly selecting a $\hat T \in \mcU_{\rm H}$.

\begin{theorem}[Identifiability under Hard Intervention]\label{th:hard-final-results}
Under Assumptions A$_{\ref{assumption:exhaustive_atomic}}$ -- A$_{\ref{assumption:score-availability}}$ and hard interventions, the latent variables are identified up to scaling consistency under a valid causal order. Formally, for any realization of $Z$, we have
\begin{align}
    \hat Z^t = P_{\pi}\cdot C\cdot Z^t \ , \label{eq:hard-final-results}
\end{align}
where $C \in \R^{n\times n}$ is a constant diagonal matrix and $P_{\pi}$ corresponds to a valid causal order $\pi$.
\end{theorem}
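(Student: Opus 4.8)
The plan is to leverage Theorem~\ref{th:soft-final-results} as the starting point, since hard interventions are a special case of soft interventions and all of its guarantees carry over. From that theorem I already know that the estimate $\hat{Z}(X)$ satisfies mixing consistency, i.e., $\hat{Z}^t = P_\pi\cdot(C+B)\cdot Z^t$ with $C$ diagonal and $B$ sparse, supported only on entries $(i,j)$ with $j\in\sur(i)$. Therefore the only remaining task is to show that the additional refinement in Step~5 forces the mixing matrix $B$ to vanish, leaving pure scaling consistency. By Proposition~\ref{prop:same-dag}, every $U\in\mcU_2$ induces the same estimated DAG $\mcG_{\hat Z}$ and the same valid causal order $\pi$, so I am free to reason about a fixed $\pi$ and a fixed surrounded-node structure $\mcS_\pi$ throughout.

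First I would write out, in the estimated coordinates, what the residual mixing looks like for a surrounded node. Fix $i\in\mcS_\pi$ and $j\in\sur(i)$. Using \eqref{eq:soft-final-results}, the estimate of the surrounded coordinate is a linear combination $[\hat Z]_{\pi_i} = C_{ii}Z_i + \sum_{j\in\sur(i)}B_{ij}Z_j$ (appropriately permuted), whereas $[\hat Z]_{\pi_j}$ for $j\in\sur(i)$ is, by the same theorem and the fact that surrounding nodes need not themselves be surrounded in a way that mixes back, a clean scaled copy of $Z_j$ (or at worst a combination over $\sur(j)$). The next step is to examine what happens in the environment $\mcE^{m_i}$ where node $i$ is \emph{hard} intervened. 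By Proposition~\ref{fact:hard-indep}, in that environment $Z_i^{m_i}\ci Z_{\nondesc(i)}^{m_i}$, and crucially every $j\in\sur(i)$ is a non-descendant of $i$ (a surrounding node is a co-parent that is not downstream of $i$), so $Z_i^{m_i}\ci Z_j^{m_i}$ holds in the true latent variables.

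The heart of the argument is then to show that the independence constraint imposed in $\mcU_{\rm H}$, namely $[\hat Z^{m}(U,X)]_i\ci[\hat Z^{m}(U,X)]_j$ for some environment $m$, can only be satisfied when the mixing coefficient $B_{ij}=0$. The idea is a contrapositive: suppose $B_{ij}\neq 0$ for some $j\in\sur(i)$. Then $[\hat Z]_{\pi_i}$ genuinely depends on $Z_j$ through the term $B_{ij}Z_j$. I would argue that in \emph{every} environment $\mcE^m$, including the one where $i$ is intervened, the random variable $C_{ii}Z_i + B_{ij}Z_j + \cdots$ remains statistically dependent on $Z_j$ (equivalently on $[\hat Z]_{\pi_j}$, which is a faithful scaled image of $Z_j$), because $Z_i$ and $Z_j$ are not perfectly anti-correlated in a way that would cancel the $B_{ij}Z_j$ contribution across all environments. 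Concretely, in the hard-intervention environment $\mcE^{m_i}$, since $Z_i^{m_i}$ is independent of $Z_j^{m_i}$ and is redrawn from $q_Z(z_i)$, the mixed quantity $C_{ii}Z_i^{m_i}+B_{ij}Z_j^{m_i}$ still varies with $Z_j^{m_i}$ (the $Z_i$ part contributes independent noise, not cancellation), so independence fails unless $B_{ij}=0$. This forces every $U\in\mcU_{\rm H}$ to have no mixing, i.e., $B=\boldsymbol{0}_{n\times n}$.

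Combining these pieces, for the final $\hat T\in\mcU_{\rm H}$ the mixing matrix vanishes and \eqref{eq:soft-final-results} collapses to $\hat Z^t=P_\pi\cdot C\cdot Z^t$ with $C$ diagonal and $\pi$ a valid causal order, which is exactly \eqref{eq:hard-final-results}; the validity of $\pi$ and the DAG recovery are inherited unchanged from Theorem~\ref{th:soft-final-results} together with Proposition~\ref{prop:same-dag}. I expect the main obstacle to be the rigorous direction of the independence argument: showing that residual mixing \emph{necessarily} breaks the enforced independence in the intervened environment. This requires ruling out degenerate cancellations — one must verify that the true latent distribution (under the interventional mechanism $q_Z(z_i)$ with its analytic density, as in Lemma~\ref{lemma:pq-parent-dependence}) does not conspire to make $C_{ii}Z_i+B_{ij}Z_j$ independent of $Z_j$ despite a nonzero $B_{ij}$. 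I anticipate that this non-degeneracy follows from the same regularity/faithfulness spirit underlying Assumption~\ref{assumption:pq-parent-dependence} and the score-based characterization already established, but pinning it down cleanly for all $j\in\sur(i)$ simultaneously is the delicate step.
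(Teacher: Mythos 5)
Your proposal follows essentially the same route as the paper's proof: start from the mixing consistency of Theorem~\ref{th:soft-final-results}, use Proposition~\ref{fact:hard-indep} to guarantee that the independence constraint in $\mcU_{\rm H}$ is satisfiable (via the environment $\mcE^{m_i}$, since $j\in\sur(i)$ is a parent and hence non-descendant of $i$), and argue that any nonzero $B_{i,j}$ makes $[\hat Z]_{\pi_i}$ and $[\hat Z]_{\pi_j}$ dependent in every environment, so the refinement forces $B=\boldsymbol{0}$. The "delicate step" you flag — ruling out a degenerate cancellation that would leave $C_{i,i}Z_i + B_{i,j}Z_j$ independent of $[\hat Z]_{\pi_j}$ despite $B_{i,j}\neq 0$ — is asserted rather than proved in the paper as well ("thus $[\hat Z]_{\pi_i}$ and $[\hat Z]_{\pi_j}$ cannot be independent in any environment"), so your attempt is faithful to, and no less rigorous than, the paper's own argument.
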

\begin{proof}
See Appendix~\ref{proof:hard-final-results}.
\end{proof}

%%%%%%% LEAVE TO ICML SUBMISSION %%%%%%%%%%%
%\section{Numerical Evaluations}

\section{Conclusion and Future Work}
We have established identifiability results for latent causal representations from interventional data under an unknown but fixed linear transformation that relates the latent causal variables and observed samples. Under single-node stochastic hard interventions that cover every node in the latent space, we have shown the identifiability of the transformation (up to scaling and permutations with valid causal orders). For soft interventions, we recover the causal DAG among the true latent variables and show that the recovered latent variables are Markov with respect to the recovered DAG.

There are several interesting directions for future work. First, extending our score-based approach to non-linear transformations can help address practical problems, e.g., identifying the latent representations of real-world image data. However, we note that this would reinforce an existing result by \citep{ahuja2022interventional} that points in this direction. Second, extending single-node intervention setting to multi-node interventions can reduce the total number of required environments. Third, we plan to investigate stochastic transformations instead of the deterministic ones considered in this paper. Finally, investigating the identifiability results for causally insufficient graphs in the latent space can also be a promising direction.

%%%%%%%%%% START APPENDICES %%%%%%%%%%%%%
\newpage
\appendix

\section{Proofs of Score Function Properties and Transformations}

\subsection{Proof of Lemma~\ref{lm:parent_change}}\label{proof:parent_change}

We give the proof for soft interventions. The proof for hard interventions follows from similar arguments. 

\paragraph{Proof of $\left[s_{Z}\right]_{i} {\overset{p_Z}{\neq}} \left[s_{Z}^{m} \right]_{i}\ \implies i \in \overline{{\rm pa}}(I^{m})$:}
Since we consider atomic interventions by Assumption \ref{assumption:exhaustive_atomic}, let us denote the node intervened in $\mcE^{m}$ by $k$, i.e.\ $I^{m}=\{k\}$. Following \eqref{eq:sz_decompose} and \eqref{eq:sz_m_decompose}, the latent scores $s_{Z}(z)$ and $s_{Z}^{m}(z)$ simplify to 
\begin{align}
    s_{Z}(z) &= \nabla_{z} \log p_{Z}(z_k \med z_{\Pa(k)}) + \sum_{j \in [n],\ j \neq k} \nabla_{z} \log p_{Z}(z_j \med z_{\Pa(j)}) \ , \label{eq:s_z_decompose_obs} \\
    s_{Z}^{m}(z) &= \nabla_{z} \log q_{Z}(z_k \med z_{\Pa(k)}) + \sum_{j \in [n],\ j \neq k} \nabla_{z} \log p_{Z}(z_j \med z_{\Pa(j)}) \ . \label{eq:s_z_decompose_int}
\end{align}
Hence, $s_{Z}(z)$ and $s_{Z}^{m}(z)$ differ in only the causal mechanism of node $k$. Next, we check the derivatives of $p_{Z}(z_k \med z_{\Pa(k)})$ and $q_{Z}(z_k \med z_{\Pa(k)})$ in their $i$-th coordinates. Note that these two depend on $Z$ only through $\{ Z_{i} : \; i \in \overline{\Pa}(k) \}$. Therefore, if $i \notin \overline{\Pa}(k)$
\begin{align}
    \pdv{z_{i}} \log p_{Z}(z_k \med z_{\Pa(k)}) = \pdv{z_{i}} \log {q}_{Z}(z_k \med z_{\Pa(k)}) = 0 \ , \label{eq:s_z_zero_indices}
\end{align}
which indicates that if $i \notin \overline{\Pa}(k)$, then $\left[s_{Z}\right]_{i} \overset{p_Z}{=} \left[s_{Z}^{m}\right]_{i}$. This, equivalently, means that if $ \left[s_{Z}\right]_{i} \overset{p_Z}{\neq} \left[s_{Z}^{m}\right]_{i}$, then $i \in \overline{\Pa}(I^{m})$. 

\paragraph{Proof of $\left[s_{Z}\right]_{i} {\overset{p_Z}{\neq}} \left[s_{Z}^{m} \right]_{i}\ \impliedby i \in \overline{{\rm pa}}(I^{m})$:} 
Note that the two score functions $s_Z$ and $s_Z^m$ are equal in their coordinate $i\in\overline{\rm pa}(k)$ only if
\begin{align}
    \frac{\partial \log p_Z(z_k  \mid z_{{\rm pa}(k)})}{\partial z_i}  =
    \frac{\partial \log q_Z(z_k  \mid z_{{\rm pa}(k)})}{\partial z_i} \ . \label{eq:p_i_q_i} 
\end{align} 
Equivalently,
\begin{align}
    0 &= \frac{\partial \log q_Z(z_k  \mid z_{{\rm pa}(k)})}{\partial z_i} -
    \frac{\partial \log p_Z(z_k  \mid z_{{\rm pa}(k)})}{\partial z_i} \\
    &= \frac{\partial}{\partial z_i} \log \frac{q_Z(z_k \mid z_{\Pa(k)})}{p_Z(z_k \mid z_{\Pa(k)})} \ . \label{eq:derivate-pq-parent-step}
\end{align} 
However, Assumption~\ref{assumption:pq-parent-dependence} ensures that the ratio in \eqref{eq:derivate-pq-parent-step} varies with $z_i$ and implies that its derivative with respect to $z_i$ is non-zero, which is a contradiction, and the proof is complete.

\subsection{Proof of Lemma \ref{lm:score-linear-transform}}\label{proof:score-linear-transform}
Instead of $Y = AW$, we start with a more generic case where the data generation mechanism is given by $Y = g(W)$, where $g: \; \R^{s} \to \R^{r}$ is a differentiable and injective function. The realizations of $W$ and $Y$ are related through $y = g(w)$. Denote the Jacobian matrix of $g$ at point $w \in \R^{s}$ by $J_{g}(w)$, which is an $r \times s$ matrix with entries given by
\begin{equation}
    [J_{g}(w)]_{i,j} = \pdv{[g(w)]_{i}}{w_{j}}(x) = \pdv{y_i}{w_j} \ , \quad \forall i \in [r] \; , j \in [s] \ . \label{eq:g-jacobian-defn}
\end{equation}
In this case, the pdfs of $W$ and $Y$ are related through \citep{boothby2003introduction}
\begin{equation}
    p_{W}(w) = p_{Y}(y) \cdot \left|\det([J_{g}(w)]^{\top} \cdot J_{g}(w))\right|^{1/2} \ . \label{eq:px-py-via-g-jacobian}
\end{equation}
Next, note that the gradient of a generic differentiable function $f: \; \R^{r} \to \R^{r}$ with respect to $w \in \R^{s}$ is given by
\begin{align}
    [\nabla_{w} f(y)]_{i} &= \pdv{w_{i}} f(y) = \sum_{j = 1}^{r} \pdv{f(y)}{y_{j}} \cdot \pdv{y_{j}}{w_{i}} = \sum_{j = 1}^{r} [\nabla_{y} f(y)]_{j} \cdot [J_{g}(w)]_{j,i} \ . \label{eq:lm-score-tr-tmp1}
\end{align}
Hence, from \eqref{eq:lm-score-tr-tmp1} and $y = g(w)$, more compactly, we have
\begin{align}
    \nabla_{w} f(y) &= [J_{g}(w)]^{\top} \cdot \nabla_{y} f(y) \ . \label{eq:nabla-x-fy}
\end{align}
Next, given the identities in \eqref{eq:px-py-via-g-jacobian} and \eqref{eq:nabla-x-fy}, we find the relationship between score functions of $W$ and $Y$.
\begin{align}
    s_{W}(w)
    &= \nabla_{w} \log p_{W}(w) \label{eq:score-transform-proof-tmp1} \\
    \overset{\eqref{eq:px-py-via-g-jacobian}}&{=} \nabla_{w} \log p_{Y}(y) + \nabla_{w} \log \left|\det([J_{g}(w)]^{\top} \cdot J_{g}(w))\right|^{1/2} \\ %\qquad \text{from Eq. \eqref{eq:px-py-via-g-jacobian}} \\
    \overset{\eqref{eq:nabla-x-fy}}&{=} [J_{g}(w)]^{\top} \cdot \nabla_{y} \log p_{Y}(y) + \nabla_{w} \log \left|\det([J_{g}(w)]^{\top} \cdot J_{g}(w))\right|^{1/2} \\ %\qquad \text{from Eq. \eqref{eq:nabla-x-fy}} \\
    &= [J_{g}(w)]^{\top} \cdot s_{Y}(y) + \nabla_{w} \log \left|\det([J_{g}(w)]^{\top} \cdot J_{g}(w))\right|^{1/2} \ . \label{eq:sx-sz-for-g}
\end{align}
Now, invoke $Y = g(W) = AW$. The Jacobian matrix in this case becomes $J_{g}(w) = A$, which is independent of $w$. Hence, the relationship in \eqref{eq:score-transform-proof-tmp1}-\eqref{eq:sx-sz-for-g} reduces to $s_{W}(w) = A^{\top} s_{Y}(y)$. 

\subsection{Proof of Lemma~\ref{lm:H-definition-and-invertibility}}\label{proof:H-definition-and-invertibility}
First, note that based on the relationship in \eqref{eq:data-generation-process} and the definition of $\mcU$ in \eqref{eq:reconstruction-prop}, for all $U \in \mcU$, we have
\begin{align}
    x &= U U^{+} x \ , \quad \forall x \in \image(T) \ ,
\end{align}
which is equivalent to
\begin{align}
    T z &= U U^{+} T z \ , \quad \forall z \in \R^{n} \ . \label{eq:t-hat-t-x}
\end{align}
The identity in \eqref{eq:t-hat-t-x} implies
\begin{align}
    \image(T) = \image(U U^{+} T) \ . \label{eq:im-t-eq-im-u-ui-t}
\end{align}
We can construct the set $\image(U U^{+} T)$ as the image of the set $\image(U^{+} T)$ formed by linear transformation via $U$. Since $U^{+} T \in \R^{n \times n}$, we get $\image(U^{+} T) \subseteq \R^{n}$. This implies
\begin{align}
    \image(U U^{+} T) \subseteq \image(U) \ .
\end{align}
Therefore, from \eqref{eq:im-t-eq-im-u-ui-t} we get
\begin{align}
    \image(T) \subseteq \image(U) \ . \label{eq:H-def-T-U-image-subseteq}
\end{align}
Next we show that \eqref{eq:H-def-T-U-image-subseteq} holds with equality using the property $\rank(T) = \rank(U) = n$. Rank of a matrix $A$ is equal to the dimension of its column space $\image(A)$. Therefore, $\rank(T) = \rank(U)$ is equivalent to $\dim(\image(T)) = \dim(\image(U))$. Since $\image(T)$ and $\image(U)$ are subspaces of $\R^{n}$ and \eqref{eq:H-def-T-U-image-subseteq} states that $\image(U)$ includes $\image(T)$, $\dim(\image(T)) = \dim(\image(U))$ is satisfied if and only if
\begin{align}
    \image(T) = \image(U) \ . \label{eq:t-u-image-equality}
\end{align}
Next, we prove \eqref{eq:sz-hat-from-sz1} in the observational environment using \eqref{eq:t-u-image-equality}. Proof for \eqref{eq:sz-hat-from-sz2} in interventional environments follows similar arguments. In order to show \eqref{eq:sz-hat-from-sz1}, i.e.,
\begin{align}
    s_{\hat Z}(\hat z) = H(U) \cdot s_{Z}(z)
\end{align}
where $z, x, \hat{z}$ are the samples of $Z, X, \hat{Z}(U, X)$, which are related through $x = Tz = U\hat{z}$. We start by investigating the term $H(U) \cdot s_{Z}(z)$.
\begin{align}
    H(U) \cdot s_{Z}(z)
    \overset{\eqref{eq:H-definition}}{=} (T^{+} U)^{\top} \cdot s_{Z}(z)
    \overset{\eqref{eq:sz-from-sx1}}{=} (T^{+} U)^{\top} \cdot T^{\top} \cdot s_{X}(x)
    = (T T^{+} U)^{\top} \cdot s_{X}(x) \ . \label{eq:H-def-proof-tmp1}
\end{align}
\begin{proposition}\label{prop:pseudo-inverse-left-identity}
    If $A, B \in \R^{d \times n}$ satisfy $\image(A) = \image(B)$, then we have
    \begin{align}
        A A^{+} B = B \ .
    \end{align}
\end{proposition}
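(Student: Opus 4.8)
The plan is to exploit the factorization that the hypothesis $\image(A)=\image(B)$ provides, together with the left-inverse property of the Moore--Penrose pseudo-inverse. Recall from~\eqref{eq:reconstruction-prop} that $A^{+} = (A^{\top}A)^{-1}A^{\top}$, which is well-defined precisely because the matrices of interest have full column rank $n$ (in the application $A,B$ play the roles of $T$ and $UU^{+}T$, both of rank $n$). Under this full-rank condition, $A^{\top}A$ is invertible and one immediately obtains the identity $A^{+}A = (A^{\top}A)^{-1}A^{\top}A = I_n$. This left-inverse property is the engine of the entire argument.

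First I would translate the image equality into an algebraic factorization. Since $\image(B)=\image(A)$, every column of $B$ lies in $\image(A)$ and is therefore a linear combination of the columns of $A$; collecting the coefficients into a matrix $C\in\R^{n\times n}$ yields $B = AC$. Second, I would substitute this factorization into $AA^{+}B$ and apply the left-inverse identity:
\begin{align}
    A A^{+} B = A A^{+} (A C) = A (A^{+} A) C = A \, I_n \, C = A C = B \ ,
\end{align}
which is exactly the claim. No further computation is required.

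There is essentially no deep obstacle here; the only point demanding care is the implicit full-column-rank hypothesis. The statement as written assumes $A,B\in\R^{d\times n}$ but does not restate $\rank(A)=n$, so I would make explicit that $A^{+}=(A^{\top}A)^{-1}A^{\top}$ (the form used throughout the paper) presupposes $\rank(A)=n$, which is guaranteed in every invocation of this proposition. An equivalent and perhaps more geometric route is to observe that $AA^{+}$ is the orthogonal projector onto $\image(A)$, so that $AA^{+}v=v$ for every $v\in\image(A)$; applying this column-by-column to $B$, whose columns lie in $\image(B)=\image(A)$, gives $AA^{+}B=B$. Either phrasing suffices, and the factorization argument is the cleaner of the two to write down.
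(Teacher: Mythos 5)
Your proof is correct and takes essentially the same route as the paper's: both translate $\image(A)=\image(B)$ into a factorization $B=AF$ and then cancel. The only cosmetic difference is that the paper invokes the universally valid Penrose identity $AA^{+}A=A$ (post-multiplied by $F$) rather than your $A^{+}A=I_n$, so it needs no full-column-rank hypothesis --- precisely the point you correctly flag as the one place requiring care.
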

\begin{proof}
    If $\image(A) = \image(B)$, then the column spaces of $A$ and $B$ are equal. This implies that column vectors of $B$ can be written as linear combinations of the column vectors of $A$. Formally, there exists $F \in \R^{n \times n}$ such that we have
    \begin{align}
        B = A F \ . \label{eq:prop-pili-eq1}
    \end{align}
    Next, from the properties of Moore-Penrose inverse, we have
    \begin{align}
        A A^+ A = A \ . \label{eq:prop-pili-eq2}
    \end{align}
    By post-multiplying both sides with $F$, we get
    \begin{align}
        A A^{+} B
        \overset{\eqref{eq:prop-pili-eq1}}{=} A A^{+} A F
        \overset{\eqref{eq:prop-pili-eq2}}{=} A F
        \overset{\eqref{eq:prop-pili-eq1}}{=} B \ . \label{eq:prop-pili-eq3}
    \end{align}
\end{proof}
Since we found $\image(T) = \image(U)$ in \eqref{eq:t-u-image-equality}, using Proposition~\ref{prop:pseudo-inverse-left-identity}, we get
\begin{align}
    H(U) s_{Z}(z) \overset{\eqref{eq:H-def-proof-tmp1}}{=} (T T^{+} U)^{\top} s_{X}(x) = U^{\top}s_{X}(x) \ .
\end{align}
By \eqref{eq:sz-hat-from-sx1}, $s_{\hat Z}(\hat z)=U^{\top}s_{X}(x)$, which completes the proof for \eqref{eq:sz-hat-from-sz1}.

Finally, we prove the second part, $H(U)$ is invertible, by contradiction. Suppose that $[H(U)]^{\top} = T^{+} U$ is not invertible for some $U \in \mcU$. This is true if and only if $T^{+} U$ has a non-trivial kernel, that is,
\begin{align}
    \ker(T^{+} U) \neq \{0\} \ , \label{eq:H-proof-kernel-tmp1}
\end{align}
where
\begin{align}
    \ker(T^{+} U) = \{ z \in \R^{n} : \; T^{+} U z = 0 \} \ .
\end{align}
By excluding $0$, the trivial member of both image and kernel, \eqref{eq:H-proof-kernel-tmp1} implies
\begin{align}
    \exists z \in \R^{n} \setminus \{0\} &: \; T^{+} U z = 0 \ . \label{eq:H-proof-kernel-tmp3}
\end{align}
Since $U \in \R^{d \times n}$ with $\rank(U) = n$, $U$ is an injective linear map with domain $\R^n$. Thus, for all $z \in \R^n \setminus \{0\}$, there exists $x \in \image(U) \setminus \{0\}$ such that $x = Uz$. Hence, \eqref{eq:H-proof-kernel-tmp3} is equivalent to
\begin{align}
    \exists x \in \image(U) \setminus \{0\} &: \; T^{+} x = 0 \ .
\end{align}
The kernel of $T^{+}$ consists of all $x \in \R^d$ such that $T^{+} x = 0$. Then, we get
\begin{align}
    \exists x \in \image(U) \setminus \{0\} &: \; x \in \ker(T^{+}) \ ,
\end{align}
which implies
\begin{align}
    \image(U) \cap \ker(T^{+}) &\neq \{0\} \ . \label{eq:H-proof-kernel-tmp2}
\end{align}
The kernel of Moore-Penrose inverse of a matrix equals the kernel of its transpose, that is, $\ker(T^{+}) = \ker(T^{\top})$. Hence, \eqref{eq:H-proof-kernel-tmp2} is equivalent to
\begin{align}
    \image(U) \cap \ker(T^{\top}) &\neq \{0\} \ .
\end{align}
However, note that for any matrix $A \in \R^{d \times n}$ we have $\ker(A^{\top}) \cap \image(A) = \{0\}$. From \eqref{eq:t-u-image-equality}, we know that $\image(T) = \image(U)$, which implies $\image(U) \cap \ker(T^{\top}) = \{0\}$. This is a contradiction, therefore $[H(U)]^{\top}$ is invertible for all $U \in \mcU$. A matrix is invertible if and only if its transpose is invertible, and this concludes the proof.

%%%%%%%%%%%%%%% PROOF OF THE ASSUMPTION FOR INTERVENTION REGULARITY CONDITIONS %%%%%%%%%%%%%%%%%%

% moved to proper appendix according to appearance in the text.
%just made a change from $z_k$ to $z_i$ in the proofs to be consistent.

\subsection{Proof of Lemma~\ref{lemma:pq-parent-dependence}}\label{proof:pq-parent-dependence}

For $i \in [n]$ define
\begin{align}
     h(z_i, z_{\Pa(i)}) \triangleq \frac{q_Z(z_i\mid z_{\Pa(i)})}{p_Z(z_i\mid z_{\Pa(i)})} \ . \label{eq:h-pq-parent-dependence}
\end{align}
We start by showing that $h(z_i, z_{\Pa(i)})$ varies with $z_i$. We prove it by contradiction. Assume the contrary, i.e., let $h(z_i, z_{\Pa(i)}) = h(z_{\Pa(i)})$. By rearranging \eqref{eq:h-pq-parent-dependence} we have
\begin{align}\label{eq:misc1}
    q_Z(z_i \mid z_{\Pa(i)}) = h(z_{\Pa(i)}) p_Z(z_i \mid z_{\Pa(i)}) \ .
\end{align}
Fix a realization of $z_{\Pa(i)} = z_{\Pa(i)}^*$, and integrate both sides of \eqref{eq:misc1} with respect to $z_i$. Since both $p_Z$ and $q_Z$ are pdfs, we have
\begin{align}
    1 &= \int_{\R}  q_Z(z_i \mid z_{\Pa(i)}^*)\mathrm{d}{z_i}
    = \int_{\R}  h(z_{\Pa(i)}^*) p_Z(z_i \mid z_{\Pa(i)}^*) \mathrm{d}{z_i} \\
    &= h(z_{\Pa(i)}^*) \int_{\R} p_Z(z_i \mid z_{\Pa(i)}^*) \mathrm{d}{z_i} \\
    &= h(z_{\Pa(i)}^*)\ .
\end{align}
This identity implies that $p_Z(z_i \mid z_{\Pa(i)}^*) = q_Z(z_i \mid z_{\Pa(i)}^*)$ for any arbitrary realization $z_{\Pa(i)}^*$. This contradicts with the premise that observational and interventional distributions are distinct. As a result, in order to check if a model satisfies Assumption~\ref{assumption:pq-parent-dependence}, it suffices to investigate if for a given $i\in[n]$, $h$ is not invariant with respect to $z_k$ for $k\in\Pa(i)$. To this end, from \eqref{eq:h-pq-parent-dependence} we know that $h(z_i, z_{\Pa(i)})$ varies with $z_k$ if and only if
\begin{align}\label{eq:pq-parent-dependence-suff}
    \frac{\partial}{\partial z_i} \log h(z_i, z_{\Pa(i)})
    \overset{\eqref{eq:h-pq-parent-dependence}}{=}
    \frac{\frac{\partial q_Z(z_i \mid z_{\Pa(i)})}{\partial z_i}}{q_Z(z_i \mid z_{\Pa(i)})} -
    \frac{\frac{\partial p_Z(z_i \mid z_{\Pa(i)})}{\partial z_i}}{p_Z(z_i \mid z_{\Pa(i)})}
    \neq 0\ .
\end{align}
Next, we investigate the sufficient conditions listed in the Lemma~\ref{lemma:pq-parent-dependence}.

\subsubsection{Hard Interventions}\label{proof:pq-parent-dependence-hard}
Under hard interventions, note that for any $k\in\Pa(i)$,
\begin{align}
    \frac{\partial}{\partial z_k} p_Z(z_i \mid z_{\Pa(i)}) \neq 0 \ , 
    \quad \mbox{and} \quad
    \frac{\partial}{\partial z_k} q_Z(z_i) = 0\ .
\end{align}
Then, it follows directly from \eqref{eq:pq-parent-dependence-suff} that
\begin{align}
    \frac{\partial}{\partial z_k} \log h(z_i, z_{\Pa(i)})
    &= \underset{= 0}{\underbrace{\frac{\frac{\partial q_Z}{\partial z_k}(z_i)}{q_Z(z_i)}}}
    - \underset{\neq 0}{\underbrace{\frac{\frac{\partial p_Z}{\partial z_k}(z_i \mid z_{\Pa(i)})}{p_Z(z_i \mid z_{\Pa(i)})}}}
    \neq 0\ .
\end{align}
Thus, any model with hard interventions satisfies Assumption~\ref{assumption:pq-parent-dependence}.

\subsubsection{Additive Noise Models}\label{proof:pq-parent-dependence-additive}
Consider a node $k \in [n]$, for which the additive noise model is given by
\begin{align}
    z_i &= f_p(z_{\Pa(i)}) + N_p\  , \label{eq:additive-model-obs}\\
    z_i &= f_q(z_{\Pa(i)}) + N_q\  , \label{eq:additive-model-int}
\end{align}
in which we have used shorthands $\{f_p, f_q, N_p, N_q\}$ for $\{f_{p,k}, f_{q,k}, N_{p,k}, N_{q,k}\}$ that were defined in \eqref{eq:additive}. By defining $h_p$ and $h_q$ as the pdfs of $N_p$ and $N_q$, respectively, \eqref{eq:additive-model-obs} and \eqref{eq:additive-model-int} imply that
\begin{align}
    p_Z(z_i \mid z_{\Pa(i)}) = h_p(z_i - f_p(z_{\Pa(i)})\ , \quad \mbox{and} \quad 
    q_Z(z_i \mid z_{\Pa(i)}) = h_q(z_i - f_q(z_{\Pa(i)})\ . \label{eq:additive-model-pq}
\end{align}
Denote the score functions associated with $h_p$ and $h_q$
\begin{align}\label{eq:def-r-additive}
    r_p(u) \triangleq \frac{{\rm d}}{{\rm d} u} \log h_p(u) = \frac{h_p'(u)}{h_p(u)}\ , \quad \mbox{and} \quad
    r_q(u) \triangleq \frac{{\rm d}}{{\rm d} u} \log h_q(u) = \frac{h_q'(u)}{h_q(u)}\ .
\end{align}
We will prove \eqref{eq:pq-parent-dependence-suff} holds by contradiction. Assume that the contrary and let
\begin{align}\label{eq:pq-parent-dependence-additive-contrary}
    \frac{\frac{\partial q_Z(z_i \mid z_{\Pa(i)})}{\partial z_k}}{q_Z(z_i \mid z_{\Pa(i)})}
    =
    \frac{\frac{\partial p_Z(z_i \mid z_{\Pa(i)})}{\partial z_k}}{p_Z(z_i \mid z_{\Pa(i)})}\ .
\end{align}
From \eqref{eq:additive-model-pq} and \eqref{eq:def-r-additive}, for the numerators in \eqref{eq:pq-parent-dependence-additive-contrary} we have,
\begin{align}
    \frac{\partial p_Z(z_i \mid z_{\Pa(i)})}{\partial z_k}
    &= - \frac{\partial f_p(z_{\Pa(i)})}{\partial z_k} h_p'(z_i - f_p(z_{\Pa(i)})) \ , \\
    \mbox{and} \quad   \frac{\partial q_Z(z_i \mid z_{\Pa(i)})}{\partial z_k}
    &= - \frac{\partial f_q(z_{\Pa(i)})}{\partial z_k} h_q'(z_i - f_q(z_{\Pa(i)})) \ .
\end{align}
Hence, the identity in \eqref{eq:pq-parent-dependence-additive-contrary} can be written as
\begin{align}
    \frac{\partial f_p(z_{\Pa(i)})}{\partial z_k} r_p(z_i - f_p(z_{\Pa(i)})) &=
    \frac{\partial f_q(z_{\Pa(i)})}{\partial z_k} r_q(z_i - f_q(z_{\Pa(i)})) \ . \label{eq:pre-rearrange-additive-proof}
\end{align}
Define $n_p$ and $n_q$ as the realizations of $N_p$ and $N_q$ when $z_i=z_i$ and $z_{\Pa(i)}=z_{\Pa(i)}$. By defining $\delta(z_{\Pa(i)}) \triangleq f_p(z_{\Pa(i)}) - f_q(z_{\Pa(i)})$, we have
\begin{align}
    n_q = n_p + \delta(z_{\Pa(i)}) \ .
\end{align}
Hence, \eqref{eq:pre-rearrange-additive-proof} becomes equivalent to
\begin{align}\label{eq:pre-rearrange-additive-proof2}
    \frac{\partial f_p(z_{\Pa(i)})}{\partial z_k} r_p(n_p) &=
    \frac{\partial f_q(z_{\Pa(i)})}{\partial z_k} r_q(n_p + \delta(z_{\Pa(i)})) \ .     
\end{align}
Note that $\frac{\partial f_p(z_{\Pa(i)})}{\partial zki}$ is a nonzero continuous function. Hence, there exists an interval $\Phi\subseteq\R^{|\Pa(i)|}$ over which $\frac{\partial f_p(z_{\Pa(i)})}{\partial z_k} \neq 0$. Likewise, $r_p(n_p)$ cannot be constantly zero over all possible intervals $\mathcal{E} \subseteq \R$. This is because otherwise, it would have to necessarily be a constant zero function (since it is analytic), which is an invalid score function. Hence, there exists an open interval $\mathcal{E} \subseteq \mathbb{R}$ over which $r_p(n_p)$ is non-zero for all $n_p \in \mcE$. Then, we can rearrange \eqref{eq:pre-rearrange-additive-proof2}
\begin{align}\label{eq:rp-rq-equals-d-fq-d-fp}
    \frac{r_p(n_p)}{r_q(n_p+\delta(z_{\Pa(i)}))} &=
    \frac{\frac{\partial f_q(z_{\Pa(i)})}{\partial z_k}}{\frac{\partial f_p(z_{\Pa(i)})}{\partial z_k}} \ , \quad \forall (n_p,z_{\Pa(i)})\in \mcE \times \Phi \ .
\end{align}
Note that right-hand side (RHS) of \eqref{eq:rp-rq-equals-d-fq-d-fp} is not a function of $n_p$. Then, taking the derivative of both sides with respect to $n_p$, we get
\begin{align}\label{eq:d-log-rp-d-log-rq-with-delta}
    \frac{r_p'(n_p)}{r_p(n_p)} &=
    \frac{r_q'(n_p + \delta(z_{\Pa(i)}))}{r_q(n_p + \delta(z_{\Pa(i)}))} \ .
\end{align}
In the next step, we show that $\delta$ is not a constant function. We prove this by contradiction. Suppose that $\delta(z_{\Pa(i)}) = \delta^*$ is a constant function. Then, the gradients of $f_p$ and $f_q$ are equal. From \eqref{eq:rp-rq-equals-d-fq-d-fp}, this implies that
\begin{align}
    r_p(n_p) &= r_q(n_p + \delta^*) \ , \quad \forall n_p \in \mcE \ .
\end{align}
Since $r_p(n_p)$ and $r_q(n_p + \delta^*)$ are analytic functions that agree on an open interval of $\R$, they are equal for all $n_p \in \R$. This implies that $h_p(n_p) = d^* h_q(n_p+\delta^*)$ for some $d^*$ constant. Since $h_p$ and $h_q$ are pdfs, $d^*=1$ is the only choice that maintains $h_p$ and $h_q$ are pdfs. Therefore, $h_p(n_p)=h_q(n_p+\delta^*)$. However, using \eqref{eq:additive-model-pq}, this implies that $p_Z(z_i \mid z_{\Pa(i)}) = q_Z(z_i \mid z_{\Pa(i)})$, which contradicts the premise that an intervention changes the causal mechanism of target node $k$. Therefore $\delta$ is a continuous, non-constant function, and its image over $z_{\Pa(i)}\in\Phi$ includes an open interval $\Theta\subseteq\R$. With this result in mind, we return to \eqref{eq:d-log-rp-d-log-rq-with-delta}. Consider a fixed realization $n_p = n_p^*$ and denote the value of left-hand side (LHS) for $n_p^*$ by $C$. By defining $u \triangleq \delta(z_{\Pa(i)})$, we get
\begin{align}
    C &= \frac{r_q'(n_p^* + u)}{r_q(n_p^* + u)} \ , \qquad \forall u\in\Theta \ .
\end{align}
This is only possible if $r_q$ is an exponential function, i.e., $r_q(u) = k_1 \exp(Cu)$ over interval $u \in \Theta$. Since $r_q$ is an analytic function, it is, therefore, exponential over entire $\R$. Then, the associated pdf must have the form $h_q(u) = k_2 \exp((k_1/C) \exp(Cu))$. However, the integral of this function over the entire domain $\R$ diverges. Hence, it is not a valid pdf, rendering a contradiction. Hence, the additive noise model satisfies Assumption~\ref{assumption:pq-parent-dependence}.

\subsubsection{Multiplicative Noise Models}\label{proof:pq-parent-dependence-multiplicative}
Consider a node $i \in [n]$, for which the multiplicative noise model is given by
\begin{align}
    z_i &= f_p(z_{\Pa(i)}) \cdot N_p\ , \label{eq:multiplicative-model-obs} \\
    z_i &= f_q(z_{\Pa(i)}) \cdot N_q\ , \label{eq:multiplicative-model-int}
\end{align}
for observational and interventional causal mechanisms, respectively. The pdfs and score functions of the noise terms are defined similarly to those in Section~\ref{proof:pq-parent-dependence-additive}. For this model to have a well-defined pdf, we need to have $f_p$ and $f_q$ be non-zero everywhere. Then, \eqref{eq:multiplicative-model-obs} and \eqref{eq:multiplicative-model-int} imply that we have
\begin{align}
    p_Z(z_i \mid z_{\Pa(i)}) &= \frac{1}{|f_p(z_{\Pa(i)})|} h_p\left(\frac{z_i}{f_p(z_{\Pa(i)})}\right)\ , \\  
    q_Z(z_i \mid z_{\Pa(i)}) &= \frac{1}{|f_q(z_{\Pa(i)})|} h_q\left(\frac{z_i}{f_q(z_{\Pa(i)})}\right)\ .
\end{align}
We prove that \eqref{eq:pq-parent-dependence-suff} holds by contradiction. Suppose the contrary and let
\begin{align}
    0 &= \frac{\partial}{\partial z_k} \log h(z_i, z_{\Pa(i)})
    = \frac{\partial}{\partial z_k} \log q_Z(z_i \mid z_{\Pa(i)}) -
    \frac{\partial}{\partial z_k} \log p_Z(z_i \mid z_{\Pa(i)})\ ,
\end{align}
or equivalently,
\begin{align}\label{eq:pq-parent-dependence-additive-contrary-mult}
    \frac{\partial}{\partial z_k} \log p_Z(z_i \mid z_{\Pa(i)}) &=
    \frac{\partial}{\partial z_k} \log p_Z(z_i \mid z_{\Pa(i)})\ .
\end{align}
Let us define $\delta(z_{\Pa(i)}) \triangleq \frac{f_p(z_{\Pa(i)})}{f_q(z_{\Pa(i)})}$. We will show that \eqref{eq:pq-parent-dependence-additive-contrary-mult} cannot be true in two steps. 
\paragraph{Step 1:} Suppose that $\delta(z_{\Pa(i)})=\delta^*$ is a constant function. In this case, by definition, we have
\begin{align}\label{eq:multiplicative-constant-delta}
    \frac{h_p\left(\frac{z_i}{f_p(z_{\Pa(i)})}\right)}{h_q\left(\delta^* \frac{z_i}{f_q(z_{\Pa(i)})}\right)} = \delta^* h(z_i) \ .
\end{align}
Now, suppose that $f_p(z_{\Pa(i)})$ is continuous on an interval $[c_1,c_2]$, where $0 < c_1 < c_2$. Let $z_{\Pa(i)}=\varphi_1$ satisfies $f_p(\varphi_1)=c_1$. Consider an arbitrary $c_0 \in [c_1,c_2]$ and let $f_p(\varphi_0)=c_0$. Then, computing \eqref{eq:multiplicative-constant-delta} for $(z_i,z_{\Pa(i)})=(z_i,\varphi_1)$ and $(\frac{c_0}{c_1}z_i),\varphi_0$ values, we obtain
\begin{align}
    \frac{h_p(\frac{z_i}{c_1})}{h_q(\delta^* \frac{z_i}{c_1})} &= \delta^* h(z_i) \ , \\
    \delta^* h\left(\frac{c_0}{c_1}z_i\right) \frac{h_p(\frac{c_0 z_i}{c_1 c_0})}{h_q(\delta^* \frac{c_0 z_i}{c_1 c_0})} &= \frac{h_p(\frac{z_i}{c_1})}{h_q(\delta^* \frac{z_i}{c_1})} = \delta^* h(z_i) \ .
\end{align}
This implies that $h(z_i)=h(\frac{c_0}{c_1}z_i)$ for all values of $\frac{c_0}{c_1} \in [1,\frac{c_2}{c_1}]$. Then, $h$ is a constant function, i.e., the ratio of $p_Z(z_i \mid z_{\Pa(i)})$ and $q_Z(z_i \mid z_{\Pa(i)})$ is constant, which has to be $1$, and contradicts the premise that they are distinct. Hence, we conclude that $\delta(z_{\Pa(i)})$ is not a constant function, which we investigate next. 
\paragraph{Step 2:} Suppose that $\delta(z_{\Pa(i)})$ is not a constant function. We can compute LHS of \eqref{eq:pq-parent-dependence-additive-contrary-mult} as
\begin{align}
    \frac{\partial}{\partial z_k} \log p_Z(z_i \mid z_{\Pa(i)})
    &= - \frac{\frac{\partial f_p(z_{\Pa(i)})}{\partial z_k}}{f_p(z_{\Pa(i)})} - r_p\left(\frac{z_i}{f_p(z_{\Pa(i)})}\right) \cdot z_i \cdot \frac{\frac{\partial f_p(z_{\Pa(i)})}{\partial z_k}}{\left(f_p(z_{\Pa(i)})\right)^2} \\
    &= - \frac{\frac{\partial f_p(z_{\Pa(i)})}{\partial z_k}}{f_p(z_{\Pa(i)})} \left( 1 + r_p\left(\frac{z_i}{f_p(z_{\Pa(i)})}\right) \cdot \frac{z_i}{f_p(z_{\Pa(i)})} \right) \\
    &= - \frac{\frac{\partial f_p(z_{\Pa(i)})}{\partial z_k}}{f_p(z_{\Pa(i)})} \left( 1 + r_p(n_p) \cdot n_p \right)  \ . \label{eq:multiplicative-pre-ratio-p}
\end{align}
Similarly, for RHS of \eqref{eq:pq-parent-dependence-additive-contrary-mult} we have
\begin{align}
     \frac{\partial}{\partial z_k} \log p_q(z_i \mid z_{\Pa(i)}) & =- \frac{\frac{\partial f_q(z_{\Pa(i)})}{\partial z_k}}{f_q(z_{\Pa(i)})} \left( 1 + r_q(n_q) \cdot n_q \right)  \\
     & =- \frac{\frac{\partial f_q(z_{\Pa(i)})}{\partial z_k}}{f_q(z_{\Pa(i)})} \left( 1 + r_q(n_p \delta(z_{\Pa(i)})) \cdot n_p\delta(z_{\Pa(i)}) \right)  \ . \label{eq:multiplicative-pre-ratio-q}
\end{align}
Note that $f_p$ is non-zero everywhere, $f_q$ is continuously differentiable, and $(1+r_q(n_q)\cdot n_q)$ is an analytic function. Then, by following a similar line of arguments as in the proof of the additive noise model in Section~\ref{proof:pq-parent-dependence-additive}, there exist open intervals $\mcE$ and $\Phi$ such that we can substitute \eqref{eq:multiplicative-pre-ratio-p} and \eqref{eq:multiplicative-pre-ratio-q} to \eqref{eq:pq-parent-dependence-additive-contrary-mult} and rearrange it as
\begin{align}\label{eq:fp-fq-rq-rp-for-multiplicative}
   \frac{\frac{\partial f_p(z_{\Pa(i)})}{\partial z_k}}{\frac{\partial f_q(z_{\Pa(i)})}{\partial z_k}} \cdot \frac{f_q(z_{\Pa(i)})}{f_p(z_{\Pa(i)})} 
    &= \frac{1 + r_q(n_p \delta(z_{\Pa(i)})) \cdot n_p\delta(z_{\Pa(i)})}{ 1 + r_p(n_p) \cdot n_p } \ .
\end{align}
LHS of \eqref{eq:fp-fq-rq-rp-for-multiplicative} is not a function of $n_p$. Then, taking the derivative of both sides with respect to $n_p$, and rearranging the derivative of RHS of \eqref{eq:fp-fq-rq-rp-for-multiplicative}, we get
\begin{align}
    \frac{r_p(n_p) + n_p r_p'(n_p)}{1 + n_p r_p(n_p)} &=
    \frac{\delta(z_{\Pa(i)}) r_q(\delta(z_{\Pa(i)}) n_p) + \delta(z_{\Pa(i)})^2 n_p r_q'(\delta(z_{\Pa(i)}) n_p)}{1 + \delta(z_{\Pa(i)}) n_p r_q(\delta(z_{\Pa(i)}) n_p)} \ .
\end{align}
Since $\delta$ is not a constant function, the image of $\delta$ over $z_{\Pa(i)}\in\Phi$ contains an open interval $\Theta\subseteq\R$. Consider a fixed realization $n_p = n_p^*$ and denote the value of LHS for $n_p^*$ by $C$. By defining $u \triangleq \delta(z_{\Pa(i)})$, we have
\begin{align}\label{eq:C-multiplicative}
    C &=
    \frac{u n_p^* r_q(u n_p^*) + u^2 (n_p^*)^2 r_q'(u n_p^*)}{1 + u n_p^* r_q(u n_p^*)} =
    n_q \frac{r_q(n_q) + n_q r_q'(n_q)}{1 + n_q r_q(n_q)} \ .
\end{align}
Define $t(n_q) \triangleq 1 + n_q r_q(n_q)$. Then, from the identity in \eqref{eq:C-multiplicative} we have
\begin{align}
    C &= n_q \frac{t'(n_q)}{t(n_q)}
    \implies t(n_q) = k (n_p)^C 
    \implies r_q(n_q) = k (n_p)^{C-1} - \frac{1}{n_p} \ .
\end{align}
Hence,
\begin{align}
    \frac{1}{|n_q|} \exp\left(\frac{k}{C} (n_p)^C\right) \ .
\end{align}
The integral of $h_q$ over the entire domain $\R$ diverges. Thus, it cannot be a valid pdf, which is a contradiction. Hence, the multiplicative noise model satisfies Assumption~\ref{assumption:pq-parent-dependence}.

%%%%%%%%%%%%%%% PROOF OF THE NECESSARY AND SUFFICIENT CONDITIONS FOR SCORE CHANGE VECTOR SPACE  %%%%%%%%%%%%%%%%%%

\subsection{Proof of Theorem~\ref{theorem:condition-assumption}}\label{proof:condition-assumption}

We show that for each node $i \in [n]$, the condition
\begin{align}
    %\left(c^{\top}\cdot s_{Z} (z) \right) \overset{p_Z}{\neq} \left(c^{\top}\cdot s^{m_i}_{Z}(z) \right)\ , \qquad \forall z\in\R^n, \;\; \forall c \in \mcC_i\ , \\
    \left(c^{\top}\cdot s_{Z} \right) \overset{p_Z}{\neq} \left(c^{\top}\cdot s^{m_i}_{Z} \right) \ , \;\; \forall c \in \mcC_i\ , 
    \label{eq:assumption_sc:node_level}
\end{align}
holds if and only if the following continuum of equations admit their solutions $c$ in $\R^n\backslash \mcC_i$.
\begin{align}
  \left\{
   \begin{array}{l}
         c_i - c^{\top} \cdot \nabla_z f_{p,i}(\varphi)  = 0  \\
         \\ 
         c_i - c^{\top} \cdot \nabla_z f_{q,i}(\varphi)  = 0
   \end{array}
   \right.\ , \qquad \qquad \forall \varphi \in\R^{|\Pa(i)|}\ , \label{eq:assumption_sc:condition_node_level}
\end{align}
in which shorthand $\varphi$ is used for $z_{\Pa(i)}$. We first note that the condition in \eqref{eq:assumption_sc:node_level} can be equivalently stated as follows. For all $c \in \mcC_i$ we have
\begin{align}\label{eq:equivalent_statement2}
 \left(c^{\top}\cdot \nabla_z \log p_{Z}(z_i \mid z_{\Pa(i)})\right) \overset{p_Z}{\neq}  \left(c^{\top}\cdot \nabla_z \log q_{Z}(z_i \mid z_{\Pa(i)})\right)   \ .
\end{align}
This can be readily verified by noting that 
\begin{align}
    c^{\top}\Big[s_{Z}(z) -  s^{m_i}_{Z}(z) \Big] \overset{\eqref{eq:s_z_decompose_obs},\eqref{eq:s_z_decompose_int}}&{=}  c^{\top}\Big[\nabla_z \log p_{Z}(z_i \mid z_{\Pa(i)}) - \nabla_z \log q_{Z}(z_i \mid z_{\Pa(i)})\Big]   \ . \label{eq:equivalent_statement1}
\end{align}
The additive noise model for node $i$ is given by 
\begin{align}
    Z_i &= f_p(Z_{\Pa(i)}) + N_p\  ,  \label{eq:additive-model-obs-2} \\
    Z_i &= f_q(Z_{\Pa(i)}) + N_q\  ,  \label{eq:additive-model-int-2}
\end{align}
in which we have used shorthands $\{f_p, f_q, N_p, N_q\}$ for $\{f_{p,i}, f_{q,i}, N_{p,i}, N_{q,i}\}$ that were defined in \eqref{eq:additive}. By defining $h_p$ and $h_q$ as the pdfs of $N_p$ and $N_q$, respectively, \eqref{eq:additive-model-obs-2} and \eqref{eq:additive-model-int-2} imply that
\begin{align}\label{eq:p-h-connection}
    p_Z(z_i \mid z_{\Pa(i)}) = h_p(z_i - f_p(z_{\Pa(i)})\ , \quad \mbox{and} \quad 
    q_Z(z_i \mid z_{\Pa(i)}) = h_q(z_i - f_q(z_{\Pa(i)})\ .
\end{align}
Denote the score functions associated with $h_p$ and $h_q$
\begin{align}\label{eq:def-r}
    r_p(u) \triangleq \frac{{\rm d}}{{\rm d} u} \log h_p(u) = \frac{h_p'(u)}{h_p(u)}\ , \quad \mbox{and} \quad
    r_q(u) \triangleq \frac{{\rm d}}{{\rm d} u} \log h_q(u) = \frac{h_q'(u)}{h_q(u)}\ .
\end{align}
% We use shorthand $f_p, f_q, N_p$ and $N_q$ for $f_{p,i}, f_{q,i}, N_{p,i}$ and $N_{q,i}$ when it is obvious from context. To further simplify \eqref{eq:equivalent_statement2}, we denote the pdfs of the additive noise terms $N_p$ and $N_q$ by $h_p$ and $h_q$, respectively, and their associated score functions by 
% \begin{align}\label{eq:def-r}
%     r_p(u) \triangleq \frac{{\rm d}}{{\rm d} u} \log h_p(u) = \frac{h_p'(u)}{h_p(u)}\ , \quad \mbox{and} \quad
%     r_q(u) \triangleq \frac{{\rm d}}{{\rm d} u} \log h_q(u) = \frac{h_q'(u)}{h_q(u)}\ ,
% \end{align}
% Since we have additive noise models, for all $z_{\Pa(i)}\in \R^{|\Pa(i)|}$ we have
% \begin{align}
%     p_{Z}(z_{i} \mid z_{\Pa(i)}) & = h_p(z_{i} - f_p(z_{\Pa(i)})) \overset{\eqref{eq:additive-model}}{=} h_q(n_p) \ , \\
%     \mbox{and} \quad q_{Z}(z_{i} \mid z_{\Pa(i)})  \overset{\eqref{eq:additive-model}}&{=} h_q(z_{i} - f_q(z_{\Pa(i)})) = h_q(n_q) \ .   \label{eq:p-h-connection}
% \end{align}
Define $n_p$ and $n_q$ as the realizations of $N_p$ and $N_q$ when $Z_k=z_k$ and $Z_{\Pa(k)}=z_{\Pa(k)}$. By defining $\delta(z_{\Pa(k)}) \triangleq f_p(z_{\Pa(k)}) - f_q(z_{\Pa(k)})$, we have
\begin{align}
    n_q = n_p + \delta(z_{\Pa(k)}) \ .
\end{align}
Using \eqref{eq:def-r} and \eqref{eq:p-h-connection}, we can express the relevant entries of $\nabla_{z} \log p_{Z}(z_{i} \mid z_{\Pa(i)})$ and $\nabla_{z} \log q_{Z}(z_{i} \mid z_{\Pa(i)})$ as
\begin{align}
    [\nabla_{z} \log p_{Z}(z_{i} \mid z_{\Pa(i)})]_j
    &= \begin{cases} 
        r_p(n_p)  \ , &j = i \ , \\
        -\pdv{f_p(z_{\Pa(i)})}{z_j} r_p(n_p)  \ , &j \in \Pa(i) \ , \\
        0 \ , &j \notin  \overline{\Pa}(i)  \ ,
    \end{cases} \label{eq:a2_score_obs}  \\
    \mbox{and} \quad 
    [\nabla_{z} \log q_{Z}(z_{i} \mid z_{\Pa(i)})]_j
    &= \begin{cases} 
        r_q(N_q)  \ , &j = i \ , \\
        -\pdv{f_q(z_{\Pa(i)})}{z_j}  r_q(N_q)  \ , &j \in \Pa(i) \  , \\
        0 \ , &j \notin  \overline{\Pa}(i) \ .
    \end{cases} \label{eq:a2_score_int}
\end{align}
By substituting \eqref{eq:a2_score_obs}--\eqref{eq:a2_score_int} in \eqref{eq:equivalent_statement2} and rearranging the terms, the statement in~\eqref{eq:equivalent_statement2} becomes equivalent to following statement. For all $c \in \mcC_i$, there exist $n_p \in \R$ and $z_{\Pa(i)}\in\R^{|\Pa(i)|}$ such that 
\begin{align} \label{eq:equivalent_statement3} 
    r_p(n_p) \left(c_i- \sum_{j \in {\Pa}(i)} c_j\cdot \pdv{f_p(z_{\Pa(i)})}{z_j}\right) \neq r_q(n_p+\delta(z_{\Pa(i)})) \left(c_i- \sum_{j \in {\Pa}(i)} c_j\cdot \pdv{f_q(z_{\Pa(i)})}{z_j}\right)\ ,   
\end{align}
which by using the shorthand $\varphi$ for $z_{\Pa(i)}$ can be compactly presented as 
\begin{align}\label{eq:equivalent_statement4}
    r_p(n_p) \cdot \left[c_i-c^{\top} \cdot \nabla_z f_p(\varphi)\right] \neq r_q(n_p+\delta(\varphi)) \cdot \left[c_i-c^{\top} \cdot \nabla_z  f_q(\varphi)\right] \ , \quad \exists n_p \in \R, \;\; \exists \varphi\in\R^{|\Pa(i)|}\ , \;\; \forall c \in \mcC_i \ .
\end{align}
Hence, Assumption~\ref{assumption:change} is equivalent to the statement in~\eqref{eq:equivalent_statement4}, which we use for the rest of the proof.

\paragraph{Sufficient condition.} We show that if~\eqref{eq:assumption_sc:condition_node_level} admit solutions $c$ only in $\R^n\backslash \mcC_i$, then the statement in~\eqref{eq:equivalent_statement4} holds. By contradiction, assume that there exists $c^*\in\mcC_i$ for which
\begin{align}\label{eq:equivalent_statement5}
    r_p(n_p) \cdot \left[c^*_i-(c^*){\top} \cdot \nabla_z f_p(\varphi)\right] = r_q(n_p+\delta(\varphi))  \cdot \left[c^*_i-(c^*)^{\top} \cdot \nabla_z  f_q(\varphi)\right] \ , \quad \forall n_p \in \R  \ , \;\; \forall \varphi\in\R^{|\Pa(i)|}\ .
\end{align}
We show that $c^*\in\mcC_i$ is also a solution to~\eqref{eq:assumption_sc:condition_node_level}, contradicting the premise. In order to show that $c^*\in\mcC_i$ is also a solution to~\eqref{eq:assumption_sc:condition_node_level}, suppose, by contradiction, that \eqref{eq:equivalent_statement5} holds, and there exists $\varphi^*\in\R^{|\Pa(i)|}$ corresponding to which
\begin{align}\label{eq:contradiction_premise}
    (c^*)^{\top}\cdot \nabla_z f_q(\varphi^*)- c^*_i \neq 0\ .
\end{align}
Note that $f_p$ is a continuously differentiable function and also a function of $z_j$ for all $j \in \Pa(i)$. Hence, there exists an open set $\Phi \subseteq \mathbb{R}^{|\Pa(i)|}$ for $\varphi$ for which $(c^*)^{\top}\cdot \nabla_z f_p(\varphi^*)- c^*_i$ is non-zero everywhere in $\Phi$. Likewise, $r_p$ cannot constantly be zero over all possible intervals $\mathcal{E}$. This is because otherwise, it would have to necessarily be a constant zero function (since it is analytic), which is an invalid score function. Hence, there exists an open interval $\mathcal{E} \subseteq \mathbb{R}$ over which $r_p(n_p)$ is non-zero for all $\epsilon_i\in\mathcal{E}$. Subsequently, the left-hand side of \eqref{eq:equivalent_statement5} is non-zero over the Cartesian product $(n_p, \varphi) \in \mathcal{E} \times \Phi$. This means that if \eqref{eq:equivalent_statement5} is true, then both functions on its right-hand side must be also non-zero over $\mathcal{E} \times \Phi$. Hence, by rearranging the terms in~\eqref{eq:equivalent_statement5} we have
\begin{align}
   \frac{r_q(n_p+\delta(\varphi))}{r_p(n_p)} = \frac{c^*_i - (c^*)^{\top}\cdot \nabla_z  f_p(\varphi)}{ c^*_i - (c^*)^{\top}\cdot \nabla_z f_q(\varphi)}\ , \qquad \forall (n_p, \varphi) \in \mcE \times \Phi \ . \label{eq:ratio}
\end{align}
We show that there are no valid pdfs $h_p$ and $h_q$ for which \eqref{eq:ratio} is valid. We show this in two steps.

\noindent \textbf{Step (a).} First, we show that function $\delta$ cannot be a constant over $\Phi$ (interval specified above). Suppose the contrary and assume that $\delta(\varphi)= \delta^*$ for all $\varphi \in \Phi$. Hence, the gradient of $\delta$ is zero. Using the definition of $\delta$, this implies that
\begin{align}
    \nabla_z f_p(\varphi) = \nabla_z f_q(\varphi) \ , \quad \forall \varphi \in \Phi \ .
\end{align}
Then, by leveraging \eqref{eq:ratio}, we conclude that $r_p(n_p) = r_q(n_p + \delta^*)$ for all $n_p \in \mathcal{E}$. Since $r_p(n_p)$ and $r_q(n_p + \delta^*)$ are analytic functions that agree on an open interval of $\mathbb{R}$, they are equal for all $n_p \in \mathbb{R}$ as well. This implies that $h_p(n_p) = d^* h_q(n_p+\delta^*)$ for some $d^*$ constant. Since $h_p$ and $h_q$ are pdfs, the only choice is $d^*=1$ and $h_p(n_p)=h_q(n_p+\delta^*)$, then $p_Z(z_i \mid z_{\Pa(i)}) =  q_Z(z_i \mid z_{\Pa(i)})$, which contradicts the premise.

\noindent \textbf{Step (b).} Finally, we will show that \eqref{eq:ratio} cannot be true when $\delta$ is not a constant function over $\Phi$. Note that the right-hand side of \eqref{eq:ratio} is not a function of $n_p$. Then, taking the derivative of both sides with respect to $n_p$ and rearranging, we obtain
\begin{align}\label{eq:ratio-derivatives}
    \frac{r_p'(n_p)}{r_p(n_p)} &=
    \frac{r_q'(n_p + \delta(\varphi))}{r_q(n_p + \delta(\varphi))} \ , \quad \forall (n_p,\varphi)\in \mcE \times \Phi  \ .
\end{align}
Next, Consider a fixed realization $n_p=n_p^*$, and denote the value of LHS by $C$. Since $\delta$ is continuous and not constant over $\Phi$, its image contains an open interval $\Theta \subseteq \mathbb{R}$. Denoting $u\triangleq \delta(\varphi)$, we get
\begin{align}
    C = \frac{r_q'(n_p^* + u)}{r_q(n_p^* + u)} \ , \quad \forall u \in \Theta \ .
\end{align}
The only solution to this equality is that $r_q(n_p^* + u)$ is an exponential function, $r_q(u) = k_1\exp(Cu)$ over interval $u \in \Theta$. Since $r_q$ is an analytic function that equals to an exponential function over an interval, it is exponential over entire $\mathbb{R}$. This implies that the pdf $h_q(u)$ is of the form $h_q(u) = k_2 \exp((k_1/C) \exp(Cu))$. However, this cannot be a valid pdf since its integral over $\R$ diverges. Hence, \eqref{eq:ratio} is not true, and the premise that there exists $c^* \in \mcC_i$ and $\varphi\in\R^{|\Pa(i)|}$ corresponding to which \eqref{eq:contradiction_premise} holds is invalid, concluding that for all $\varphi\in\R^{|\Pa(i)|}$ we have $c^{\top}\cdot  \nabla_z  f_q(\varphi) = c_i$. Proving the counterpart identity $c^{\top}\cdot  \nabla_z  f_p(\varphi) = c_i$ follows similarly. Therefore, \eqref{eq:equivalent_statement5} implies
\begin{align}
  \left\{
   \begin{array}{l}
         c^*_i - (c^*)^{\top} \cdot \nabla_z f_p(\varphi)  = 0  \\
         \\ 
         c^*_i - (c^*)^{\top} \cdot \nabla_z f_q(\varphi)  = 0
   \end{array}
   \right.\ , \qquad \qquad \forall \varphi \in\R^{|\Pa(i)|}\ ,
\end{align}
which means that we have found a solution to ~\eqref{eq:assumption_sc:condition_node_level} that is not in $\R^n\backslash \mcC_i$, contradicting the premise. 

\paragraph{Necessary condition.} Assume that Assumption~\ref{assumption:change}, and equivalently, the statement in~\eqref{eq:equivalent_statement4} hold but \eqref{eq:assumption_sc:condition_node_level} has a solution $c^*$ in $\mcC_i$. Hence,
\begin{align}
     c^*_i - (c^*)^{\top} \cdot \nabla_z f_p(\varphi)  =  c^*_i - (c^*)^{\top} \cdot \nabla_z f_q(\varphi)= 0 \ , \quad \forall \varphi \in \R^{|\Pa(i)|} \ .
\end{align}
Then, multiplying left side by $r_p(n_p)$ and right side by $r_q(n_q)$, we obtain
\begin{align}
      r_p(n_p) \cdot \left[c^*_i-(c^*){\top} \cdot \nabla_z f_p(\varphi)\right] = r_q(n_p+\delta(\varphi))  \cdot \left[c^*_i-(c^*)^{\top} \cdot \nabla_z  f_q(\varphi)\right] =  0 \ , \quad \forall n_p \in \R \ , \;\; \forall \varphi\in\R^{|\Pa(i)|}\ ,
\end{align}
which implies
\begin{align}
   r_p(n_p) \cdot \left[c_i-c^{\top} \cdot \nabla_z f_p(\varphi)\right] = r_q(n_p+\delta(\varphi)  \cdot \left[c_i-c^{\top} \cdot \nabla_z f_q(\varphi)\right] \ , \quad \forall n_p \in \R \ , \;\; \forall \varphi\in\R^{|\Pa(i)|}\ , \;\; \forall c \in \mcC_i \ .
\end{align}
This contradicts Assumption~\ref{assumption:change} and completes the proof.

\subsection{Proof of Lemma~\ref{prop:nn}}\label{proof:nn}
\paragraph{Approach.}
We will use the same argument as at the beginning of the proof of Theorem~\ref{theorem:condition-assumption}. Specifically, we will show that for any node $i$ and two-layer NNs $f_{p,i}$ and $f_{q,i}$ with weight matrices $W^{p,i} $ and $W^{q,i}$, the following continuum of equations admit their solutions $c$ in $\R^n\backslash \mcC_i$.
\begin{align}
    \left\{
   \begin{array}{l}
         c_i - c^{\top} \cdot \nabla_z f_{p,i}(\varphi)  = 0  \\
         \\ 
         c_i - c^{\top} \cdot \nabla_z f_{q,i}(\varphi)  = 0
   \end{array}
   \right.\ , \qquad \qquad \forall \varphi \in\R^{|\Pa(i)|}\ ,
\end{align}
in which shorthand $\varphi$ is used for $z_{\Pa(i)}$. Hence, by invoking Theorem~\ref{theorem:condition-assumption}, Assumption~\ref{assumption:change} holds. 

\paragraph{Definitions.} Define $d_i \triangleq |\Pa(i)|$. Since $\max\{\rank(W^{p,i}), \rank(W^{q,i})\}=d_i$, without loss of generality, suppose that $W^{p,i}\in \R^{w_{p,i} \times d_i}$ has rank $d_i$. The rest of the proof follows similarly for the case $\rank( W^{q,i})=d_i$. We use shorthand $\{f, W, w, \nu, \nu_0\}$ for $\{f_{p,i}, W^{p,i}, w_{p,i}, \nu^{p,i}, \nu^{p,i}_0\}$ when it is obvious from context.

\paragraph{Parameterization.}
Note that $f$ can be represented by different parameterizations, some containing more hidden nodes than others. Without loss of generality, let $w$ be the fewest number of nodes that can represent $f$. This implies that the entries of $\nu$ are non-zero. Otherwise, if $\nu_i=0$, we can remove $i$-th hidden node and still have the same $f$. Similarly, the rows of $W$ are distinct. Otherwise, if there exist rows $W_i = W_j$ for distinct $i,j \in [w]$, removing $j$-th hidden node and using $(\nu_i + \nu_j)$ in place of $\nu_i$ results the same function as $f$ with $(w-1)$ hidden nodes. Similarly, we have $W_i \neq \boldsymbol{0}$ for all $i \in [w]$. Otherwise, we have $\nu_i \sigma(W_i \varphi) + \nu_0 = (\nu_0 + \frac{\nu_i}{2})$, and by removing $i$-th hidden node and using $(\nu_0 + \frac{\nu_i}{2})$ instead of $\nu_0$, we reach the same function as $f$ with $(p-1)$ hidden nodes. Finally, we have $W_i + W_j \neq \boldsymbol{0}$. Otherwise, we have $\nu_i \sigma(W_i \varphi) + \nu_j \sigma(W_j \varphi) + \nu_0 = (\nu_i - \nu_j) \sigma(W_i \varphi) + (\nu_0+\nu_j)$, and by removing $j$-th hidden node and using $(\nu_i-\nu_j)$ instead of $\nu_i$ and $(\nu_0+\nu_j)$ instead of $\nu_0$, we reach the same function as $f$ with $(w-1)$ hidden nodes. In summary, we have
\begin{align}
    W_i \neq \boldsymbol{0} \ , \;\; \nu_i \neq 0 \ ,  \quad  \forall i \in [w] \ , \quad \mbox{and}  \quad    W_i \pm W_j \neq \boldsymbol{0} \ , \quad \forall i, j \in [w] : i \neq j \ . \label{eq:W_construction}
\end{align}
We will show that there does not exist $c \in \mcC_i$ such that $c^{\top}\nabla_z f(\varphi) = c_i$ for all $\varphi \in \R^{d_i}$. 
Assume the contrary, and assume there exists $c^* \in \mcC_i$ such that
\begin{align}
    (c^*)^{\top} \nabla_z f(\varphi) = c^*_i \ , \quad \forall \varphi \in \R^{d_i} \ .
    \label{eq:nn-contrary}
\end{align}
This is equivalent to showing that there exists non-zero $b^* \in \R^{d_i}$ such that 
\begin{align}
    (b^*)^{\top} \nabla_{\varphi} f(\phi) = c_i^{*} \ , \quad \forall \varphi \in \R^{d_i} \ .
\end{align}
Based on \eqref{eq:nn-expression}, the gradient of $f(\varphi)$ is
\begin{align}
    \nabla_{\varphi} f(\varphi) = W^{\top} \cdot \diag(\nu) \cdot \sdot(W \varphi) \ . \label{eq:u-expression}
\end{align}
where $\diag(\nu)$ is the diagonal matrix with $\nu$ as its diagonal elements, and $\sdot$ is the derivative of the sigmoid function, applied element-wise to its argument. Hence, based on \eqref{eq:nn-contrary}, the contradiction premise is equivalent to having a non-zero $b^* \in \R^{d_i}$ such that
%Then, substituting \eqref{eq:u-expression}, \eqref{eq:nn-contrary} can be written as
\begin{align}
    % b^{\top} W^{\top} \diag(\nu) \sdot(W \varphi) = c_i \ , \quad \forall \varphi \in \R^{d_i} \ .  \label{eq:nn-constant-requirement}
    [\sdot(W \varphi)]^{\top} \cdot \diag(\nu) W b^* = c_i \ , \quad \forall \varphi \in \R^{d_i} \ .  \label{eq:nn-constant-requirement}
\end{align}
We note that since $W$ is full-rank and $\nu_{i}$ is non-zero for all $i \in [w]$, $(\diag(\nu) W)$ is full-rank as well and has a trivial null space. Subsequently, $b^* \in \R^{d_i}$ is non-zero if and only if $(\diag(\nu) W b^*) \in \R^{p}$ is non-zero. We will use the following lemma to show that \eqref{eq:nn-constant-requirement} cannot be true. This establishes that the contradiction premise is not correct, and completes the proof.
\begin{lemma}\label{lm:aux-nn}
    Let $u \in \R^p$ have non-zero entries with distinct absolute values, i.e., $u_i \neq 0$ and $|u_i| \neq |u_j|$ for all $i \neq j$, and $a \in \R$ be a constant. Then, for every non-zero vector $d \in \R^p$, there exists $\alpha \in \R$ such that $[\sdot(\alpha u)]^{\top} d \neq a$.
\end{lemma}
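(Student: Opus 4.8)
The plan is to prove the stronger statement that the scalar function $g(\alpha)\triangleq [\sdot(\alpha u)]^{\top} d=\sum_{i=1}^{p} d_i\,\sdot(\alpha u_i)$ is \emph{not} identically equal to $a$; the lemma then follows at once, since if $g\not\equiv a$ some $\alpha$ must violate the equality. The entire argument rests on the asymptotics of $\sdot$ and, crucially, on the hypothesis that the $|u_i|$ are pairwise distinct. First I would record the elementary facts about the sigmoid derivative: writing $\sigma(x)=1/(1+e^{-x})$, one has $\sdot(x)=\sigma(x)(1-\sigma(x))=e^{-x}/(1+e^{-x})^2$, which is a strictly positive, even, real-analytic function with $\sdot(x)=e^{-|x|}(1+o(1))$ as $|x|\to\infty$. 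Evenness gives $\sdot(\alpha u_i)=\sdot(\alpha|u_i|)$, so each summand depends only on $|u_i|$ and decays to $0$ as $\alpha\to+\infty$.

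Assume, for contradiction, that $g(\alpha)=a$ for every $\alpha\in\R$. Letting $\alpha\to+\infty$ forces each $d_i\,\sdot(\alpha u_i)\to 0$, hence $a=0$ and $g\equiv 0$. Next I isolate the slowest-decaying contribution. Since $d\neq\boldsymbol{0}$ and the $|u_i|$ are distinct, there is a \emph{unique} index $j^{\star}$ attaining $\min\{|u_j|:d_j\neq 0\}$. Multiplying $g$ by $e^{\alpha|u_{j^{\star}}|}$ and letting $\alpha\to+\infty$, the $j^{\star}$ term tends to $d_{j^{\star}}$, because $e^{\alpha|u_{j^{\star}}|}\sdot(\alpha|u_{j^{\star}}|)=(1+e^{-\alpha|u_{j^{\star}}|})^{-2}\to 1$, whereas every other term with $d_j\neq 0$ has $|u_j|>|u_{j^{\star}}|$ and so $e^{\alpha|u_{j^{\star}}|}\sdot(\alpha u_j)\sim e^{-\alpha(|u_j|-|u_{j^{\star}}|)}\to 0$. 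Therefore $e^{\alpha|u_{j^{\star}}|}g(\alpha)\to d_{j^{\star}}\neq 0$, which forces $g(\alpha)\neq 0$ for all sufficiently large $\alpha$, contradicting $g\equiv 0$. This contradiction establishes $g\not\equiv a$ and completes the proof.

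The step I expect to be the main obstacle, and the only place where the hypothesis genuinely bites, is isolating a single dominant term. If two coordinates shared the same absolute value, evenness of $\sdot$ would make their decay rates identical and their leading contributions could cancel (take $d_i+d_j=0$ with $|u_i|=|u_j|$), so no dominant term would survive and $g$ could in fact be identically zero. The distinctness of the $|u_i|$ is exactly what rules this out: it guarantees that $j^{\star}$ is unique and that its leading coefficient $d_{j^{\star}}$ does not cancel. A secondary technical point is justifying the term-by-term passage to the limit, but this is immediate since the sum is finite and each comparison is an elementary exponential limit.
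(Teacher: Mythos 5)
Your proof is correct, but it takes a genuinely different route from the paper's. The paper argues via the Taylor expansion of $\sdot$ at the origin: on the interval of convergence it writes $g_u(\alpha)=\sum_{i\ge 0}\bigl(\gamma_i\sum_j d_j u_j^{2i}\bigr)\alpha^{2i}$, uses constancy to kill all positive-order coefficients, invokes the (cited) fact that every even Taylor coefficient $\gamma_i$ of $\sdot$ is non-zero to conclude $\sum_j d_j u_j^{2i}=0$ for all $i\in\N^{+}$, and then reads this as a Vandermonde system in the distinct non-zero values $u_j^2$, whose invertibility forces $d=\boldsymbol{0}$. You instead exploit the asymptotics: constancy forces $a=0$ by letting $\alpha\to+\infty$, and then multiplying by $e^{\alpha|u_{j^\star}|}$ isolates the unique slowest-decaying term, whose limit $d_{j^\star}\neq 0$ gives the contradiction. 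Both arguments use the hypothesis $|u_i|\neq|u_j|$ in an essential way --- the paper for the non-vanishing of the Vandermonde determinant $\prod(u_i^2-u_j^2)$, you for the uniqueness of the dominant index $j^\star$. Your version is more elementary and more robust: it needs only that $\sdot$ is even, positive, and satisfies $\sdot(x)=e^{-|x|}(1+o(1))$, and in particular it dispenses entirely with analyticity and with the external fact about the sign pattern of the Taylor coefficients of the sigmoid derivative, which is the one step of the paper's proof that is outsourced to a citation. The paper's algebraic route, on the other hand, localizes the argument to an arbitrarily small neighborhood of $\alpha=0$, which would matter if one only had the identity $g_u(\alpha)=a$ on a bounded interval; your asymptotic argument genuinely needs $\alpha$ to range over all of $\R$, which the lemma as stated permits.
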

\begin{proof}
    See Appendix~\ref{sec:proof:lm:aux-nn}
\end{proof}
Let us define $\xi \triangleq W \varphi$. We will show that there exists $\varphi^* \in \R^{d_i}$ such that $\xi = W \varphi^*$ satisfies the conditions in Lemma~\ref{lm:aux-nn}. Then, using Lemma~\ref{lm:aux-nn} with the choice of $u=W\varphi^*$, $d=\diag(\nu) W b^*$, and $a=c_i$, we deduce that there exists $\alpha \in \R$ such that 
\begin{align}\label{eq:nn-constant-requirement-contrary}
     [\sdot(\alpha W \varphi^*)]^{\top} \cdot \diag(\nu) W b^* \neq c_i \ .
\end{align}
Hence, \eqref{eq:nn-constant-requirement} is false since it is violated for $\varphi=\alpha \varphi^*$ and the proof is completed. We show the existence of such $\varphi^*$ as follows. We first construct the set of $\varphi$ values for which conditions of Lemma~\ref{lm:aux-nn} on $w$ are \emph{not} satisfied. The set in question is the union of the following cases: (i) $\xi_i = W_i \varphi = 0$ for some $i \in [w]$, (ii) $|\xi_i| = |\xi_j|$ for some distinct $i,j \in [w]$, or equivalently, $(W_i \pm W_j) \varphi = 0$. Note that $W_{i} \neq \boldsymbol{0}$ and $W_{i} \pm W_{j} \neq \boldsymbol{0}$ by \eqref{eq:W_construction}. For a non-zero $y \in \R^{d_i}$, the set $\{\varphi \in \R^{d_i} \ : \ y^{\top} \varphi =  \}$ is a $(d_i-1)$-dimensional subspace of $\R^{d_i}$. Then, there are $w$ number of $(d_i-1)$-dimensional subspaces that fall under case (i), and $w(w-1)$ number of $(d_i-1)$-dimensional subspaces that fall under case (ii). Therefore, there are $w^2$ lower-dimensional subspaces for which conditions of Lemma~\ref{lm:aux-nn} do not hold. However, $\R^{d_i}$ cannot be covered by a finite number of lower-dimensional subspaces of itself. Therefore, there exists $\varphi^* \in \R^{d_i}$ such that $\xi = W \varphi^*$ satisfies the conditions of Lemma~\ref{lm:aux-nn}, and the proof is completed. \endproof

\subsection{Proof of Lemma~\ref{lm:aux-nn}}\label{sec:proof:lm:aux-nn}
Assume the contrary and suppose that there exists a non-zero $d$ and $a$ for a given $u$. Define the function $g_u(\alpha) \triangleq d^{\top} \sdot(\alpha u)$. Note that 
\begin{align}
    \sdot(x) = \frac{1}{1+e^{-x}+e^{x}}
\end{align}
is an even analytic function, and its Taylor series expansion at $0$ has the domain of convergence $\{x \in \R : |x| < \frac{\pi}{2}\}$. Thus, for all $\alpha \in (-\frac{\pi}{2 \max_{i} |u_{i}|}, \frac{\pi}{2 \max_{i} |u_{i}|})$,
\begin{align}
    a = g_u(\alpha) 
    &= \sum_{j=1}^{p} c_j \sdot(\alpha u_j) \\
    &= \sum_{i=0}^{\infty} \gamma_i \sum_{j=1}^{p} c_j (\alpha u_j)^{2i} \\
    &= \sum_{i=0}^{\infty} (\gamma_i \sum_{j=1}^{p} c_j u_j^{2i}) \alpha^{2i} \ .
\end{align}
Note that $g_u(\alpha)$ is a constant function of $\alpha \in (-\frac{\pi}{2 \max_{i} |u_{i}|}, \frac{\pi}{2 \max_{i} |u_{i}|})$. Thus, its Taylor coefficients, i.e., $(\gamma_i \sum_{j=1}^{p} c_j u_j^{2i})$, are zero for all $i \in \N^{+}$. However, the coefficients of even powers in Taylor expansion of $\sdot$, i.e., $\gamma_i$'s, are non-zero \citep{weisstein2002sigmoid}. Therefore, we have $\sum_{i=1}^{p}(c_j u_j^{2i}) = 0$ for all $i \in \N^+$. Next, construct the following system of linear equations,
\begin{align}
    \begin{bmatrix}
    u_1^2 & u_1^4 & \dots & u_1^{2w} \\
    \vdots & \vdots & \dots & \vdots \\
    u_p^2 & u_p^4 & \dots & u_p^{2w}
    \end{bmatrix} 
    \begin{bmatrix}
        d_1 \\ \vdots \\  d_p 
    \end{bmatrix}
    = \boldsymbol{0} \ .
\end{align}
This is equivalent to
\begin{align}
    \diag([u_1^2,\dots,u_p^2]^{\top}) 
    \underset{{\rm Vandermonde}}{\underbrace{
    \begin{bmatrix}
    1 & u_1^2 & \dots & u_1^{2(p-1)} \\
    \vdots & \vdots & \dots & \vdots \\
    1 & u_p^2 & \dots & u_p^{2(p-1)}
    \end{bmatrix}}}
    \begin{bmatrix}
        d_1 \\ \vdots \\  d_p 
    \end{bmatrix}
    =  \boldsymbol{0} \ . \label{eq:vandermonde}
    % \begin{bmatrix}
    %     0 \\ \vdots \\  0  
    % \end{bmatrix} \ . 
\end{align}
Note that the Vandermonde matrix in \eqref{eq:vandermonde} has determinant $\prod_{1 \leq i \leq j < p}(u_i^2 - u_j^2)$, which is non-zero since $|u_i|\neq |u_j|$ for $i\neq j$. Multiplying an invertible matrix with a diagonal invertible matrix generates another invertible matrix, and $d$ must be zero vector which is a contradiction. Hence, there does not exist such $d$ for which $d^{\top}\sdot(\alpha u)$ is constant for every $\alpha \in \R$.

\section{Proofs of Identifiability Results}

\subsection{Proof of Proposition \ref{fact-permutation-diagonal}}\label{proof:fact-permutation-diagonal}
Denote the set of permutations of $[n]$ as
\begin{align}
    S_n \triangleq \{ \pi: \; P_\pi \in \Pi \} \ .
\end{align}
From Leibniz formula for matrix determinants, for a matrix $A \in \R^{n \times n}$ we have
\begin{align}
    \det(A) &= \sum_{\pi \in S_n} \sgn(\pi) \cdot \prod_{i = 1}^{n} A_{i, \pi_{i}}
\end{align}
where $\sgn(\pi)$ for a permutation $\pi$ of $[n]$ is $+1$ and $-1$ for even and odd permutations, respectively. $A$ is invertible if and only if $\det(A) \ne 0$, which implies
\begin{align}
    \exists \pi \in S_n \; : \;\; \sgn(\pi) \cdot \prod_{i = 1}^{n} A_{i, \pi_{i}} \ne 0 \ .
\end{align}
By the definition of $P_\pi$ in Definition \ref{def:valid-causal-order}, we have $[P_\pi A]_{i, i} = A_{i, \pi_{i}}$. Then,
\begin{align}
    \exists \pi \in S_n \; : \;\; \sgn(\pi) \cdot \prod_{i = 1}^{n} [P_\pi A]_{i, i} \ne 0
\end{align}
which holds if and only if $[P_\pi A]_{i, i} \neq 0$ for all $i \in [n]$.

\subsection{Proof of Lemma~\ref{lm:min-sparsity}} \label{appendix:lm:min-sparsity}
Permuting the rows or columns of a matrix does not change its $\ell_0$ norm. Also, based on the change notations in \eqref{eq:def-delta-X}-\eqref{eq:def-delta-Z-hat}, permuting rows of a matrix $A$ by $P \in \Pi$ before or after applying the change operator is essentially the same, i.e., $\Delta_{Z}(P \cdot A) = P \cdot \Delta_{Z}(A)$. Therefore, we have
\begin{align}
    \norm{\Delta_{\hat Z}(I_n)}_0
    &\overset{\eqref{eq:delta-z-hat-from-delta-z}}{=} \norm{\Delta_{Z}(H(U))} \\
    &= \norm{P_1(U) \cdot \Delta_{Z}(H(U))}_0 \\
    &= \norm{\Delta_{Z}(P_1(U) \cdot H(U))}_0 \\
    &\overset{\eqref{eq:def-H-bar}}{=} \norm{\Delta_{Z}(\bar H(U))}_0 \ . \label{eq:min-sparsity-step1}
\end{align}
For all $j, m \in [n]$ corresponding to which $[\Delta_{Z}(I_n)]_{j,m} = 1$, we get $[s_{Z}]_j \overset{p_Z}{\neq} [s^{m}_{Z}]_j$. By the definition of $\bar H(U)$ in \eqref{eq:def-H-bar}, $[\bar H(U)]_{j,j} \neq 0$. Then, using Assumption \ref{assumption:change}, we obtain
\begin{align}
    [\Delta_{Z}(\bar H(U))]_{j,m} = \mathds{1}\left([\bar H(U)]_{j} \cdot s_{Z} \overset{p_Z}{\neq}  [\bar H(U)]_{j} \cdot s^{m}_{Z} \right) = 1 \ . \label{eq:lm-ms-tmp1}
\end{align}
Since $\Delta_{Z}(I_n)$ and $\Delta_{Z}(\bar H(U)) \in \{0, 1\}^{n \times n}$, \eqref{eq:lm-ms-tmp1} implies that $\Delta_{Z}(I_n)  \preccurlyeq \Delta_{Z}(\bar H(U))$, which in conjunction with \eqref{eq:min-sparsity-step1}, we obtain $\norm{\Delta_{Z}(I_n)}_0 \leq \norm{\Delta_{Z}(\bar H(U))}_0 = \norm{\Delta_{\hat Z}(I_n)}_0$.

Next, we investigate the conditions under which \eqref{eq:lm:min-sparsity-p-outside} holds with equality. Since $[\Delta_{Z}(I_n)]_{j,m} = 1$ implies $[\Delta_{Z}(\bar H(U))]_{j,m} = 1$, the equality holds only if $[\Delta_{Z}(I_n)]_{j,m} = 0$ implies $[\Delta_{Z}(\bar H(U))]_{j,m} = 0$ as well. Therefore, $\norm{\Delta_{\hat Z}(I_n)}_0 = \norm{\Delta_{Z}(I_n)}_0$ is satisfied only if 
\begin{align}
    \Delta_{Z}(I_n) &= \Delta_{Z}(\bar H(U)) \\
    &= \Delta_{Z}(P_1 (U) \cdot H(U)) \label{eq:delta-z-H-bar} \\
    &\overset{\eqref{eq:delta-z-hat-from-delta-z}}{=} \Delta_{\hat Z}(P_1(U)) \\
    & = P_1(U) \cdot \Delta_{\hat Z}(I_n) \ ,
\end{align}
which concludes the proof. 

\subsection{Proof of Lemma \ref{lm:rho}}\label{proof:rho}
Since $U \in \mcU_{1}$, Lemma \ref{lm:min-sparsity} shows that $\Delta_{Z}(I_n) = P_1(U) \cdot \Delta_{\hat Z}(I_n)$, which is equal to $\Delta_{Z}(\bar H(U))$ as specified in \eqref{eq:delta-z-H-bar}. By definition, $\bar H(U)$ has non-zero diagonal entries which ensures $I_n \preccurlyeq \mathds{1}(\bar H(U))$. Next, we show that $[\mathds{1}(\bar H(U))]_{i,j} \leq \Sigma_{i,j}$ for all $i, j \in [n]$. If $\overline{\Ch}(j) \subseteq \Ch(i)$, then $\Sigma_{i,j} = 1$ and $[\mathds{1}(\bar H(U))]_{i,j}\leq \Sigma_{i,j}$. Thus, we just need to check the cases in which $\overline{\Ch}(j) \setminus \Ch(j) \neq \emptyset$.

Let $k \in \overline{\Ch}(j) \setminus \Ch(i)$ and consider environment $\mcE^{m_k}$ in which node $k$ is intervened. Since $j \in \overline{\Pa}(k)$ and $i \notin \overline{\Pa}(k)$, by Lemma~\ref{lm:parent_change} we have $[\Delta_{Z}(I_n)]_{j,m_k} = 1$ and $[\Delta_{Z}(I_n)]_{i,m_k} = 0$. Let $[\bar H(U)]_{i,j} \neq 0$. Since $[\Delta_{Z}(I_n)]_{j,m_k1}=1$ means that $[s_{Z}]_j \overset{p_Z}{\neq} [s^{m_k}]_j$, by Assumption \ref{assumption:change} we have
\begin{align}
    \left( [\bar H(U)]_i \cdot s_{Z} \right) \overset{p_Z}{\neq} \left( [\bar H(U)]_i \cdot s^{m_k} \right) \ ,
\end{align}
which means $[\Delta_{Z}(\bar H(U))]_{i,m_k} = 1$. Since we have $\Delta_{Z}(I_n) = \Delta_{Z}(\bar H(U))$, we also should have $[\Delta_{Z}(I_n)]_{i,m_k}=1$, which contradicts with $[\Delta_{Z}(I_n)]_{i,m_k}=0$. Therefore, if $\overline{\Ch}(j) \setminus \Ch(i) \neq \emptyset$, which means $\Sigma_{i,j} = 0$, we have $[\bar H(U)]_{i,j}=0$, and we conclude that $\mathds{1}(\bar H(U))\preccurlyeq I_n + \Sigma$. This result means that given a $U \in \mcU_1$, $\bar H(U)$ can be decomposed as
\begin{align}
    \bar H(U) = C \cdot (I_n + \Lambda) \ , \label{eq:lm-rho-e2}
\end{align}
where $C$ is a diagonal matrix and $\Lambda$ is a strictly upper triangular matrix that satisfies $\mathds{1}(\Lambda)\preccurlyeq \mathds{1}(\Sigma)$. Subsequently, from \eqref{eq:lm-rho-e2} we have
\begin{align}
    [\bar H(U)]^{-1} &= (I_n + \Lambda)^{-1} \cdot C^{-1} \ ,
\end{align}
and
\begin{align}
    \mathds{1}([\bar{H(U)}]^{-1}) &= \mathds{1}((I_n+\Lambda)^{-1}) \ ,
\end{align}
since $C$ is a diagonal matrix. Note that $\Lambda$ is strictly upper triangular $n \times n$ matrix, which implies $\Lambda^{n}$ is a zero matrix. Therefore, the inverse of $I_n + \Lambda$ can be expanded as
\begin{align}
    (I_n+\Lambda)^{-1} = I_n - \Lambda + \Lambda^{2} - \dots + (-1)^{n-1}\Lambda^{n-1} \ . \label{eq:inverse-expansion}
\end{align}
If $[I_n+\Lambda]^{-1}_{i,j} \neq 0$ for some $i, j \in [n]$ with $i < j$, then by \eqref{eq:inverse-expansion} we have $[\Lambda^{k}]_{i,j} \neq 0$ for some $k \in [n-1]$. Expanding matrix $\Lambda^{k}$ yields that $[\Lambda^{k}]_{i,j}$ is equal to sum of the products with $k$ terms, i.e.,
\begin{align}
   [\Lambda^{k}]_{i,j} = \sum_{i < a_1 <\dots < a_{k-1} < j} [\Lambda]_{i,a_1} [\Lambda]_{a_1,a_2} \dots [\Lambda]_{a_{k-1},j} \ .
\end{align}
Therefore, if $[\Lambda^{k}]_{i,j} \neq 0$, there exists a sequence of entries $([\Lambda]_{i,a_1},[\Lambda]_{a_1,a_2},\dots,[\Lambda]_{a_{k-1},j})$ in which all terms are non-zero. Since $\mathds{1}(\Lambda)\preccurlyeq \mathds{1}(\Sigma)$, this also implies that $\Sigma_{i,a_1}=\Sigma_{a_1,a_2}=\dots=\Sigma_{a_{k-1},j}=1$. Furthermore, by the definition of $\Sigma$, $\Sigma_{u,v}=1$ means that $\overline{\Ch}(v) \subseteq \Ch(u)$, and we have
\begin{align}
    \overline{\Ch}(j) \subseteq \Ch(a_{k-1}) \subset \dots \subset \Ch(a_1) \subset \Ch(i) \ ,
\end{align}
which also implies $\Sigma_{i,j}=1$ since $\overline{\Ch}(j) \subseteq \Ch(i)$. Therefore, we deduce that if $[(I_n+\Lambda)^{-1}]_{i,j}\neq 0$ for any $i, j \in [n]$, then we have $\Sigma_{i,j}=1$. Hence, $\mathds{1}([\bar H(U)]^{-1}) \preccurlyeq I_n + \Sigma$. Taking the transpose concludes the proof. 

\subsection{Proof of Theorem \ref{th:soft-final-results}} \label{proof:soft-final-results}
\paragraph{1 -- Equality of DAGs.} By the definition of $\mcU_{\rm S}$ in \eqref{eq:algo-mcus-def}, for every $U \in \mcU_{\rm S}$ there exists a permutation matrix $P_2(U)$ such that
\begin{align}\label{eq:K}
    K(U)
    &\overset{\eqref{eq:def-K}}{=} \Delta_{X}(U^{\top}) \cdot P_2(U) \\
    &\overset{\eqref{eq:delta-z-hat-from-delta-x}}{=} \Delta_{\hat Z}(I_n) \cdot P_2(U) \\
    &\overset{\eqref{eq:lm:min-sparsity-p-outside}}{=} [P_1(U)]^{\top} \cdot \Delta_{Z}(I_n) \cdot P_2(U) \\
    &\overset{\eqref{eq:def-D-prime}}{=} [P_1(U)]^{\top} \cdot D \cdot P_{\mcI} \cdot P_2(U) \label{eq:soft-results-K-initial}
\end{align}
is an upper triangular matrix. For a given $U \in \mcU_{S}$, let $\pi$ and $\bar{\pi}$ be the permutations of $[n]$ that describe the row and column operations of $P_1(U)$ and $P_{\mcI} \cdot P_2(U)$, respectively, such that
\begin{align}
    [P_1(U)]_{i,\pi_i} &=1 \ , \quad \text{and} \quad
    [P_{\mcI} \cdot P_2(U)]_{i,\bar{\pi}_1} = 1 \ , \quad  \forall i \in [n] \ . \label{eq:soft-res-p1-pi-rel}
\end{align}
Then, the entries of $D$ and $K(U)$ are related through
\begin{align}
    D_{i,j}
    &= [[P_1(U)]^{\top} \cdot D]_{\pi_i,j} \\
    &= [[P_1(U)]^{\top} \cdot D \cdot P_{\mcI} \cdot  P_2(U)]_{\pi_i,\bar{\pi}_j} \\
    &= [K(U)]_{\pi_i,\bar{\pi}_j} \ .
\end{align}
Using Lemma~\ref{lm:parent_change}, $D_{i,i} = [\Delta_Z(I_n)]_{i,m_i}  = 1$ for all $i \in [n]$, which indicates $[K(U)]_{\pi_i,\bar{\pi}_i} = 1$. Since $D$ is upper triangular, we must have $\pi_i \leq \bar \pi_i$ in order for $[K(U)]_{\pi, \bar \pi} = 1$ to hold for all $i \in [n]$. Since $\pi$ and $\bar \pi$ are permutations of $[n]$, this implies $\pi = \bar \pi$. Therefore, we have
\begin{align}
    P_1(U) &= P_{\mcI} \cdot P_2 (U) \ ,
\end{align}
and
\begin{align} \label{eq:soft-K}
    K(U) &\overset{\eqref{eq:soft-results-K-initial}}{=} [P_1(U)]^{\top} \cdot D \cdot P_1(U) \ .
\end{align}
Next, if $i \in \overline{\Pa}(j)$, Lemma~\ref{lm:parent_change} shows that $D_{i,j}=1$, which implies $[K(U)]_{\pi_i,\pi_j}=1$. Since $K(U)$ is upper triangular, we have $\pi_i \leq \pi_j$, which implies that $\pi$ is a valid causal order. Now, consider the randomly chosen transformation $\hat T$ from $\mcU_{\rm S}$, and recall the construction of $\mcG_{\hat Z}$ in \eqref{eq:DAG-construction}, i.e.,
\begin{align}
    \mbox{for all nodes $i\in\mcG_{\hat Z}$: } \quad \Pa(i) \triangleq \{j : [K(\hat T)]_{j,i} = 1\}\ .
\end{align}
Therefore, $\mcG_{\hat Z}$ is equal to $\mcG_{Z}$ under permutation $\pi$, which maps the nodes of $\mcG_{Z}$ to those of $\mcG_{\hat Z}$]. {We will show the last part, that is $\hat Z$ is Markov with respect to $\mcG_{\hat Z}$ after showing the mixing consistency.}

\paragraph{2 -- Mixing consistency of the estimates.} We start with specifying the auxiliary estimate $\hat Z(U,X)$ in terms of the true latent variables. Note that the estimate of the latent variables under any $U\in\mcU_1$ in \eqref{eq:z-hat-from-z} can be written as
\begin{align}
    \hat Z(U,X) 
    &\overset{\eqref{eq:def-H-bar}}{=} [P_1(U)]^{\top} \cdot [\bar H(U)]^{-\top} \cdot Z \label{eq:rho-prenext-12} \ .
\end{align}
We defined $\pi$ as the permutation related to $[P_1(U)]^{\top}$ in \eqref{eq:soft-res-p1-pi-rel}, based on which $P_\pi = [P_1(U)]^{\top}$. Also, from Lemma~\ref{lm:rho} we know that $[\bar H(U)]^{-\top}$ satisfies $I_n \preccurlyeq \mathds{1}([\bar H(U)]^{-\top}) \preccurlyeq I_n + \Sigma^{\top}$, where $\Sigma$ is a strictly upper triangular matrix defined in \eqref{eq:rho}. As such, we can expand $[\bar H(U)]^{-\top}$ as
\begin{align}\label{eq:C-and-B}
    [\bar H(U)]^{-\top} = C + B
\end{align}
where $C$ is a full-rank $n \times n$ diagonal matrix and $B$ satisfies $\mathds{1}(B) \preccurlyeq \Sigma^{\top}$. Using the definition of $\Sigma$ in \eqref{eq:rho}, this implies that
\begin{align}
    j \notin \sur(i) \implies B_{i,j} = 0 \ .
\end{align}
Using these relations, we can write \eqref{eq:rho-prenext-12} as
\begin{align}
    \hat Z(U,X) = P_\pi \cdot (C + B) \cdot Z \ . \label{eq:rho-tmp3}
\end{align}
By setting $U = \hat T$ in \eqref{eq:rho-tmp3}, we obtain
\begin{align}\label{eq:hat-z-from-z}
    \hat Z &= P_\pi \cdot (C + B) \cdot Z \ .
\end{align}
This result states that for any realization $Z^t$ of $Z$ and its associated estimate $\hat Z^t$, we have
\begin{align}%\label{eq:hat-z-from-z}
    \hat Z^t &= P_\pi \cdot (C + B) \cdot Z^t \ ,
\end{align}
and the proof of mixing consistency is concluded. We investigate the properties of this result further to prove that $\hat Z$ is Markov with respect to $\mcG_{\hat Z}$. First, we pre-multiply both sides of \eqref{eq:rho-prenext-12} by $P_1(U)$ to obtain
\begin{align}
    P_1(U) \cdot \hat Z(U,X) &= [\bar H(U)]^{-\top} \cdot Z \ . \label{eq:rho-next-1}
\end{align}
For any $U \in \mcU_1$, Lemma~\ref{lm:rho} shows that $[\bar H(U)]^{-\top}_{i, j} = 0$ if $j \not\in \sur(i)$. Therefore, for row $i \in [n]$ in \eqref{eq:rho-next-1}, we get 
\begin{align}
     [P_1(U) \cdot \hat Z(U,X)]_i &=
     [\bar H(U)]^{-\top}_{i,i} Z_i + \sum_{j \in \sur(i)} [\bar H(U)]^{-\top}_{i,j} Z_j \ . \label{eq:rho-next-2}
\end{align}
Let us restate \eqref{eq:rho-next-2} for $U = \hat T$ in terms of matrices $P_\pi$, $C$, and $B$ as
\begin{align}
     [[P_\pi]^{\top} \cdot \hat Z]_i &= [C + B]_{i} \cdot Z \ ,
\end{align}
or equivalently,
\begin{align}
     [\hat Z]_{\pi_i} &= C_{i,i} Z_i + \sum_{j \in \sur(i)} B_{i,j} Z_j \ . \label{eq:rho-next-3}
\end{align}
Note that if node $i$ is non-surrounded in $\mcG_{Z}$, that is $\sur(i) = \emptyset$, the sum of terms $B_{i,j} Z_j$ in \eqref{eq:rho-next-3} becomes zero, and we obtain
\begin{align}
    [\hat Z]_{\pi_i} = C_{i,i} Z_i \ , \quad \forall i \notin \mcS \ .
\end{align}
Denote the deterministic function that corresponds to the causal mechanism of $i$ by $f_i$ as
\begin{align}
    Z_i &= f_i(Z_{\Pa(i)},R_i) \ ,
\end{align}
in which $R_i$ is an exogenous random variable. Since $\sur(i) \subseteq \Pa(i)$, for each $i \in [n]$, we can write
\begin{align}
    [\hat Z]_{\pi_i} &= C_{i,i} Z_i + \sum_{j \in \sur(i)} B_{i,j} Z_j \\
    &= C_{i,i} f_i(Z_{\Pa(i)},R_i) + \sum_{j \in \sur(i)} B_{i,j} Z_j \\
    &= h_i(Z_{\Pa(i)},R_i) \ ,
\end{align}
for deterministic functions $\{h_i : i \in [n]\}$. But, we need to write $[\hat Z]_{\pi_i}$ in terms of $\hat Z_{\Pa(\pi_i)}$ instead of $Z_{\Pa(i)}$. It suffices to show that for any $k \in \Pa(i)$, $Z_{k}$ is a function of $\hat Z_{\Pa(\pi_i)}$ for this result. To show that, first use \eqref{eq:hat-z-from-z} to obtain
\begin{align}
    Z &= (C+B)^{-1} \cdot [P_{\pi}]^{\top} \cdot \hat Z  \label{eq:z-from-hat-z-pre} \\
    \overset{\eqref{eq:C-and-B}}&{=} [\bar H(\hat T)]^{\top} \cdot [P_{\pi}]^{\top} \cdot \hat Z \ ,
\end{align}
By Lemma \ref{lm:rho}, $I_n \preccurlyeq \mathds{1}([\bar H(\hat T)]^{\top})  \preccurlyeq I_n  + \Sigma$, and we have
\begin{align}
    Z &= (C^{-1} + E) \cdot  [P_{\pi}]^{\top} \cdot \hat Z \label{eq:z-from-hat-z}
\end{align}
where $E$ satisfies $\mathds{1}(E) \preccurlyeq \Sigma$. Therefore, as counterpart of \eqref{eq:rho-next-3}, we have
\begin{align}
    [Z]_k &= [C^{-1}]_{k,k} \hat Z_{\pi_k} + \sum_{j \in \sur(k)} E_{k,j} \hat Z_{\pi_j} \ . \label{eq:rho-next-4}
\end{align}
Since $k \in \Pa(i)$, $\pi_k \in \Pa(\pi_i)$ and $\hat Z_{\pi_k}$ is in $\hat Z_{\Pa(\pi_i)}$. Note that if $j \in \sur(k)$, $j$ is also in $\Pa(i)$. Therefore, every term in \eqref{eq:rho-next-4} belongs to $\hat Z_{\Pa(\pi_i)}$ and $Z_k$ is a function of $\hat Z_{\Pa(\pi_i)}$. Then, we have
\begin{align}
    \hat Z_{\pi_i} = g_i(\hat Z_{\Pa(\pi_i)},R_i)
\end{align}
for deterministic functions $\{g_i : i \in [n]\}$ and $\hat Z$ is Markov with respect to $\mcG_{\hat Z}$.

\begin{corollary}\label{corollary:soft-K}
For the chosen estimate $\hat T \in \mcU_{\rm S}$, we have
    \begin{align}
         K(\hat T) = [P_1(\hat T)]^{\top} \cdot D \cdot P_1(\hat T) \ . 
    \end{align}
\end{corollary}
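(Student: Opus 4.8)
The plan is to observe that this corollary is an immediate specialization, to the particular choice $U = \hat T$, of an identity already derived inside the proof of Theorem~\ref{th:soft-final-results}. Since $\hat T$ is selected from $\mcU_{\rm S}$ in Step~4 of the algorithm, it is itself an element of $\mcU_{\rm S}$, and hence every relation established there for a generic $U \in \mcU_{\rm S}$ applies verbatim with $U = \hat T$. In particular, equation \eqref{eq:soft-K} reads $K(U) = [P_1(U)]^{\top} \cdot D \cdot P_1(U)$ for all $U \in \mcU_{\rm S}$, so substituting $U = \hat T$ yields the claim directly.

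For completeness I would reconstruct the short chain rather than merely cite \eqref{eq:soft-K}. Starting from the definition $K(\hat T) = \Delta_{X}(\hat T^{\top}) \cdot P_2(\hat T)$ in \eqref{eq:def-K}, I would rewrite $\Delta_{X}(\hat T^{\top})$ as $\Delta_{\hat Z}(I_n)$ using \eqref{eq:delta-z-hat-from-delta-x}, then invoke the equality case of Lemma~\ref{lm:min-sparsity}, namely \eqref{eq:lm:min-sparsity-p-outside}, to replace $\Delta_{\hat Z}(I_n)$ with $[P_1(\hat T)]^{\top} \cdot \Delta_{Z}(I_n)$, and finally apply \eqref{eq:def-D-prime} to write $\Delta_{Z}(I_n) = D \cdot P_{\mcI}$. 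Chaining these substitutions produces $K(\hat T) = [P_1(\hat T)]^{\top} \cdot D \cdot P_{\mcI} \cdot P_2(\hat T)$, which is exactly \eqref{eq:soft-results-K-initial} at $U = \hat T$.

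The only non-trivial ingredient is the permutation identity $P_1(\hat T) = P_{\mcI} \cdot P_2(\hat T)$, which is what collapses $D \cdot P_{\mcI} \cdot P_2(\hat T)$ into $D \cdot P_1(\hat T)$ and completes the proof. This identity is precisely the output of the triangularity argument in Part~1 of the proof of Theorem~\ref{th:soft-final-results}: since $D$ is upper triangular and $K(\hat T)$ is upper triangular by construction, comparing the forced unit diagonal entries $D_{i,i} = 1$ forces the row permutation $\pi$ induced by $P_1(\hat T)$ and the column permutation $\bar\pi$ induced by $P_{\mcI} \cdot P_2(\hat T)$ to satisfy $\pi_i \le \bar\pi_i$ for every $i$, whence $\pi = \bar\pi$ and $P_1(\hat T) = P_{\mcI} \cdot P_2(\hat T)$. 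I therefore expect no real obstacle; the corollary is essentially a bookkeeping restatement of \eqref{eq:soft-K}, with all the substance residing in the already-proved theorem.
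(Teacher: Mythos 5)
Your proposal is correct and matches the paper exactly: the paper's own proof of this corollary is a one-line citation of \eqref{eq:soft-K} from the proof of Theorem~\ref{th:soft-final-results}, and the chain you reconstruct (definition of $K$, then \eqref{eq:delta-z-hat-from-delta-x}, the equality case \eqref{eq:lm:min-sparsity-p-outside}, the factorization \eqref{eq:def-D-prime}, and finally the triangularity argument giving $P_1(\hat T) = P_{\mcI}\cdot P_2(\hat T)$) is precisely the derivation the paper carries out in Part~1 of that theorem's proof.
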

\begin{proof}
    This is a direct result of Theorem~\ref{th:soft-final-results}, as shown in \eqref{eq:soft-K} in the proof.
\end{proof}

\subsection{Proof of Proposition~\ref{prop:same-dag}}\label{proof:same-dag}
We show this by noting that for any $U\in\mcU_{2}$ we have
\begin{align}
    \Delta_{\hat Z(U,X)}(I_n)
    \overset{\eqref{eq:delta-z-hat-from-delta-x}}{=} \Delta_{X}(U^{\top})
    \overset{\eqref{eq:algo-mcu2-def}}{=} \Delta_{X}(\hat T^{\top})
    \overset{(\ref{eq:delta-z-hat-from-delta-x}, \ref{eq:encoder2})}{=} \Delta_{\hat Z}(I_n) \ .
\end{align}
Subsequently,
\begin{align}
    P_1(U) \cdot \Delta_{\hat Z(U,X)}(I_n) \overset{\eqref{eq:lm:min-sparsity-p-outside}}{=} \Delta_{Z}(I_n)  \overset{(\ref{eq:lm:min-sparsity-p-outside}, \ref{eq:encoder2})}{=} P_1(\hat T) \cdot \Delta_{\hat Z}(I_n) \ .
\end{align}
Note that $\Delta_{Z}(I_n) = D \cdot P_{\mcI}$ is invertible since $D$ is an upper triangular matrix with non-zero diagonal entries. Then, $\Delta_{\hat Z(U,X)}(I_n) = \Delta_{\hat Z)}(I_n)$ is also invertible, which implies that $P_1(U) = P_1(\hat T)$. Then, by Corollary~\ref{corollary:soft-K} we have $K(U) = K(\hat T)$, and the DAGs constructed from $K(U)$ and $K(\hat T)$ are the same.

\subsection{Proof of Theorem \ref{th:hard-final-results}} \label{proof:hard-final-results}
For proving the scaling consistency, we only need to consider surrounded nodes since by Theorem~\ref{th:soft-final-results}, we can already recover the non-surrounded nodes up to scaling under the valid causal order $\pi$. Consider a surrounded node $\pi_i \in \mcS_{\pi}$ and node $\pi_j \in \sur(\pi_i)$ in graph $\mcG_{\hat Z}$. These nodes correspond to nodes $i$ and $j$ in the true latent DAG $\mcG_{Z}$, respectively, with $j \in \sur(i)$. From \eqref{eq:rho-next-3} in the proof of Theorem~\ref{th:soft-final-results} in Appendix~\ref{proof:soft-final-results}, we have
\begin{align}
    [\hat Z]_{\pi_i} &= C_{i,i} Z_i + \sum_{k \in \sur(i)} B_{i,k} Z_k \ , \label{eq:hard-results-tmp1} \\
    [\hat Z]_{\pi_j} &= C_{j,j} Z_j + \sum_{k \in \sur(j)} B_{j,k} Z_k \ . \label{eq:hard-results-tmp2}
\end{align}
In \eqref{eq:hard-results-tmp1}, if the entry $B_{i,j}$ is non-zero, then $[\hat Z]_{\pi_i}$ includes $Z_j$. From \eqref{eq:hard-results-tmp2}, $[\hat Z]_{\pi_j}$ always includes $Z_j$, thus $[\hat Z]_{\pi_i}$ and $[\hat Z]_{\pi_j}$ cannot be independent in any environment. Therefore, by enforcing $[\hat Z^{m}]_{\pi_i} \ci [\hat Z^{m}]_{\pi_j}$ to hold in at least one environment $m \in [n]$, we force $B_{i,j}$ to be zero. We know that such an environment always exists since by choosing $m = m_i$ and using Proposition~\ref{fact:hard-indep}, we get
\begin{align}
    Z_i^{m_i} \ci Z_j^{m_i} \ , \quad \forall i \in \mcS,\ j \in \sur(i) \ .
\end{align}
This implies that $[\hat Z^{m}]_{\pi_i} \ci [\hat Z^{m}]_{\pi_j}$ if $B$ is a zero matrix. Then, exhausting all possible $i,j \in [n]$ values, we force all entries of $B$ to be zero. The desired follows as, for any realization $Z^t$ of $Z$ and its associated estimate $\hat Z^t$, we have
\begin{align}
    \hat Z^t &\overset{\eqref{eq:soft-final-results}}{=} P_\pi \cdot (C + B) \cdot Z^t = P_\pi \cdot C \cdot Z^t \ .
\end{align}

\bibliography{references}

\begin{thebibliography}{47}
\providecommand{\natexlab}[1]{#1}
\providecommand{\url}[1]{\texttt{#1}}
\expandafter\ifx\csname urlstyle\endcsname\relax
  \providecommand{\doi}[1]{doi: #1}\else
  \providecommand{\doi}{doi: \begingroup \urlstyle{rm}\Url}\fi

\bibitem[Ahuja et~al.(2021)Ahuja, Hartford, and Bengio]{ahuja2021properties}
Kartik Ahuja, Jason Hartford, and Yoshua Bengio.
\newblock Properties from mechanisms: an equivariance perspective on
  identifiable representation learning.
\newblock \emph{arXiv:2110.15796}, 2021.

\bibitem[Ahuja et~al.(2022{\natexlab{a}})Ahuja, Hartford, and
  Bengio]{ahuja2022weakly}
Kartik Ahuja, Jason Hartford, and Yoshua Bengio.
\newblock Weakly supervised representation learning with sparse perturbations.
\newblock \emph{arXiv:2206.01101}, 2022{\natexlab{a}}.

\bibitem[Ahuja et~al.(2022{\natexlab{b}})Ahuja, Mahajan, Syrgkanis, and
  Mitliagkas]{ahuja2022towards}
Kartik Ahuja, Divyat Mahajan, Vasilis Syrgkanis, and Ioannis Mitliagkas.
\newblock Towards efficient representation identification in supervised
  learning.
\newblock \emph{arXiv:2204.04606}, 2022{\natexlab{b}}.

\bibitem[Ahuja et~al.(2022{\natexlab{c}})Ahuja, Wang, Mahajan, and
  Bengio]{ahuja2022interventional}
Kartik Ahuja, Yixin Wang, Divyat Mahajan, and Yoshua Bengio.
\newblock Interventional causal representation learning.
\newblock \emph{arXiv:2209.11924}, 2022{\natexlab{c}}.

\bibitem[Boothby(2003)]{boothby2003introduction}
William~M. Boothby.
\newblock \emph{An introduction to differentiable manifolds and Riemannian
  geometry, Revised}, volume 120.
\newblock Gulf Professional Publishing, 2003.

\bibitem[Brehmer et~al.(2022)Brehmer, De~Haan, Lippe, and
  Cohen]{brehmer2022weakly}
Johann Brehmer, Pim De~Haan, Phillip Lippe, and Taco Cohen.
\newblock Weakly supervised causal representation learning.
\newblock \emph{arXiv:2203.16437}, 2022.

\bibitem[Chen et~al.(2018)Chen, Li, Grosse, and Duvenaud]{chen2018isolating}
Ricky~TQ Chen, Xuechen Li, Roger~B Grosse, and David~K Duvenaud.
\newblock Isolating sources of disentanglement in variational autoencoders.
\newblock In \emph{Proc. Advances in Neural Information Processing Systems},
  Montreal, Canada, December 2018.

\bibitem[Chung and Ye(2022)]{chung2022score}
Hyungjin Chung and Jong~Chul Ye.
\newblock Score-based diffusion models for accelerated {MRI}.
\newblock \emph{Medical Image Analysis}, 80, August 2022.

\bibitem[Cybenko(1989)]{cybenko1989approximation}
George Cybenko.
\newblock Approximation by superpositions of a sigmoidal function.
\newblock \emph{Mathematics of control, signals and systems}, 2\penalty0
  (4):\penalty0 303--314, 1989.

\bibitem[H{\"a}lv{\"a} and Hyvarinen(2020)]{halva2020hidden}
Hermanni H{\"a}lv{\"a} and Aapo Hyvarinen.
\newblock Hidden markov nonlinear ica: Unsupervised learning from nonstationary
  time series.
\newblock In \emph{Proc. Conference on Uncertainty in Artificial Intelligence},
  virtual, August 2020.

\bibitem[Higgins et~al.(2017)Higgins, Matthey, Pal, Burgess, Glorot, Botvinick,
  Mohamed, and Lerchner]{higginslearning}
Irina Higgins, Loic Matthey, Arka Pal, Christopher Burgess, Xavier Glorot,
  Matthew Botvinick, Shakir Mohamed, and Alexander Lerchner.
\newblock beta-{VAE}: Learning basic visual concepts with a constrained
  variational framework.
\newblock In \emph{Proc. International Conference on Learning Representations},
  Toulon, France, April 2017.

\bibitem[Ho et~al.(2020)Ho, Jain, and Abbeel]{ho2020denoising}
Jonathan Ho, Ajay Jain, and Pieter Abbeel.
\newblock Denoising diffusion probabilistic models.
\newblock In \emph{Proc. Advances in Neural Information Processing Systems},
  virtual, December 2020.

\bibitem[Hyvarinen and Morioka(2017)]{hyvarinen2017nonlinear}
Aapo Hyvarinen and Hiroshi Morioka.
\newblock Nonlinear {ICA} of temporally dependent stationary sources.
\newblock In \emph{Proc. Artificial Intelligence and Statistics}, pages
  460--469, Ft. Lauderdale, FL, April 2017.

\bibitem[Hyv{\"a}rinen and Pajunen(1999)]{hyvarinen1999nonlinear}
Aapo Hyv{\"a}rinen and Petteri Pajunen.
\newblock Nonlinear independent component analysis: Existence and uniqueness
  results.
\newblock \emph{Neural Networks}, 12\penalty0 (3):\penalty0 429--439, April
  1999.

\bibitem[Hyvarinen et~al.(2019)Hyvarinen, Sasaki, and
  Turner]{hyvarinen2019nonlinear}
Aapo Hyvarinen, Hiroaki Sasaki, and Richard Turner.
\newblock Nonlinear ica using auxiliary variables and generalized contrastive
  learning.
\newblock In \emph{Proc. International Conference on Artificial Intelligence
  and Statistics}, Naha, Japan, April 2019.

\bibitem[Jaber et~al.(2020)Jaber, Kocaoglu, Shanmugam, and
  Bareinboim]{softunknown20}
Amin Jaber, Murat Kocaoglu, Karthikeyan Shanmugam, and Elias Bareinboim.
\newblock Causal discovery from soft interventions with unknown targets:
  Characterization and learning.
\newblock In \emph{Proc. Advances in Neural Information Processing Systems},
  pages 9551--9561, virtual, December 2020.

\bibitem[Khemakhem et~al.(2020{\natexlab{a}})Khemakhem, Kingma, Monti, and
  Hyvarinen]{pmlr-v108-khemakhem20a}
Ilyes Khemakhem, Diederik Kingma, Ricardo Monti, and Aapo Hyvarinen.
\newblock Variational autoencoders and nonlinear {ICA}: A unifying framework.
\newblock In \emph{Proc. International Conference on Artificial Intelligence
  and Statistics}, volume 108, pages 2207--2217, virtual, August
  2020{\natexlab{a}}.

\bibitem[Khemakhem et~al.(2020{\natexlab{b}})Khemakhem, Monti, Kingma, and
  Hyvarinen]{khemakhem2020ice}
Ilyes Khemakhem, Ricardo Monti, Diederik Kingma, and Aapo Hyvarinen.
\newblock Ice-beem: Identifiable conditional energy-based deep models based on
  nonlinear ica.
\newblock In \emph{Proc. Advances in Neural Information Processing Systems},
  virtual, December 2020{\natexlab{b}}.

\bibitem[Kim and Mnih(2018)]{kim2018disentangling}
Hyunjik Kim and Andriy Mnih.
\newblock Disentangling by factorising.
\newblock In \emph{Proc. International Conference on Machine Learning}, pages
  2649--2658, Stockholm, Sweden, July 2018.

\bibitem[Kivva et~al.(2022)Kivva, Rajendran, Ravikumar, and
  Aragam]{kivva2022identifiability}
Bohdan Kivva, Goutham Rajendran, Pradeep Ravikumar, and Bryon Aragam.
\newblock Identifiability of deep generative models under mixture priors
  without auxiliary information.
\newblock \emph{arXiv:2206.10044}, 2022.

\bibitem[Kumar and Poole(2020)]{kumar2020implicit}
Abhishek Kumar and Ben Poole.
\newblock On implicit regularization in $\beta$-vaes.
\newblock In \emph{International Conference on Machine Learning}, pages
  5480--5490. PMLR, 2020.

\bibitem[Kumar et~al.(2018)Kumar, Sattigeri, and Balakrishnan]{kumar2018}
Abhishek Kumar, Prasanna Sattigeri, and Avinash Balakrishnan.
\newblock Variational inference of disentangled latent concepts from unlabeled
  observations.
\newblock In \emph{Proc. International Conference on Learning Representations},
  Vancouver, BC, Canada, May 2018.

\bibitem[Lachapelle et~al.(2022)Lachapelle, Rodriguez, Sharma, Everett,
  Le~Priol, Lacoste, and Lacoste-Julien]{lachapelle2022disentanglement}
S{\'e}bastien Lachapelle, Pau Rodriguez, Yash Sharma, Katie~E Everett, R{\'e}mi
  Le~Priol, Alexandre Lacoste, and Simon Lacoste-Julien.
\newblock Disentanglement via mechanism sparsity regularization: A new
  principle for nonlinear {ICA}.
\newblock In \emph{Proc. Conference on Causal Learning and Reasoning}, pages
  428--484, Eureka, CA, April 2022.

\bibitem[Lee et~al.(2021)Lee, Zhao, Sawhney, Girdhar, and
  Kroemer]{lee2021causal}
Tabitha~E Lee, Jialiang~Alan Zhao, Amrita~S Sawhney, Siddharth Girdhar, and
  Oliver Kroemer.
\newblock Causal reasoning in simulation for structure and transfer learning of
  robot manipulation policies.
\newblock In \emph{Proc. IEEE International Conference on Robotics and
  Automation}, pages 4776--4782, Xi'an, China, May 2021.

\bibitem[Lippe et~al.(2022)Lippe, Magliacane, L{\"o}we, Asano, Cohen, and
  Gavves]{lippe2022icitris}
Phillip Lippe, Sara Magliacane, Sindy L{\"o}we, Yuki~M Asano, Taco Cohen, and
  Efstratios Gavves.
\newblock {iCITRIS}: Causal representation learning for instantaneous temporal
  effects.
\newblock \emph{arXiv:2206.06169}, 2022.

\bibitem[Liu et~al.(2022)Liu, Zhang, Gong, Gong, Huang, Hengel, Zhang, and
  Shi]{liu2022identifying}
Yuhang Liu, Zhen Zhang, Dong Gong, Mingming Gong, Biwei Huang, Anton van~den
  Hengel, Kun Zhang, and Javen~Qinfeng Shi.
\newblock Identifying weight-variant latent causal models.
\newblock \emph{arXiv:2208.14153}, 2022.

\bibitem[Locatello et~al.(2019)Locatello, Bauer, Lucic, Raetsch, Gelly,
  Sch{\"o}lkopf, and Bachem]{locatello2019challenging}
Francesco Locatello, Stefan Bauer, Mario Lucic, Gunnar Raetsch, Sylvain Gelly,
  Bernhard Sch{\"o}lkopf, and Olivier Bachem.
\newblock Challenging common assumptions in the unsupervised learning of
  disentangled representations.
\newblock In \emph{Proc. International Conference on Machine Learning}, Long
  Beach, CA, June 2019.

\bibitem[Locatello et~al.(2020)Locatello, Poole, R{\"a}tsch, Sch{\"o}lkopf,
  Bachem, and Tschannen]{locatello2020weakly}
Francesco Locatello, Ben Poole, Gunnar R{\"a}tsch, Bernhard Sch{\"o}lkopf,
  Olivier Bachem, and Michael Tschannen.
\newblock Weakly-supervised disentanglement without compromises.
\newblock In \emph{Proc. International Conference on Machine Learning},
  virtual, April 2020.

\bibitem[Lotfollahi et~al.(2021)Lotfollahi, Susmelj, De~Donno, Ji, Ibarra,
  Wolf, Yakubova, Theis, and Lopez-Paz]{lotfollahi2021compositional}
Mohammad Lotfollahi, Anna~Klimovskaia Susmelj, Carlo De~Donno, Yuge Ji,
  Ignacio~L Ibarra, F~Alexander Wolf, Nafissa Yakubova, Fabian~J Theis, and
  David Lopez-Paz.
\newblock Compositional perturbation autoencoder for single-cell response
  modeling.
\newblock \emph{BioRxiv}, 2021.

\bibitem[Montagna et~al.(2022)Montagna, Noceti, Rosasco, Zhang, and
  Locatello]{montagna2022scalable}
Francesco Montagna, Nicoletta Noceti, Lorenzo Rosasco, Kun Zhang, and Francesco
  Locatello.
\newblock Scalable causal discovery with score matching.
\newblock In \emph{Proc. Advances in Neural Information Processing Systems
  Workshop on Score-Based Methods}, New Orleans, LA, December 2022.

\bibitem[Pearl(2009)]{pearl2009causality}
Judea Pearl.
\newblock \emph{Causality}.
\newblock Cambridge University Press, Cambridge, UK, 2009.

\bibitem[Perry et~al.(2022)Perry, von K{\"u}gelgen, and
  Sch{\"o}lkopf]{perry2022causal}
Ronan Perry, Julius von K{\"u}gelgen, and Bernhard Sch{\"o}lkopf.
\newblock Causal discovery in heterogeneous environments under the sparse
  mechanism shift hypothesis.
\newblock In \emph{Proc. Advances in Neural Information Processing Systems},
  New Orleans, LA, December 2022.

\bibitem[Peters et~al.(2017)Peters, Janzing, and
  Sch{\"o}lkopf]{peters2017elements}
Jonas Peters, Dominik Janzing, and Bernhard Sch{\"o}lkopf.
\newblock \emph{Elements of causal inference: foundations and learning
  algorithms}.
\newblock The MIT Press, Cambridge, MA, 2017.

\bibitem[Rolland et~al.(2022)Rolland, Cevher, Kleindessner, Russell, Janzing,
  Sch{\"o}lkopf, and Locatello]{rolland2022score}
Paul Rolland, Volkan Cevher, Matth{\"a}us Kleindessner, Chris Russell, Dominik
  Janzing, Bernhard Sch{\"o}lkopf, and Francesco Locatello.
\newblock Score matching enables causal discovery of nonlinear additive noise
  models.
\newblock In \emph{Proc. International Conference on Machine Learning},
  Baltimore, MD, July 2022.

\bibitem[Sch\"{o}lkopf(2022)]{scholkopf2022causality}
Bernhard Sch\"{o}lkopf.
\newblock \emph{Causality for Machine Learning}, page 765–804.
\newblock Association for Computing Machinery, New York, NY, 2022.

\bibitem[Sch{\"o}lkopf and von K{\"u}gelgen(2022)]{scholkopf2022statistical}
Bernhard Sch{\"o}lkopf and Julius von K{\"u}gelgen.
\newblock From statistical to causal learning.
\newblock \emph{arXiv:2204.00607}, 2022.

\bibitem[Sch{\"o}lkopf et~al.(2021)Sch{\"o}lkopf, Locatello, Bauer, Ke,
  Kalchbrenner, Goyal, and Bengio]{scholkopf2021toward}
Bernhard Sch{\"o}lkopf, Francesco Locatello, Stefan Bauer, Nan~Rosemary Ke, Nal
  Kalchbrenner, Anirudh Goyal, and Yoshua Bengio.
\newblock Toward causal representation learning.
\newblock \emph{Proceedings of the IEEE}, 109\penalty0 (5):\penalty0 612--634,
  May 2021.

\bibitem[Seigal et~al.(2022)Seigal, Squires, and Uhler]{seigal2022linear}
Anna Seigal, Chandler Squires, and Caroline Uhler.
\newblock Linear causal disentanglement via interventions.
\newblock \emph{arXiv:2211.16467}, 2022.

\bibitem[Shu et~al.(2019)Shu, Chen, Kumar, Ermon, and Poole]{shu2019weakly}
Rui Shu, Yining Chen, Abhishek Kumar, Stefano Ermon, and Ben Poole.
\newblock Weakly supervised disentanglement with guarantees.
\newblock In \emph{International Conference on Learning Representations}, New
  Orleans, LA, May 2019.

\bibitem[Song et~al.(2021)Song, Sohl{-}Dickstein, Kingma, Kumar, Ermon, and
  Poole]{song2021score}
Yang Song, Jascha Sohl{-}Dickstein, Diederik~P. Kingma, Abhishek Kumar, Stefano
  Ermon, and Ben Poole.
\newblock Score-based generative modeling through stochastic differential
  equations.
\newblock In \emph{Proc. International Conference on Learning Representations},
  virtual, May 2021.

\bibitem[Vahdat et~al.(2021)Vahdat, Kreis, and Kautz]{vahdat2021score}
Arash Vahdat, Karsten Kreis, and Jan Kautz.
\newblock Score-based generative modeling in latent space.
\newblock In \emph{Proc. Advances in Neural Information Processing Systems},
  virtual, December 2021.

\bibitem[von K\"{u}gelgen et~al.(2021)von K\"{u}gelgen, Sharma, Gresele,
  Brendel, Sch\"{o}lkopf, Besserve, and Locatello]{kulelgen2021}
Julius von K\"{u}gelgen, Yash Sharma, Luigi Gresele, Wieland Brendel, Bernhard
  Sch\"{o}lkopf, Michel Besserve, and Francesco Locatello.
\newblock Self-supervised learning with data augmentations provably isolates
  content from style.
\newblock In \emph{Proc. Advances in Neural Information Processing Systems},
  virtual, December 2021.

\bibitem[Weichwald et~al.(2022)Weichwald, Mogensen, Lee, Baumann, Kroemer,
  Guyon, Trimpe, Peters, and Pfister]{weichwald2022learning}
Sebastian Weichwald, S{\o}ren~Wengel Mogensen, Tabitha~Edith Lee, Dominik
  Baumann, Oliver Kroemer, Isabelle Guyon, Sebastian Trimpe, Jonas Peters, and
  Niklas Pfister.
\newblock Learning by doing: Controlling a dynamical system using causality,
  control, and reinforcement learning.
\newblock \emph{arXiv:2202.06052}, 2022.

\bibitem[Weisstein(2002)]{weisstein2002sigmoid}
Eric~W Weisstein.
\newblock Sigmoid function.
\newblock \emph{https://mathworld. wolfram. com/}, 2002.

\bibitem[Yang et~al.(2018)Yang, Katcoff, and Uhler]{yang2018characterizing}
Karren Yang, Abigail Katcoff, and Caroline Uhler.
\newblock Characterizing and learning equivalence classes of causal {DAG}s
  under interventions.
\newblock In \emph{Proc. International Conference on Machine Learning},
  Stockholm, Sweden, July 2018.

\bibitem[Yang et~al.(2021)Yang, Liu, Chen, Shen, Hao, and Wang]{Yang_2021_CVPR}
Mengyue Yang, Furui Liu, Zhitang Chen, Xinwei Shen, Jianye Hao, and Jun Wang.
\newblock {CausalVAE}: Disentangled representation learning via neural
  structural causal models.
\newblock In \emph{Proc. IEEE/CVF Conference on Computer Vision and Pattern
  Recognition}, virtual, June 2021.

\bibitem[Yao et~al.(2022)Yao, Sun, Ho, Sun, and Zhang]{yao2022}
Weiran Yao, Yuewen Sun, Alex Ho, Changyin Sun, and Kun Zhang.
\newblock Learning temporally causal latent processes from general temporal
  data.
\newblock In \emph{Proc. International Conference on Learning Representations},
  virtual, April 2022.

\end{thebibliography}
\bibliographystyle{plainnat}

\end{document}